\documentclass{article} 
\usepackage{iclr2023_conference,times}


\usepackage{amsmath,amsfonts,bm}









\def\eqref#1{equation~\ref{#1}}









\def\1{\bm{1}}










\DeclareMathAlphabet{\mathsfit}{\encodingdefault}{\sfdefault}{m}{sl}
\SetMathAlphabet{\mathsfit}{bold}{\encodingdefault}{\sfdefault}{bx}{n}













\usepackage{hyperref}
\usepackage{url}

\usepackage{graphicx}
\usepackage{enumitem}
\usepackage{float}
\usepackage[toc,page,header]{appendix}
\usepackage{minitoc}
\usepackage{float}

\usepackage[utf8]{inputenc} 
\usepackage[T1]{fontenc}    
\usepackage{hyperref}       
\usepackage{url}            
\usepackage{booktabs}       
\usepackage{amsfonts}       
\usepackage{nicefrac}       
\usepackage{microtype}      
\usepackage{xcolor}         

\usepackage{subcaption}
\usepackage{amsmath}
\usepackage{amsthm}
\usepackage{amssymb}
\usepackage{mathabx}
\usepackage{algorithm}
\usepackage{algpseudocode}
\usepackage{nicefrac}

\newtheorem{theorem}{Theorem}[section]
\newtheorem{corollary}[theorem]{Corollary}
\newtheorem{lemma}[theorem]{Lemma}
\newtheorem{proposition}[theorem]{Proposition}
\newtheorem{algo}[theorem]{Algorithm}
\theoremstyle{definition}
\newtheorem{definition}[theorem]{Definition}
\newtheorem{assumption}[theorem]{Assumption}

\newtheorem{remark}[theorem]{Remark}
\captionsetup{belowskip=-5pt}

\newcommand{\reals}{\mathbb{R}}
\newcommand{\norm}[1][\cdot]{\| #1 \|}
\newcommand{\interior}{\operatorname{int}}
\newcommand{\dom}{\operatorname{dom}}

\newcommand{\VI}{\operatorname{VI}}
\newcommand{\inner}[2]{\langle #1, #2 \rangle}

\newcommand{\sol}{\operatorname{sol}}

\title{A Unified Approach to Reinforcement Learning, Quantal Response Equilibria, and Two-Player Zero-Sum Games}

\author{Samuel Sokota\thanks{Equal contribution}\\
Carnegie Mellon University\\
\texttt{ssokota@andrew.cmu.edu}
\And 
Ryan D'Orazio$^{\ast}$\\
Mila, Universit\'e de Montr\'eal\\
\texttt{ryan.dorazio@mila.quebec}
\And
J. Zico Kolter\\
Carnegie Mellon University\\
\texttt{zkolter@cs.cmu.edu}
\And
Nicolas Loizou\\
Johns Hopkins University\\
\texttt{nloizou@jhu.edu}
\And
Marc Lanctot\\
DeepMind\\
\texttt{lanctot@deepmind.com}
\And 
Ioannis Mitliagkas\\
Mila, Universit\'e de Montr\'eal\\
\texttt{ioannis@mila.quebec}
\And
Noam Brown\\
Meta AI\\
\texttt{noambrown@meta.com}
\And
Christian Kroer\\
Columbia University\\
\texttt{ck2945@columbia.edu}
}

\iclrfinalcopy 
\begin{document}
\doparttoc 
\faketableofcontents 

\maketitle

\begin{abstract}
This work studies an algorithm, which we call magnetic mirror descent, that is inspired by mirror descent and the non-Euclidean proximal gradient algorithm.
Our contribution is demonstrating the virtues of magnetic mirror descent as both an equilibrium solver and as an approach to reinforcement learning in two-player zero-sum games.
These virtues include: 
1)~Being the first quantal response equilibria solver to achieve linear convergence for extensive-form games with first order feedback;
2)~Being the first standard reinforcement learning algorithm to achieve empirically competitive results with CFR in tabular settings;
3)~Achieving favorable performance in 3x3 Dark Hex and Phantom Tic-Tac-Toe as a self-play deep reinforcement learning algorithm.
\end{abstract}

\section{Introduction}

This work studies an algorithm that we call magnetic mirror descent (MMD) in the context of two-player zero-sum games.
MMD is an extension of mirror descent \citep{BECK2003167,nemirovski1983problem} with proximal regularization and a special case of a non-Euclidean proximal gradient method~\citep{tseng2010approximation,beck2017}---both of which have been studied extensively in convex optimization.
To facilitate our analysis of MMD, we extend the non-Euclidean proximal gradient method from convex optimization to 2p0s games and variational inequality problems~\citep{facchinei2003finite} more generally. 
We then prove a new linear convergence result for the non-Euclidean proximal gradient method in variational inequality problems with composite structure.
As a consequence of our general analysis, we attain formal guarantees for MMD by showing that solving for quantal response equilibria \citep{qre_nf} (i.e., entropy regularized Nash equilibria) in extensive-form games (EFGs) can be modeled as variational inequality problems via the sequence form~\citep{romanovskii1962reduction,von1996efficient,koller1996efficient}.
These guarantees provide the first linear convergence results to quantal response equilibria (QREs) in EFGs for a first order method.

\looseness=-1
Our empirical contribution investigates MMD as a last iterate (regularized) equilibrium approximation algorithm across a variety of 2p0s benchmarks.
We begin by confirming our theory---showing that MMD converges exponentially fast to QREs in both NFGs and EFGs.
We also find that, empirically, MMD converges to agent QREs (AQREs)~\citep{aqre}---an alternative formulation of QREs for extensive-form games---when applied with action-value feedback.
These results lead us to examine MMD as an RL algorithm for approximating Nash equilibria.
On this front, we show competitive performance with counterfactual regret minimization (CFR)~\citep{cfr}.
This is the first instance of a standard RL algorithm\footnote{We use ``standard RL algorithm'' to mean algorithms that would look ordinary to single-agent RL practitioners---excluding, e.g., algorithms that converge in the average iterate or operate over sequence form.} yielding empirically competitive performance with CFR in tabular benchmarks when applied in self play.
Motivated by our tabular results, we examine MMD as a multi-agent deep RL algorithm for 3x3 Abrupt Dark Hex and Phantom Tic-Tac-Toe---encouragingly, we find that MMD is able to successfully minimize an approximation of exploitability.
In addition to those listed above, we also provide numerous other experiments in the appendix.
In aggregate, we believe that our results suggest that MMD is a unifying approach to reinforcement learning, quantal response equilibria, and two-player zero-sum games.

\section{Background} \label{sec:background}

Sections \ref{sec:non-tech_background} and \ref{sec:non_tech_treament} provide a casual treatment of our problem settings and solution concepts and a summary of our algorithm and some of our theoretical results.
Sections \ref{sec:tech_background} through \ref{sec:tech_treament} give a more formal and detailed treatment of the same material---these sections are self-contained and safe-to-skip for readers less interested in our theoretical results.

\subsection{Problem Settings and Solution Concepts} \label{sec:non-tech_background}
\looseness=-1
This work is concerned with 2p0s games---i.e., settings with two players in which the reward for one player is the negation of the reward for the other player.\footnote{Note that 2p0s games generalize single-agent settings, such as Markov decision processes~\citep{puterman2014markov} and partially observable Markov decision processes~\citep{pomdp}.}
Two-player zero-sum games are often formalized as NFGs, partially observable stochastic games~\citep{posg} or a perfect-recall EFGs~\citep{efg}.
An important idea is that it is possible to convert any EFG into an equivalent NFG.
The actions of the equivalent NFG correspond to the deterministic policies of the EFG.
The payoffs for a joint action are dictated by the expected returns of the corresponding joint policy in the EFG.

\looseness=-1
We introduce the solution concepts studied in this work as generalizations of single-agent solution concepts.
In single-agent settings, we call these concepts optimal policies and soft-optimal policies.
We say a policy is optimal if there does not exist another policy achieving a greater expected return \citep{sutton2018reinforcement}.
In problems with a single decision-point, we say a policy is $\alpha$-soft optimal in the normal sense if it maximizes a weighted combination of its expected action value and its entropy: 
\begin{align} \label{eq:softopt}
\pi = \text{arg max}_{\pi' \in \Delta(\mathbb{A})} \mathbb{E}_{A \sim \pi'} q(A) + \alpha \mathcal{H}(\pi'),
\end{align}
where $\pi$ is a policy, $\Delta(\mathbb{A})$ is the action simplex, $q$ is the action-value function, $\alpha$ is the regularization temperature, and $\mathcal{H}$ is Shannon entropy.
More generally, we say a policy is $\alpha$-soft optimal in the behavioral sense if it satisfies equation (\ref{eq:softopt}) at every decision point.

\looseness=-1
In 2p0s settings, we refer to the solution concepts used in this work as Nash equilibria and QREs.
We say a joint policy is a Nash equilibrium if each player's policy is optimal, conditioned on the other player not changing its policy.
In games with a single-decision point, we say a joint policy is a QRE\footnote{Specifically, it is a logit QRE; We omit ``logit'' as a prefix for brevity.}~\citep{qre_nf} if each player's policy is soft optimal in the normal sense, conditioned on the other player not changing its policy.
More generally, we say a joint policy is an agent QRE (AQRE) \citep{aqre} if each player's policy is soft optimal in the behavioral sense, subject to the opponent's policy being fixed.
Note that AQREs of EFGs do not generally correspond with the QREs of their normal-form equivalents.

Outside of (A)QREs, our results also apply to other regularized solution concepts, such as those having KL regularization toward a non-uniform policy.

\subsection{Notation} \label{sec:tech_background}

We use  superscript to denote a particular coordinate of $x = (x^1,\cdots, x^n ) \in  \reals^n$ and subscript to denote time $x_t$. We use the standard inner product denoted as $\inner{x}{y} = \sum_{i=1}^n x^i y^i$. 
For a given norm $\norm$ on $\reals^n$ we define its dual norm $\norm[y]_\ast = \sup_{\norm[x] =1}\inner{y}{x}$. 
For example, the dual norm to $\norm[x]_1 = \sum_{i=1}^n|x^i|$ is $\norm[x]_\infty = \max_i |x^i|$.
We assume all functions $f: \reals^n \to (-\infty, +\infty]$ to be closed, 
with domain of $f$ as $\dom f = \{x : f(x)<+\infty\}$
and corresponding interior $\interior \dom f$.
If $f$ is convex and differentiable, then its minimum $x_\ast \in \arg\min_{x \in C}f(x)$ over a closed convex set $C$  satisfies
$\inner{\nabla f(x_\ast)}{x - x_\ast} \geq 0$ for any $x\in C$.

We use the Bregman divergence of $\psi$ to generalize the notion of distance. 
Let $\psi$ be a convex function differentiable over $\interior \dom \psi$.
Then the Bregman divergence with respect to $\psi$ is
$B_\psi \colon \dom \psi \times \interior \dom \psi \to \reals$, defined as
$
B_\psi(x;y) = \psi(x) - \psi(y) - \langle \nabla \psi(y), x-y \rangle.
$
We say that $f$ is $\mu$-strongly convex over $C$ with respect to $\norm$ if $B_f(x;y) \geq \frac{\mu}{2}\norm[x-y]^2$ for any $x \in C, y \in C \cap \interior \dom \psi$. 
Similarly we define relative strong convexity~\citep{lu2018relatively}.
We say $g$ is $\mu$-strongly convex relative to $\psi$ over $C$ if
$
\inner{\nabla g(x)-\nabla g(y)}{x-y} \geq \mu\inner{\nabla \psi(x) - \nabla \psi(y)}{x-y}$  or, equivalently, if $B_g(x;y) \geq \mu B_\psi(x;y), \forall x,y \in \interior \dom \psi \cap C$~\citep{lu2018relatively}.
Note both $\psi$ and $B_\psi(\cdot;y)$ are $1$-strongly convex relative to $\psi$.

\subsection{Zero-Sum Games and QREs}\label{game-to-vi}
In 2p0s games, the solution of a QRE can be written as the solution to a negative entropy regularized saddle point problem.
To model QREs (and more), we consider the regularized min max problem
\begin{align}\label{reg-minmax}
    \underset{x \in \mathcal{X}}{\min} \, \underset{y \in \mathcal{Y}}{\max} \quad \alpha g_1(x) + f(x,y) - \alpha g_2(y),
\end{align}
where $\mathcal{X} \subset \reals^n$, $\mathcal{Y} \subset \reals^m$ are closed and convex (and possibly unbounded) and $g_1 : \reals^{n} \to \reals$, $g_2 : \reals^m \to \reals$, $f:\reals^{n} \times \reals^{m} \to \reals$.  Moreover, $g_1$ and $f(\cdot, y)$ are differentiable and convex for every $y$. Similarly $-g_2$, $f(x, \cdot)$ are differentiable and concave for every $x$. 
A solution $(x_\ast, y_\ast)$ to \eqref{reg-minmax} is a Nash equilibrium in the regularized game with the following best response conditions along with their equivalent first order optimality conditions
\begin{align}
    x_\ast \in \arg \min_{x \in \mathcal{X}} \alpha g_1(x) + f(x,y_\ast) \Leftrightarrow \inner{\alpha \nabla g_1(x_\ast) + \nabla_{x_{\ast}} f(x_\ast,y_\ast)}{x-x_\ast} \geq 0 \,\forall x \in \mathcal{X},\label{condition-x}\\
    y_\ast \in \arg \min_{y \in \mathcal{Y}} \alpha g_2(y) - f(x_\ast,y) \Leftrightarrow \inner{\alpha \nabla g_2(y_\ast) -\nabla_{y_{\ast}} f(x_\ast,y_\ast)}{y-y_\ast} \geq 0 \,\forall y \in \mathcal{Y}.\label{condition-y}
\end{align}
In the context of QREs we have that $\mathcal{X} = \Delta^n, \mathcal{Y}= \Delta^m$ with $f(x,y) = x^\top A y$ for some payoff matrix $A$, and $g_1$, $g_2$ are negative entropy. 
The corresponding best response conditions~(\ref{condition-x}-\ref{condition-y}) can be written in closed form as $x_\ast \propto \exp(\nicefrac{-Ay_{\ast}}{\alpha}), \,y_\ast \propto \exp(\nicefrac{A^\top x_{\ast}}{\alpha})$.
Similarly, for EFGs, normal-form QREs take the form of \eqref{reg-minmax}~
\citep{ling2018game} with $g_1, g_2$ being dilated entropy~\citep{hoda2010smoothing}, $f(x,y) = x^\top A y$ ($A$ being the sequence-from payoff matrix), and $\mathcal{X}, \mathcal{Y}$ the sequence-form strategy spaces of both players.

\subsection{Connection between zero-sum games and variational inequalities}
More generally, solutions to ~\eqref{reg-minmax} (including QREs) can be written as solutions to variational inequalities (VIs) with specific structure.
The equivalent VI formulation stacks both first-order best response conditions (\ref{condition-x}-\ref{condition-y}) into one inequality.
\begin{definition}[Variational Inequality Problem (VI)]
Given $\mathcal{Z} \subseteq \reals^n$ and mapping $G: \mathcal{Z} \to \reals^n$, the variational inequality problem $\VI(\mathcal{Z},G)$ is to find $z_\ast \in \mathcal{Z}$ such that
\begin{align}
    \inner{G(z_\ast)}{z-z_\ast} \geq 0 \quad \forall z \in \mathcal{Z}.\label{vi-sol}
\end{align}
\end{definition}
In particular, the optimality conditions (\ref{condition-x}-\ref{condition-y}) are equivalent to
$\VI(\mathcal{Z}, G)$ where $G = F+ \alpha \nabla g$,
$\mathcal{Z} = \mathcal{X} \times \mathcal{Y}$
and $g: \mathcal{Z} \to \reals, (x,y) \mapsto g_1(x)+g_2(y)$, with corresponding operators
$F(z) =[\nabla_x f(x,y),
    -\nabla_y f(x,y)]^{\top}$, and $\nabla g = [\nabla_x g_1(x),
    \nabla_y g_2(y)]^{\top}$.
For more details see~\citet{facchinei2003finite}(Section 1.4.2).
Note that VIs are more general than min-max problems; they also include fixed-point problems and Nash equilibria in $n$-player general-sum games~\citep{facchinei2003finite}.
However, in the case of convex-concave zero-sum games and convex optimization, the problem admits efficient algorithms since  the corresponding operator $G$ is \emph{monotone}~\citep{Rockafellar70monotoneoperators}.

\begin{definition}
$G$ is said to be strongly monotone if, for $\mu > 0$ and any $z,z'$ where $G$ is defined,
$\inner{G(z)-G(z')}{z-z'}\geq \mu \norm[z-z']^2$.
G is monotone if this is true for $\mu=0$.
\end{definition}

\begin{definition}
$G$ is said to be $L$-smooth with respect to $\norm$ if, for any $z,z'$ where $G$ is defined,
$\norm[G(z)-G(z')]_\ast\leq L\norm[z-z']$.
\end{definition}
For EFGs, \citet{ling2018game} showed that the QRE  is the solution of a min-max problem of the form~\eqref{reg-minmax} where $f$ is bilinear and each $g_i$ could be non smooth.
Therefore, we can write the problem as a VI with strongly monotone operator $G$ having composite structure, a smooth part coming from $f$ and non-smooth part from the regularization $g_1$, $g_2$.
\begin{proposition}\label{qre-example}
Solving a normal-form reduced QRE in a two-player zero-sum EFG is equivalent to solving $\VI(\mathcal{Z}, F+\alpha \nabla \psi)$ where
$\mathcal{Z}$ is the cross-product of the sequence form strategy spaces and $\psi$ is the sum of the dilated entropy functions for each player. 
The function $\psi$ is strongly convex with respect to $\norm$.
Furthermore, $F$ is monotone and $\max_{ij}|A_{ij}|$-smooth ($A$ being the sequence-form payoff matrix) with respect to $\norm$ and  $F+\alpha\nabla \psi$ is strongly monotone.
\end{proposition}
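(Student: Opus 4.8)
The plan is to assemble the proposition from three ingredients: the min-max characterization of the normal-form QRE, the dictionary between regularized min-max problems and VIs established just above, and the standard structural properties of bilinear operators and of dilated entropy. First I would invoke \citet{ling2018game}, which shows that the normal-form reduced QRE, expressed in sequence form, is exactly a solution of the regularized saddle-point problem~(\ref{reg-minmax}) with $f(x,y) = x^\top A y$, with $g_1,g_2$ the dilated entropies of the two players, and with $\mathcal{X},\mathcal{Y}$ the sequence-form strategy spaces. Setting $\psi(x,y)=g_1(x)+g_2(y)$ and applying the min-max-to-VI correspondence from the preceding subsection (stacking (\ref{condition-x}-\ref{condition-y})), the QRE is characterized as a solution of $\VI(\mathcal{Z},F+\alpha\nabla\psi)$ with $\mathcal{Z}=\mathcal{X}\times\mathcal{Y}$ and $F(z)=[\nabla_x f,-\nabla_y f]^\top=[Ay,-A^\top x]^\top$. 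This establishes the equivalence claim.

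Next I would treat the monotonicity and smoothness of $F$. Since $f$ is bilinear, $F$ is affine with $F(z)-F(z')=M(z-z')$ for the skew-symmetric block matrix $M=\left[\begin{smallmatrix} 0 & A \\ -A^\top & 0 \end{smallmatrix}\right]$. Skew-symmetry gives $\inner{F(z)-F(z')}{z-z'}=(z-z')^\top M(z-z')=0$, so $F$ is monotone (indeed with equality). For smoothness I would work in the $\ell_1$ norm on $\mathcal{Z}$, whose dual is $\ell_\infty$, and bound, with $\delta=z-z'$, the quantity $\norm[M\delta]_\infty=\max(\norm[A\delta_y]_\infty,\norm[A^\top\delta_x]_\infty)$ entrywise by $\max_{ij}|A_{ij}|\,\norm[\delta]_1$. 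This is exactly the induced operator norm $\norm[A]_{1\to\infty}=\max_{ij}|A_{ij}|$, giving $\max_{ij}|A_{ij}|$-smoothness with respect to $\norm$.

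For strong convexity and strong monotonicity, I would cite the established fact \citep{hoda2010smoothing} that dilated entropy is strongly convex with respect to this same norm $\norm$; since $g_1$ and $g_2$ act on disjoint variable blocks, $\psi=g_1+g_2$ is strongly convex on the product with the same modulus $\mu$. Strong convexity of $\psi$ is equivalent to $\mu$-strong monotonicity of $\nabla\psi$, so for the composite operator we have $\inner{(F+\alpha\nabla\psi)(z)-(F+\alpha\nabla\psi)(z')}{z-z'}\geq 0+\alpha\mu\norm[z-z']^2$, which establishes strong monotonicity with modulus $\alpha\mu$.

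The main obstacle is the strong convexity of the dilated entropy: unlike Shannon entropy on a single simplex, it must be controlled through the recursive dilated structure, and the correct modulus (and the precise norm making it hold) depend on the game tree, so this step genuinely relies on the analysis of \citet{hoda2010smoothing} rather than a one-line argument. The remaining pieces---affine monotonicity, the operator-norm smoothness bound, and additivity of strong monotonicity---are routine. The only real care needed elsewhere is to ensure a single norm $\norm$ is used throughout, so that the smoothness constant of $F$ (the $\ell_1\to\ell_\infty$ operator norm $\max_{ij}|A_{ij}|$) and the strong-convexity modulus of $\psi$ are compatible.
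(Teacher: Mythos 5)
Your proposal is correct and follows essentially the same route as the paper: invoke \citet{ling2018game} for the sequence-form saddle-point characterization of the QRE, stack the first-order optimality conditions into $\VI(\mathcal{Z}, F+\alpha\nabla\psi)$, invoke \citet{hoda2010smoothing} for strong convexity of dilated entropy, bound the bilinear operator via the $\ell_1\to\ell_\infty$ operator norm $\max_{ij}|A_{ij}|$, and combine monotonicity of $F$ with strong monotonicity of $\alpha\nabla\psi$. The only differences are cosmetic: you make $F$'s monotonicity explicit via skew-symmetry (the paper simply asserts it from the convex-concave structure), and the paper works with the product norm $\norm[z]=\sqrt{\norm[x]_1^2+\norm[y]_1^2}$, under which the modulus $\min\{\mu_1,\mu_2\}$ is preserved exactly, whereas your block-$\ell_1$ norm costs a harmless factor of two in the strong-convexity modulus---so your parenthetical ``with the same modulus $\mu$'' is slightly off, though immaterial since the proposition asserts no specific constant.
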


\section{Algorithms and Theory} \label{sec:method}

In Proposition \ref{qre-example}, we provided a new perspective to QRE problems that draws connections to VIs with special composite structure. 
Motivated by this connection, in Section \ref{sec:theory}, we consider an approach to solve such problems via a non-Euclidean proximal gradient method~\cite{tseng2010approximation,beck2017} and prove a novel linear convergence result.
Thereafter, in Section \ref{sec:tech_treament}, we demonstrate how this general algorithm specializes to MMD and splits into two decentralized simultaneous updates in 2p0s games (one for each player).
Finally, in Section \ref{sec:non_tech_treament}, we discuss specific instances of MMD, give new algorithms for RL and QRE solving, and summarize our linear convergence result for QREs.

\subsection{Convergence Analysis}\label{sec:theory}
\looseness=-1
 We now present our main algorithm, a non-Euclidean proximal gradient method to solve $\VI(\mathcal{Z}, F+\alpha \nabla g)$. 
Since $\nabla g$ is possibly not smooth, we incorporate $g$ as a proximal regularization.
\begin{algo}
\label{alg}
Starting with $z_1 \in \interior \dom \psi \cap \mathcal{Z}$ at each iteration $t$ do
\[
z_{t+1} = \underset{{z\in \mathcal{Z}}}{\arg \min} \, \eta\left(\inner{F(z_t)}{z}+\alpha g(z)\right) +B_\psi(z;z_t).
\]
\end{algo}

To ensure that $z_{t+1}$ is well defined, we make the following assumption.

\begin{assumption}[Well-defined]\label{well-defined}
Assume $\psi$ is $1$-strongly convex with respect to $\norm$ over $\mathcal{Z}$ and, for any $\ell$, stepsize $\eta>0$, $\alpha >0$,
$z_{t+1} = \arg \min_{z \in \mathcal{Z} } \, \eta\left(\inner{\ell}{z} +\alpha g(z)\right) +B_\psi(z;z_t) \in \interior \dom \psi$.
\end{assumption} 
We also make some assumptions on $F$ and $g$.
\begin{assumption}\label{vi-assumption}
Let $F$ be monotone and $L$-smooth with respect to $\norm$ and $g$ be $1$-strongly convex relative to $\psi$ over $\mathcal{Z}$ with $g$ differentiable over $\interior  \dom \psi$.
\end{assumption}

These assumptions imply $F{+}\alpha \nabla g$ is strongly monotone\footnote{This follows because Assumptions (\ref{well-defined}-\ref{vi-assumption}) imply $g$ is strongly convex and hence $\nabla g$ is strongly monotone. 
} with unique solution $z_\ast$~\citep{bauschke2011convex}.
Our result shows that, if $z_\ast \in \interior \dom \psi$\footnote{This assumption is guaranteed in the QRE setting where $g$ is the sum of dilated entropy.}, then Algorithm \ref{alg} converges linearly to $z_\ast$.

\begin{theorem}\label{linear}
Let Assumptions \ref{well-defined} and \ref{vi-assumption} hold and assume the unique solution $z_\ast$ to $\VI(\mathcal{Z},  F+\alpha \nabla g)$ satisfies $z_\ast \in \interior \dom \psi$. Then Algorithm \ref{alg} converges if $\eta \leq \frac{\alpha}{L^2}$ and guarantees
\[
B_\psi(z_\ast;z_{t+1}) \leq \left(\frac{1}{1 +\eta \alpha}\right)^{t} B_\psi(z_\ast;z_1).
\]
\end{theorem}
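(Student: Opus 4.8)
The plan is to run the standard ``one-step contraction'' argument for proximal-type methods: prove $(1+\eta\alpha)B_\psi(z_\ast;z_{t+1}) \le B_\psi(z_\ast;z_t)$ and iterate. First I would write the first-order optimality condition for the update $z_{t+1}$. Since Assumption~\ref{well-defined} guarantees $z_{t+1}\in\interior\dom\psi$, the map $g$ is differentiable there, so the variational characterization of the minimizer gives $\inner{\eta(F(z_t)+\alpha\nabla g(z_{t+1}))+\nabla\psi(z_{t+1})-\nabla\psi(z_t)}{z-z_{t+1}}\geq 0$ for all $z\in\mathcal{Z}$; I evaluate this at $z=z_\ast$. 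The term $\inner{\nabla\psi(z_{t+1})-\nabla\psi(z_t)}{z_\ast-z_{t+1}}$ is then rewritten with the three-point identity for Bregman divergences as $B_\psi(z_\ast;z_t)-B_\psi(z_\ast;z_{t+1})-B_\psi(z_{t+1};z_t)$. Rearranging yields $B_\psi(z_\ast;z_{t+1}) \le B_\psi(z_\ast;z_t)-B_\psi(z_{t+1};z_t)+\eta\inner{F(z_t)+\alpha\nabla g(z_{t+1})}{z_\ast-z_{t+1}}$, so everything reduces to controlling the inner-product term.

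Next I would split $F(z_t)=F(z_{t+1})+(F(z_t)-F(z_{t+1}))$ and treat the two pieces separately. For the ``implicit'' piece $\inner{F(z_{t+1})+\alpha\nabla g(z_{t+1})}{z_\ast-z_{t+1}}$ I compare against the solution: since $z_\ast$ solves $\VI(\mathcal{Z},F+\alpha\nabla g)$, we have $\inner{F(z_\ast)+\alpha\nabla g(z_\ast)}{z_\ast-z_{t+1}}\le 0$; subtracting this, monotonicity of $F$ kills the $F$ difference (giving $\inner{F(z_{t+1})-F(z_\ast)}{z_\ast-z_{t+1}}\le 0$), and relative strong convexity of $g$ (with $\mu=1$) bounds the $\nabla g$ difference above by $-\big(B_\psi(z_\ast;z_{t+1})+B_\psi(z_{t+1};z_\ast)\big)$, using the identity $\inner{\nabla\psi(z_{t+1})-\nabla\psi(z_\ast)}{z_\ast-z_{t+1}}=-\big(B_\psi(z_\ast;z_{t+1})+B_\psi(z_{t+1};z_\ast)\big)$. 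This is exactly the step that produces the contractive term $-\eta\alpha B_\psi(z_\ast;z_{t+1})$.

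Finally I would bound the ``error'' piece $\eta\inner{F(z_t)-F(z_{t+1})}{z_\ast-z_{t+1}}$, which is where I expect the main obstacle to lie, since the stepsize threshold $\eta\le\alpha/L^2$ must emerge precisely here. By Cauchy--Schwarz with the dual norm and $L$-smoothness, this is at most $\eta L\,\norm[z_t-z_{t+1}]\,\norm[z_\ast-z_{t+1}]$; I apply Young's inequality with parameter $c=\alpha/L$ and then use $1$-strong convexity of $\psi$, i.e. $B_\psi(z_{t+1};z_t)\ge\tfrac12\norm[z_{t+1}-z_t]^2$ and $B_\psi(z_{t+1};z_\ast)\ge\tfrac12\norm[z_{t+1}-z_\ast]^2$, to upper bound the error by $\tfrac{\eta L^2}{\alpha}B_\psi(z_{t+1};z_t)+\eta\alpha B_\psi(z_{t+1};z_\ast)$. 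The condition $\eta\le\alpha/L^2$ makes the first coefficient at most $1$, so these two terms are exactly absorbed by the $-B_\psi(z_{t+1};z_t)$ and $-\eta\alpha B_\psi(z_{t+1};z_\ast)$ already present. What remains is $B_\psi(z_\ast;z_{t+1})\le B_\psi(z_\ast;z_t)-\eta\alpha B_\psi(z_\ast;z_{t+1})$, i.e. $(1+\eta\alpha)B_\psi(z_\ast;z_{t+1})\le B_\psi(z_\ast;z_t)$; iterating from $t$ down to $1$ gives the claimed geometric rate. The delicate point is that the error term couples $z_t-z_{t+1}$ with $z_\ast-z_{t+1}$, so the Young split must be tuned so both leftover Bregman terms are dominated simultaneously---this is precisely what forces $\eta\le\alpha/L^2$.
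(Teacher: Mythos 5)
Your proposal is correct and follows essentially the same route as the paper's proof: the three-point property of the prox step, the split $F(z_t)=F(z_{t+1})+(F(z_t)-F(z_{t+1}))$, the solution-comparison step using monotonicity of $F$ and relative strong convexity of $g$, then Cauchy--Schwarz, smoothness, Young's inequality, and strong convexity of $\psi$. The only (immaterial) difference is the Young parameter: you use $\rho=L/\alpha$ so that the stepsize condition $\eta\le\alpha/L^2$ is spent absorbing the $B_\psi(z_{t+1};z_t)$ term, whereas the paper uses $\rho=1$ and spends it on the $B_\psi(z_{t+1};z_\ast)$ term---both yield the identical contraction $(1+\eta\alpha)B_\psi(z_\ast;z_{t+1})\le B_\psi(z_\ast;z_t)$.
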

\looseness=-1
Note $\alpha > 0$ is necessary to converge to the solution. 
If $\alpha = 0$ in the context of solving \eqref{reg-minmax}, Algorithm \ref{alg} with $\psi(z) = \frac{1}{2}\norm[z]^2$ becomes projected gradient descent ascent, which is known to diverge or cycle for any positive stepsize.
However, choosing the strong convexity constants of $g$ and $\psi$ to be $1$ is for convenience---the theorem still holds with arbitrary constants, in which case the stepsize condition becomes proportional to the relative strong convexity constant of $g$ (see Corollary \ref{cor:linear} for details).

Due to the generality of VIs, we have the following convex optimization result.
\begin{corollary} \label{cor:comp_linear}
Consider the composite optimization problem $\min_{z\in \mathcal{Z}} f(z) + \alpha g(z)$. Then under the same assumptions as Theorem \ref{linear}  with $F=\nabla f$, Algorithm \ref{alg} converges linearly to the solution.
\end{corollary}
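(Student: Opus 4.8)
The plan is to reduce the convex composite problem to the variational inequality already analyzed in Theorem \ref{linear} and then invoke that theorem verbatim. The bridge is the first-order optimality characterization quoted in the notation section: for a convex differentiable objective, a point minimizes it over a closed convex set precisely when it satisfies the associated VI.

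First I would observe that under the stated hypotheses the composite objective $f + \alpha g$ is convex over $\mathcal{Z}$. Indeed, the assumption inherited from Theorem \ref{linear} that $F = \nabla f$ is monotone is exactly convexity of $f$, and $g$ is convex because it is $1$-strongly convex relative to $\psi$. Hence $z_\ast$ minimizes $f(z) + \alpha g(z)$ over $\mathcal{Z}$ if and only if $\inner{\nabla f(z_\ast) + \alpha \nabla g(z_\ast)}{z - z_\ast} \geq 0$ for all $z \in \mathcal{Z}$, i.e. $z_\ast$ solves $\VI(\mathcal{Z}, \nabla f + \alpha \nabla g)$. By convexity this condition is both necessary and sufficient for global optimality, so the minimizer set coincides exactly with the VI solution set.

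Next I would identify $F = \nabla f$, so that the VI becomes $\VI(\mathcal{Z}, F + \alpha \nabla g)$, which is precisely the problem that Theorem \ref{linear} addresses, and check that its hypotheses transfer directly: $F = \nabla f$ is monotone and $L$-smooth by assumption, $g$ is $1$-strongly convex relative to $\psi$ and differentiable on $\interior \dom \psi$, and the unique solution satisfies $z_\ast \in \interior \dom \psi$. Observe further that the update in Algorithm \ref{alg} with $F = \nabla f$ is exactly the proximal gradient step for the composite problem. Applying Theorem \ref{linear} with any stepsize $\eta \leq \alpha/L^2$ then yields $B_\psi(z_\ast; z_{t+1}) \leq (1 + \eta\alpha)^{-t}\, B_\psi(z_\ast; z_1)$, i.e. linear convergence to the unique VI solution, which by the equivalence above is the unique minimizer of $f + \alpha g$.

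There is no deep obstacle here; the corollary is essentially a repackaging of Theorem \ref{linear}. The only point deserving care is the equivalence between the optimization problem and the VI, specifically that the first-order variational condition is \emph{sufficient}, not merely necessary, for global optimality, which is what convexity of the composite objective buys us, and that strong monotonicity of $F + \alpha \nabla g$ guarantees both problems share the same unique solution, so that ``the solution'' referenced in the statement is well defined.
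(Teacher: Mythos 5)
Your proposal is correct and matches the paper's (implicit) argument: the paper states the corollary as an immediate consequence of the generality of VIs, i.e., exactly the reduction you spell out, where the first-order optimality condition of the convex composite problem is the variational inequality $\VI(\mathcal{Z}, \nabla f + \alpha \nabla g)$ and Theorem \ref{linear} is then applied verbatim. Your added care about sufficiency of the first-order condition under convexity and uniqueness via strong convexity/monotonicity is a sound fleshing-out of what the paper leaves unstated.
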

Note that Corollary \ref{cor:comp_linear} guarantees linear convergence, which is faster than existing results~\citep{tseng2010approximation, bauschke2017descent, hanzely2021accelerated}, due to the additional assumption that $g$ is relatively-strongly convex.

\subsection{Application of Magnetic Mirror Descent to Two-Player Zero-Sum Games}
\label{sec:tech_treament}

We define MMD to be Algorithm $\ref{alg}$ with $g$ taken to be either $\psi$ or $B_\psi(\cdot; z')$ for some $z'$;
in both cases the $1$-relative strongly convex assumption is satisfied, and $z_{t+1}$ is attracted to either $\min_{z \in \mathcal{Z}} \psi(z)$ or $z'$, which we call the magnet.
\begin{algo}[Magnetic Mirror Descent (MMD)]\label{mmd}
\begin{align}\label{mmd-update}
    z_{t+1} = \underset{{z\in \mathcal{Z}}}{\arg \min} \, \eta\left(\inner{F(z_t)}{z}+\alpha \psi(z)\right) +B_\psi(z;z_t)
\end{align}
or
\begin{align}\label{mmd-update2}
    z_{t+1} = \underset{{z\in \mathcal{Z}}}{\arg \min} \, \eta\left(\inner{F(z_t)}{z}+\alpha B_\psi(z;z')\right) +B_\psi(z;z_t).
\end{align}
\end{algo}

\begin{remark}\label{mirror-remark}
\looseness=-1
MMD  has the same computational cost as mirror descent since the updates can be equivalently written as  $z_{t+1} {=} \arg \min_{z\in \mathcal{Z}} \, \inner{\ell}{z}+\psi(z)$ (e.g. $\ell {=} \nicefrac{(\eta F(z_t)-\nabla \psi(x_t))}{(1+\eta\alpha)}$ for ~\eqref{mmd-update}).
In fact, Proposition \ref{prop:mmd-to-md} shows that MMD is equivalent to mirror descent on the regularized loss with a different stepsize.
\end{remark}

\looseness=-1
MMD and, more generally, Algorithm ~\ref{alg} can be used to derive a descent-ascent method to solve the zero-sum game \eqref{reg-minmax}.
If $g_1 = \psi_1$ and $g_2 = \psi_2$ are strongly convex over $\mathcal X$ and $\mathcal Y$,
then we can let $\psi(z) = \psi_1(x)+\psi_2(y)$, which makes $\psi$ strongly convex over $\mathcal{Z}$.
Then the MMD update rule ~\eqref{mmd-update} converges to the solution of \eqref{reg-minmax} and splits into simultaneous descent-ascent updates:
\begin{align}
    &x_{t+1} = \underset{{x\in \mathcal{X}}}{\arg \min} \, \eta\left(\inner{\nabla_{x_t} f(x_t,y_t)}{x}+\alpha \psi_1(x)\right) +B_{\psi_1}(x;x_t), \label{mmd-descent}\\ &y_{t+1} = \underset{{y\in \mathcal{Y}}}{\arg \max} \, \eta\left(\inner{\nabla_{y_t} f(x_t,y_t)}{y}-\alpha \psi_2(y)\right) -B_{\psi_2}(y;y_t). \label{mmd-ascent}
\end{align}

\subsection{Magnet Mirror Descent Summary} 
\label{sec:non_tech_treament}
MMD's update is parameterized by four objects: a stepsize $\eta$, a regularization temperature $\alpha$, a mirror map $\psi$, and a magnet, which we denote as either $\rho$ or $\zeta$ depending on the $\psi$.
The stepsize $\eta$ dictates the extent to which moving away from the current iterate is penalized; the regularization temperature $\alpha$ dictates the extent to which being far away from the magnet (i.e., $\rho$ or $\zeta$) is penalized; the mirror map $\psi$ determines how distance is measured.

If we take $\psi$ to be negative entropy, then, in reinforcement learning language, MMD takes the form 
\begin{align}\label{rl-mmd}
\pi_{t+1} = \text{argmax}_{\pi} \mathbb{E}_{A \sim \pi} q_{t}(A) - \alpha \text{KL}(\pi, \rho) - \frac{1}{\eta}\text{KL}(\pi, \pi_t),
\end{align}
where $\pi_t$ is the current policy, $q_t$ is the Q-value vector for time $t$, and $\rho$ is a magnet policy.
For parameterized problems, if $\psi =\frac{1}{2}\norm_2^2$, MMD takes the form
\begin{align} \label{rl-euc-mmd}
\theta_{t+1} = \text{argmin}_{\theta} \langle \nabla_{\theta_t} \mathcal{L}(\theta_t), \theta \rangle+ \frac{\alpha}{2}\norm[ \theta - \zeta]_2^2 + \frac{1}{2\eta} \norm[\theta - \theta_t]_2^2 ,
\end{align}
where $\theta_t$ is the current parameter vector, $\mathcal{L}$ is the loss, and $\zeta$ is the magnet.

In settings with discrete actions and unconstrained domains, respectively, these instances of MMD possess close forms, as shown below
\begin{align}
\pi_{t+1} \propto [\pi_t \rho^{\alpha \eta} e^{\eta q_{t}}]^{\frac{1}{1 + \alpha \eta}}, \quad
\theta_{t+1} = [\theta_t + \alpha \eta \zeta - \eta \nabla_{\theta_t} \mathcal{L}(\theta_t)] \frac{1}{1 + \alpha \eta}.
\end{align}

Our main result, Theorem \ref{linear}, and Proposition \ref{qre-example} imply that if both players simultaneously update their policies using equation (\ref{rl-mmd}) with a uniform magnet in 2p0s NFGs, then their joint policy converges to the $\alpha$-QRE exponentially fast.
Similarly, in EFGs, if both players use a type of policy called sequence form with $\psi$ taken to be dilated entropy, then their joint policy converges to the $\alpha$-QRE exponentially fast.
Both of these results also hold more generally for equilibria induced by non-uniform magnet policies.
MMD can also be a considered as a behavioral-form algorithm in which update rule (\ref{rl-mmd}) or (\ref{rl-euc-mmd}) is applied at each information state.
If $\rho$ is uniform, a fixed point of this instantiation is an $\alpha$-AQRE; more generally, fixed points are regularized equilibria (i.e., fixed points of a regularized best response operator).

\section{Experiments}

Our main body focuses on highlighting the high level takeaways of our main experiments.
Additional discussion of each experiment, as well as additional experiments, are included in the appendix.
Code for the sequence form experiments is available at \url{https://github.com/ryan-dorazio/mmd-dilated}.
Code for some of the other experiments is available at \url{https://github.com/ssokota/mmd}.

\textbf{Experimental Domains}
For tabular normal-form settings, we used stage games of a 2p0s Markov variant of the game Diplomacy~\citep{paquette2019no}.
These games have payoff matrices of shape $(50, 50)$, $(35, 43)$, $(50, 50)$, and $(4, 4)$, respectively, and were constructed using an open-source value function~\citep{bakhtin2021no}.
For tabular extensive-form settings, we used games implemented in OpenSpiel~\citep{openspiel}: Kuhn Poker, 2x2 Abrupt Dark Hex, 4-Sided Liar's Dice, and Leduc Poker.
These games have 54, 471, 8176, and 9300 non-terminal histories, respectively.
For deep multi-agent settings, we used 3x3 Abrupt Dark Hex and Phantom Tic-Tac-Toe, which are also implemented in OpenSpiel.

\begin{figure}
    \centering
    \includegraphics[width=\linewidth]{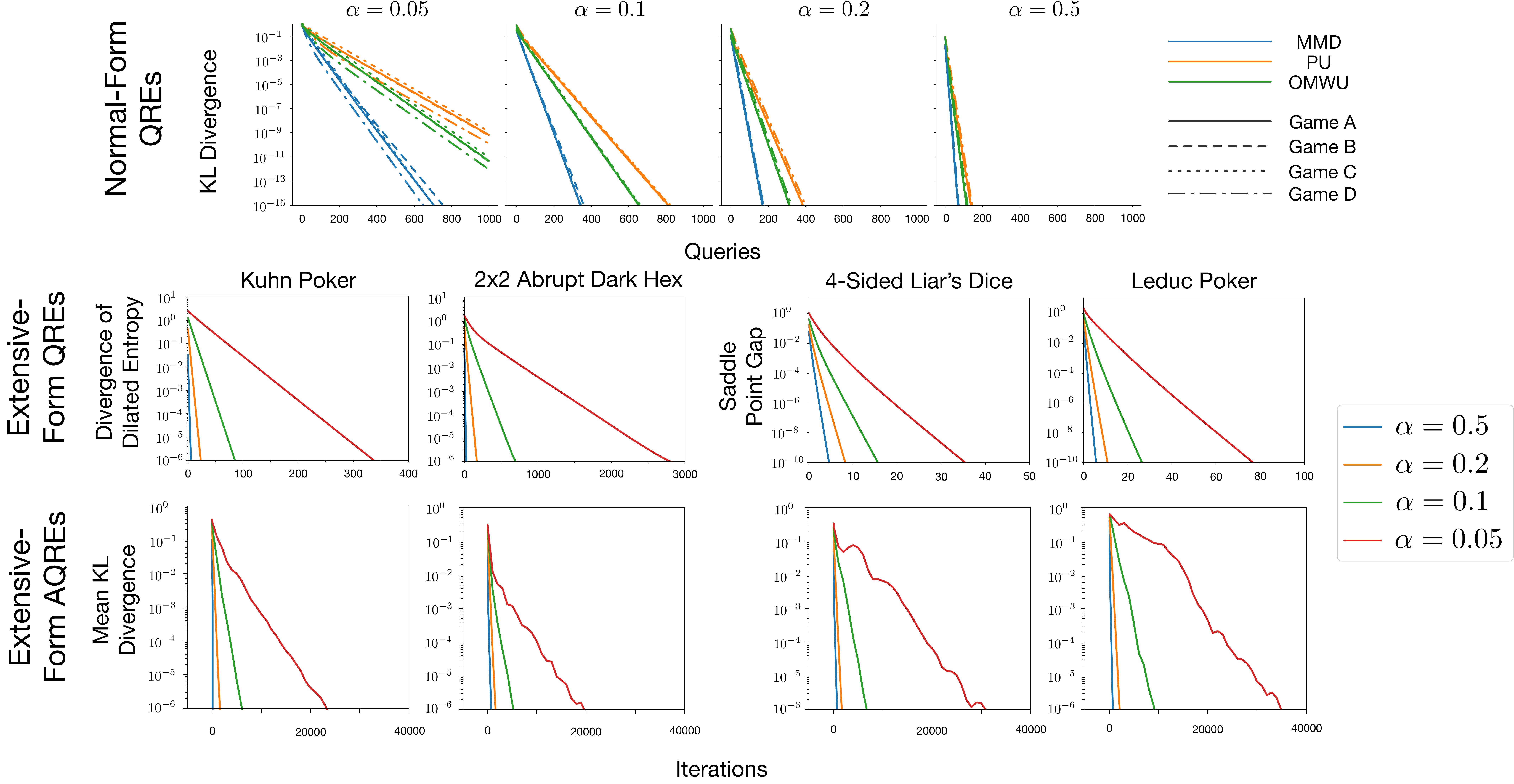}
    \caption{Solving for (A)QREs in various settings.
    }
    \label{fig:big_qre}
\end{figure}

\looseness=-1
\textbf{Convergence to Quantal Response Equilibria} First, we examine MMD's performance as a QRE solver.
We used \citet{ling2018game}'s solver to compute ground truth solutions for NFGs and Gambit \citep{gambit} to compute ground truth solutions for EFGs.
We show the results in Figure \ref{fig:big_qre}.
We show NFG results in the top row of the figure compared against algorithms introduced by \citet{cen2021fast}, with each algorithm using the largest stepsize allowed by theory.
All three algorithms converge exponentially fast, as is guaranteed by theory.
The middle row shows results for QREs on EFG benchmarks.
For Kuhn Poker and 2x2 Abrupt Dark Hex, we observe that MMD's divergence converges exponentially fast, as is also guaranteed by theory.
For 4-Sided Liar's Dice and Leduc Poker, we found that Gambit had difficulty approximating the QREs, due to the size of the games.
Thus, we instead report the saddle point gap (the sum of best response values in the regularized game), for which we observe linear convergence, as is guaranteed by Proposition \ref{distance-to-gap}.
The bottom row shows results for AQREs using behavioral form MMD (with $\eta = \alpha / 10$) on the same benchmarks, where we also observe convergence (despite a lack of guarantees).
For further details, see Sections \ref{sec:ffqre} for the QRE experiments and Section \ref{sec:ffaqre} for the AQRE experiments.

\begin{figure}
    \centering
    \includegraphics[width=\linewidth]{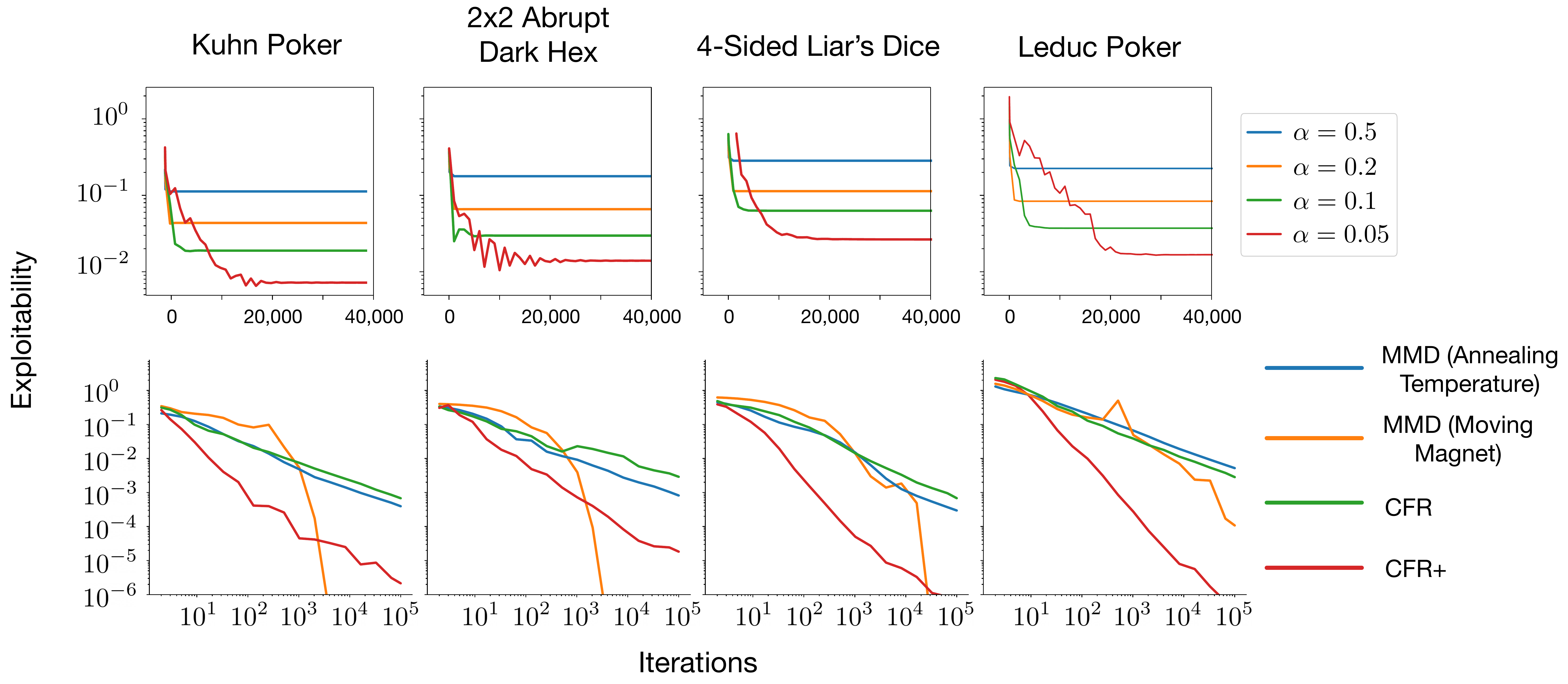}
    \caption{(top) Behavioral-form MMD with constant hyperparameters for various temperatures; (bottom) instances of behavioral-form MMD as Nash equilibria solvers, compared to CFR and CFR+.}
    \label{fig:nash_big}
\end{figure}

\looseness=-1
\textbf{Exploitability Experiments} From our AQRE experiments, it immediately follows that it is possible to use behavioral-form MMD with constant stepsize, temperature, and magnet to compute strategies with low exploitabilities.\footnote{Note that this would also be possible with sequence-form MMD.}
Indeed, we show such results (again with $\eta = \alpha / 10$ and a uniform magnet) in the top row of Figure \ref{fig:nash_big}.
A natural follow up question to these experiments is whether MMD can be made into a Nash equilibrium solver by either annealing the amount of regularization over time or by having the magnet trail behind the current iterate.
We investigate this question in the bottom row of Figure \ref{fig:nash_big} by comparing i) MMD with an annealed temperature, annealed stepsize, and constant magnet; ii) MMD with a constant temperature, constant stepsize, and moving magnet; iii) CFR \citep{cfr}; and iv) CFR+ \citep{cfr+}.
While CFR+ yields the strongest performance, suggesting that it remains the best choice for tabularly solving games, we view the results as very positive.
Indeed, not only do both variants of MMD exhibit last-iterate convergent behavior, they also perform competitively with (or better than) CFR.
\emph{This is the first instance of a standard RL algorithm yielding results competitive with tabular CFR in classical 2p0s benchmark games.}
For further details, see Section \ref{sec:ffnefg} for the annealing temperature experiments and Section \ref{sec:mm} for the moving magnet experiments.

\begin{figure}
    \centering
    \includegraphics[width=.9\linewidth]{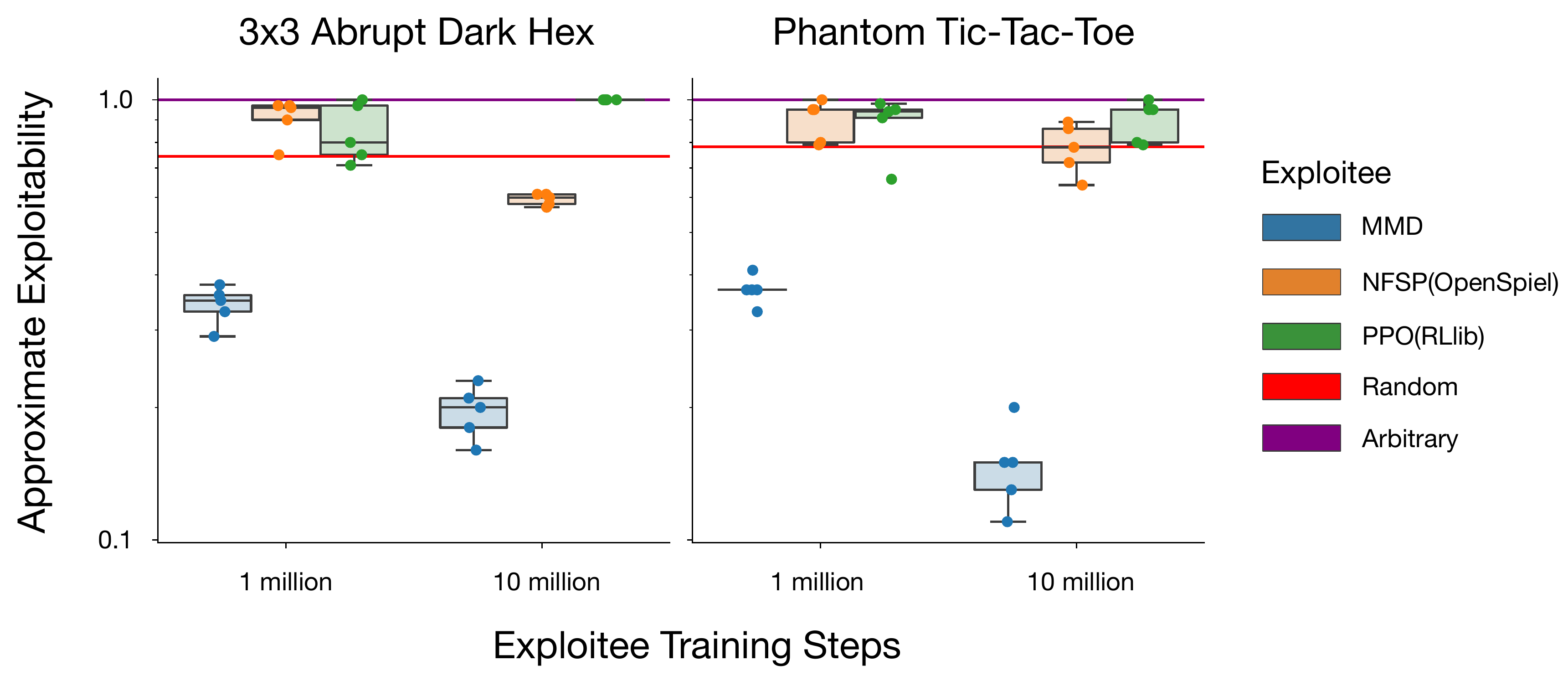}  
    \centering
    
    \vspace{1mm}
    \includegraphics[width=.9\linewidth]{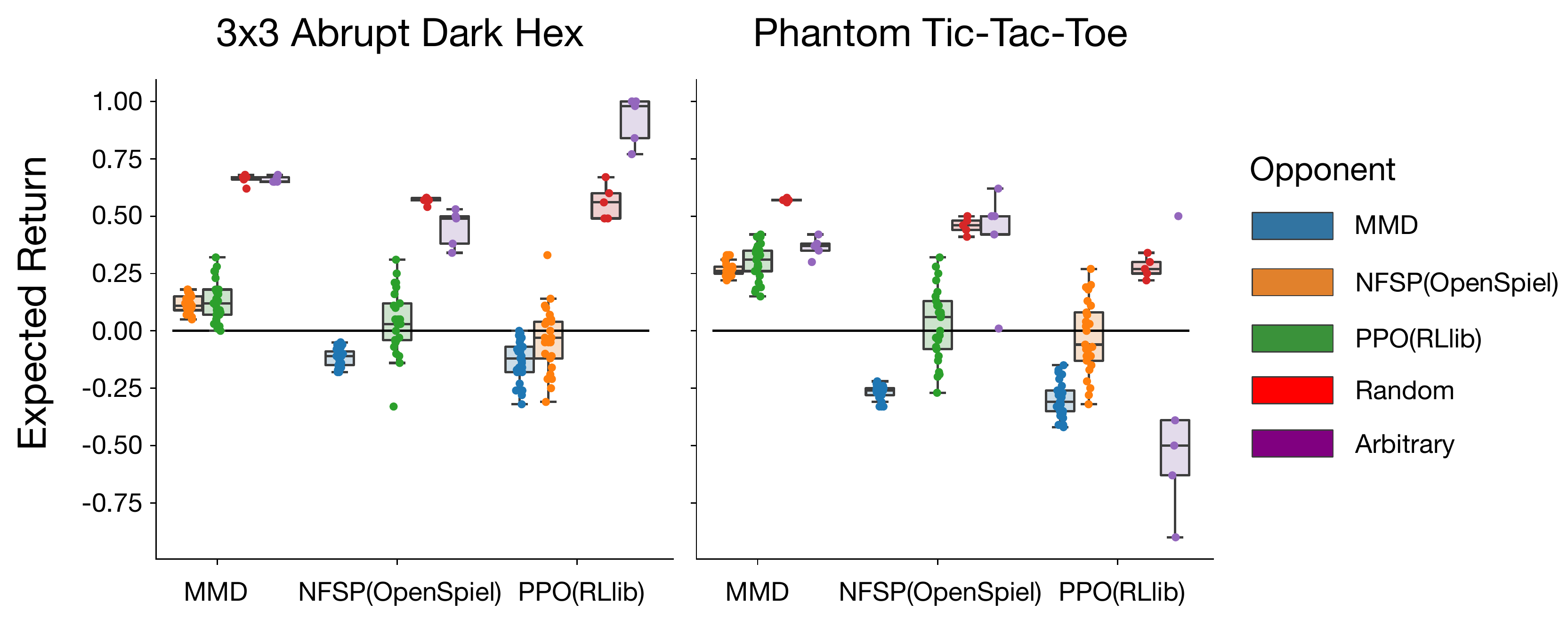}
    \caption{(top) Approximate exploitability experiments; (bottom) head-to-head experiments.}
    \label{fig:appx_expl_and_h2h}
\end{figure}
\looseness=-1
\textbf{Deep Multi-Agent Reinforcement Learning} The last experiments in the main body examine MMD as a deep multi-agent RL algorithm using self play.
We benchmarked against OpenSpiel's 
\citep{openspiel} implementation of NFSP~\citep{nfsp} and RLlib's \citep{rllib} implementation of PPO \citep{ppo}.
We implemented MMD as a modification of RLlib's~\citep{rllib} PPO implementation by changing the adapative forward KL regularization to a reverse KL regularization.
For hyperparameters, we tuned $\{(\alpha_t, \eta_t)\}$ for MMD; otherwise, we used default hyperparameters for each algorithm.

\looseness=-1
As the games are too large to easily compute exact exploitability, we approximate exploitability using a DQN best response, trained for 10 million time steps.
The results are shown in the top row of Figure~\ref{fig:appx_expl_and_h2h}.
The results include checkpoints after both 1 million and 10 million time steps, as well as bots that select the first legal action (Arbitrary) and that select actions uniformly at random (Random).
As expected, both NFSP and MMD yield lower approximate exploitability after 10M steps than they do after 1M steps; on the other hand, PPO does not reliably reduce approximate exploitability over time.
In terms of raw value, we find that MMD substantially outperforms the baselines in terms of approximate exploitabilty.
We also show results of head-to-head match-ups in the bottom row of Figure \ref{fig:appx_expl_and_h2h} for the 10M time step checkpoints.
As may be expected given the approximate exploitability results, we find that MMD outperforms our baselines in head-to-head matchups.
For further details, see Section \ref{sec:dh}.

\textbf{Subject Matter of Additional Experiments} In the appendix, we include 14 additional experiments:
\begin{enumerate}[leftmargin=*,nosep,noitemsep]
    \item Trajectory visualizations for MMD applied as a QRE solver for NFGs (Section \ref{sec:neg_ent});
    \item Trajectory visualizations for MMD applied as a saddle point solver (Section \ref{sec:euc})
    \item Solving for QREs in NFGs with black box (b.b.) sampling (Section \ref{sec:bbqre});
    \item Solving for QREs in NFGs with b.b. sampling under different gradient estimators (Section~\ref{sec:bbqre});
    \item Solving for QREs in Kuhn Poker \& 2x2 Abrupt Dark Hex, shown in saddle point gap (Section~\ref{sec:ffqre}); 
    \item Solving for Nash in NFGs with full feedback (Section \ref{sec:ffndip});
    \item Solving for Nash in NFGs with b.b. sampling (Section \ref{sec:bbnashdip});
    \item Solving for Nash in EFGs with a reach-probability-weighted stepsize (Section \ref{sec:ffnefg});
    \item Solving for Nash in EFGs with b.b. sampling (Section \ref{sec:bbnefg});
    \item Solving for Nash in EFGs using MaxEnt and MiniMaxEnt objectives (Section \ref{sec:oo}); 
    \item Solving for Nash in EFGs with a Euclidean mirror map (Section \ref{sec:euc_logits});
    \item Solving for MiniMaxEnt equilibria (Section \ref{sec:mme});
    \item Learning in EFGs with constant hyperparameters and a MiniMaxEnt objective (Section~\ref{sec:fixed_params}).
    \item Single-agent deep RL on a small collection Mujoco and Atari games (Section \ref{sec:drl}).
\end{enumerate}

\section{Related Work}

We discuss the main related work below. Additional related work concerning average policy deep reinforcement learning for 2p0s games can be found in Section \ref{sec:add_rw}.

\looseness=-1
\textbf{Convex Optimization and Variational Inequalities} 
Like MMD and Algorithm \ref{alg}, the extragradient method~\citep{korpelevich_extragradient_1976, gorbunov22a} and the optimistic method~\citep{popov_modification_1980} have also been studied in the context of zero-sum games and variational inequalities more generally.
However, in contrast to MMD, these methods require smoothness to  guarantee convergence.
Outside the context of variational inequalities, analogues of MMD and Algorithm~\ref{alg} have been studied in convex optimization under the non-Euclidean proximal gradient method~\citep{beck2017} originally proposed by~\citet{tseng2010approximation}.
But, in contrast to Theorem~\ref{linear}, existing convex optimization results~\citep{beck2017,tseng2010approximation,hanzely2021accelerated, bauschke2017descent} are without linear rates because they do not assume the proximal regularization to be relatively-strongly convex.
In addition to convex optimization, the non-Euclidean proximal gradient algorithm has also been studied in online optimization under the name composite mirror descent~\citep{duchi2010composite}.
\citet{duchi2010composite} show a $O(\sqrt{t})$ regret bound without strong convexity assumptions on the proximal term.
In the case where the proximal term is relatively strongly convex, \citet{duchi2010composite} give an improved rate of $O(\log t)$---implying that MMD has average iterate convergence with a rate of $O(\nicefrac{\log t}{t})$ for bounded problems, like QRE solving.

\textbf{Quantal Response Equilibria}
Among QRE solvers for NFGs, the PU and OMWPU algorithms from \citet{cen2021fast}, which also possess linear convergence rates for NFGs, are most similar to MMD. 
However, both PU and OMWPU require two steps per iteration (because of their similarities to mirror-prox~\citep{nemirovski2004prox} and optimistic mirror descent~\citep{rakhlin2013online}), and PU requires an extra gradient evaluation.
In contrast, our algorithm needs only one simple step per iteration (with the same computation cost as mirror descent) and our analysis applies to various choices of mirror map, meaning our algorithm can be used to compute a larger class of regularized equilibria, rather than only QREs. 
Among QRE solvers for EFGs, existing algorithms differ from MMD in that they either require second order information~\citep{ling2018game} or are first order methods with average iterate convergence~\citep{farina2019online,ling2019large}.
In contrast to these methods, MMD attains linear last-iterate convergence.

\textbf{Single-Agent Reinforcement Learning}
Considered as a reinforcement learning algorithm, MMD with a negative entropy mirror map and a MaxEnt RL objective coincides with the NE-TRPO algorithm studied in \citep{trpo_reg_mdp}.
MMD with a negative entropy mirror map is also similar to the MD-MPI algorithm proposed by \citet{Vieillard2020_8e2c381d} but differs in that MD-MPI includes the negative KL divergence between the current and previous iterate within its Q-values, whereas MMD does not.
Considered as a deep reinforcement learning algorithm, MMD with a negative entropy mirror map bears relationships to both KL-PPO (a variant of PPO that served as motivation for the more widely adopted gradient clipping variant) \citep{ppo} and MDPO \citep{mdpo,hsuppo}.
In short, the negative entropy instantiation of MMD corresponds with KL-PPO with a flipped KL term and with MDPO when there is entropy regularization.
We describe these relationships using symbolic expressions in Section \ref{sec:symb}.

\textbf{Regularized Follow-the-Regularized-Leader} 
Another line of work has combined follow-the-regularized-leader with additional regularization, under the names friction follow-the-regularized-leader (F-FoReL) \citep{PerolatMLOROBAB21} and piKL \citep{pikl}, in an analogous fashion to how we combine mirror descent with additional regularization.

Similarly to our work, F-FoReL was designed for the purpose of achieving last iterate convergence in 2p0s games.
In terms of convergence guarantees, we prove discrete-time linear convergence for NFGs, while \citet{PerolatMLOROBAB21} give continuous-time linear convergence for EFGs using counterfactual values; neither possesses the desired discrete-time result for EFGs using action values.
In terms of ease-of-use, MMD offers the advantage that it is decentralizable, whereas the version of F-FoReL that \citet{PerolatMLOROBAB21} present is not.
In terms of scalability, MMD offers the advantage that it only requires approximating bounded quantities; in contrast, F-FoReL requires estimating an arbitrarily accumulating sum. 
Lastly, in terms of empirical performance, the tabular results presented in this work for MMD are substantially better than those presented for F-FoReL.
For example, F-FoReL's best result in Leduc is an exploitability of about 0.08 after $200{,}000$ iterations---it takes MMD fewer than $1{,}000$ iterations to achieve the same value.

\looseness=-1
On the other hand, piKL was motivated by improving the prediction accuracy of imitation learning via decision-time planning.
We believe the success of piKL in this context suggests that MMD may also perform well in such a setting.
While \citet{pikl} also attains convergence to KL-regularized equilibria in NFGs, our results differ in two ways:
First, our results only handle the full feedback case, whereas \citet{pikl}'s results allow for stochasticity.
Second, our results give linear last-iterate convergence, whereas \citet{pikl} only show $O(\nicefrac{\log t}{t})$ average-iterate convergence.

\section{Conclusion}

\looseness=-1
In this work, we introduced MMD---an algorithm for reinforcement learning in single-agent settings and 2p0s games, and regularized equilibrium solving.
We presented a proof that MMD converges exponentially fast to QREs in EFGs---the first algorithm of its kind to do so.
We showed empirically that MMD exhibits desirable properties as a tabular equilibrium solver, as a single-agent deep RL algorithm, and as a multi-agent deep RL algorithm.
This is the first instance of an algorithm exhibiting such strong performance across all of these settings simultaneously.
We hope that, due to its simplicity, MMD will help open the door to 2p0s games research for RL researchers without game-theoretic backgrounds.
We provide directions for future work in Section \ref{sec:fut}.

\section{Acknowledgements}

We thank Jeremy Cohen, Chun Kai Ling, Brandon Amos, Paul Muller, Gauthier Gidel, Kilian Fatras, Julien Perolat, Swaminathan Gurumurthy, Gabriele Farina, and Michal \v{S}ustr for helpful discussions and feedback.
This research was supported by the Bosch Center for Artificial Intelligence, NSERC Discovery grant RGPIN-2019-06512, Samsung, a Canada CIFAR AI Chair, and the Office of Naval Research Young Investigator Program grant N00014-22-1-2530.

\bibliographystyle{iclr2023_conference}
\bibliography{iclr2023_conference}

\newpage

\appendix


\addcontentsline{toc}{section}{Appendices}
\part{Appendices} 
\parttoc%
\newpage

\section{Problem Setting}

In our notation, we use
\begin{itemize}
    \item $s \in \mathbb{S}$ to notate Markov states,
    \item $a_i \in \mathbb{A}_i$ to notate actions,
    \item $o_i \in \mathbb{O}_i$ to notate observations,
    \item $h_i \in \mathbb{H}_i = \bigcup_t (\mathbb{O}_i \times \mathbb{A}_i)^{t} \times \mathbb{O}_i$ to denote information states (i.e., decision points).
\end{itemize}
We use
\begin{itemize}
    \item $\mathcal{T} \colon \mathbb{S} \times \mathbb{A} \to \Delta (\mathbb{S} \cup \{\perp\})$ to notate the transition function, where $\perp$ notates termination,
    \item $\mathcal{R}_i \colon \mathbb{S} \times \mathbb{A} \to \mathbb{R}$ to notate a reward function,
    \item $\mathcal{O}_i \colon \mathbb{S} \times \mathbb{A} \to \mathbb{O}_i$ to notate an observation function.
    \item $\mathcal{A}_i \colon \mathbb{H}_i \to \mathbb{A}_i$ to notate a legal action function.
\end{itemize}
We are interested in 2p0s games, in which $i \in \{1, 2\}$ and $\forall s, a, \mathcal{R}_1(s, a) = - \mathcal{R}_2(s, a)$.
For convenience, we use $-i$ to notate the player ``not $i$''.
Single-agent settings are captured as a special case in which the second player has a trivial action set $|\mathbb{A}_2| = 1$.
Normal-form games are captured as a special case in which there is only one state $s$ and the transition function only supports termination: $\forall a, \text{supp}(\mathcal{T}(s, a)) = \{\perp\}$.
(Here, we use $\text{supp}(\mathcal{X})$ to denote the support of a distribution $\mathcal{X}$---i.e., the subset of the domain of $\mathcal{X}$ that is mapped to a value greater than zero: $\{x : \mathcal{X}(x) > 0\}$.)

Each agent's goal is to maximize its expected return
\[
    \mathbb{E}\left[\sum_t \mathcal{R}_i(S^t, A^t) \mid \pi \right]
\]
using its policy $\pi_i$, which dictates a distribution over actions for each information state
\[
\pi_i \colon \mathbb{H}_i \to \Delta (\mathbb{A}_i).
\]
In game theory literature, these policies are called behavioral form and assume perfect recall.

We notate the expected value for an agent's action $a_i$ at an information state $h_i$ at time $t$ under joint policy $\pi$ as 
\[q_{\pi}(h_i, a_i) = \mathbb{E} \left[\mathcal{R}_i(S, A_{-i}, a_i) + \sum_{t'>t} \mathcal{R}_i(S^{t'}, A^{t'}) \mid \pi, h_i, a_i\right].\]

Here, the first expectation samples the current Markov state $S$ and the current opponent action $A_{-i}$ from the posterior induced by player $i$ reaching information state $h_i$, when each player uses its part of joint policy $\pi$ to determine its actions.
The second expectation is over trajectories under the same conditions, with the additional condition that $a_i$ is the agent's action at the current time step.
 
\subsection{Reduction to Normal Form}

Given any game of the above form, we can reduce the game to normal form as follows.
Let $\bar{\Pi}_i$ denote the set of deterministic policies---i.e., the set of policies that support exactly one action at a time: 
\[\bar{\Pi}_i =\{\pi_i :\forall h_i\, |\text{supp}(\pi_i(h_i))| = 1\}.\]
The action space of the normal-form game is the space of deterministic policies: $\tilde{\mathbb{A}}_i =\bar{\Pi}_i$.\footnote{
Although the actions $\tilde{\mathbb{A}}_i$ give  an equivalent normal-form representation, many of the actions are redundant because actions taken at certain decision points may make other decision points unreachable.
The \emph{reduced normal-form} (a.k.a. reduced strategic form) removes duplicate actions by identifying redundant choices at future decision points that are unreachable~\citep{nisan_roughgarden_tardos_vazirani_2007}. Hereinafter we consider the reduced normal-form.}
The reward function of the normal-form game is dictated by the expected return of the deterministic joint policy.

\begin{remark}
Any policy $\pi_i$ can be expressed as a finite mixture over policies in $\bar{\Pi}_i$ in a fashion that induces the same distribution over trajectories (against arbitrary, but fixed, opponents).
Conversely, any finite mixture over policies in $\bar{\Pi}_i$ can be expressed as a policy $\pi_i$ that induces the same distribution over trajectories (against arbitrary, but fixed, opponents).
\end{remark}

By the remark above, joint policies in the original game possess counterparts in the normal-form game (and vice versa) achieving identical expected returns.
It is in the sense that the normal-form game is equivalent to the original game.

A more detailed exposition on this equivalence can be found in \citet{mas}.

\section{Solution Concepts}

Nash equilibria are perhaps the most commonly sought-after solution concept in 2p0s games.
A joint policy $\pi_1, \pi_2$ is a Nash equilibrium if neither player can improve its expected return by changing its policy (assuming the other player does not change its policy):
\[
\forall i, \pi_i \in \text{arg max}_{\pi_i'} \mathbb{E} \left[\sum_t \mathcal{R}_i(S^t, A^t) \mid \pi_i', \pi_{-i}\right].
\]
Note that, in single-agent settings, this corresponds with the notion of an optimal policy in reinforcement learning.

Another solution concept is a logit quantal response equilibrium~\citep{qre_nf,aqre}.
As we only deal with logit quantal response equilibria, we generally drop logit and refer to them simply as quantal reponse equilibria.
In normal-form games, there are multiple equivalent ways to define a quantal response equilibrium.
One way is using entropy regularization.
We say a joint policy is an $\alpha$-QRE in a normal-form game if each player maximizes a weighted combination of expected return and policy entropy
\[
\forall i, \pi_i \in \text{arg max}_{\pi_i'} \mathbb{E} \left[ \mathcal{R}_i(A) + \alpha \mathcal{H}(\pi_i') \mid \pi_i', \pi_{-i}\right].
\]
In a temporally-extended game, we say a joint policy is an $\alpha$-QRE if the equivalent mixture over deterministic joint policies is an $\alpha$-QRE of the equivalent normal-form game.

An alternative way to extend QREs to temporally extended settings is to ask that they satisfy the normal-form QRE condition at each information state:
\begin{align} \label{eq:aqre}
\forall i, \forall h_i, \pi_i(h_i) \in \text{arg max}_{\pi_i'(h_i)} \mathbb{E}_{A \sim \pi_i'(h_i)} \left[q^{\pi}_i(h_i, A) + \alpha \mathcal{H}(\pi_i'(h_i))\right].
\end{align}
When a joint policy satisfies this condition, it is called an agent QRE (as it is as if there is a separate agent playing a part of a normal-form QRE at each information state).
In single-agent settings, $\alpha$-AQREs correspond with the fixed point of the instantiation of expected SARSA~\citep{sutton2018reinforcement} in which the policy is a softmax distribution over Q-values with temperature $\alpha$.

The last solution concept that we investigate is called the MiniMaxEnt equilibirum.
A joint policy is an $\alpha$-MiniMaxEnt equilibrium if it satisfies condition (\ref{eq:aqre}) for MiniMaxEnt $Q$-values
\[q_{\pi}(h_i, a_i) = \mathbb{E} \left[\mathcal{R}_i(S, A_{-i}, a_i) - \alpha \mathcal{H}(\pi(H_{-i}^{t}))+ \sum_{t'>t} \mathcal{R}_i(S^{t'}, A^{t'}) + \alpha \mathcal{H}(\pi(H_i^{t'})) - \alpha \mathcal{H}(\pi(H_{-i}^{t'})) \mid \pi, h_i, a_i\right].\]
Alternatively, $\alpha$-MiniMaxEnt equilibria can be defined as the saddlepoint of the $\alpha$-MiniMaxEnt objective
\[\max_{\pi_1} \min_{\pi_2} \mathbb{E}\left[\sum_t \mathcal{R}_1(H^t, A^t) + \alpha \mathcal{H}(\pi_1(H^t_1)) - \alpha \mathcal{H}(\pi_2(H^t_2))\right].\]
While the name MiniMaxEnt is novel to this work, the concept has been studied in recent existing work \citep{PerolatMLOROBAB21,cen2021fast}.

\section{Reduced Normal-Form Logit-QREs and MMD}

\subsection{Sequence-Form Background}
A Nash-equilibrium in a 2p0s extensive-form game can be formulated as a bilinear saddle point problem over the sequence form~\citep{nisan_roughgarden_tardos_vazirani_2007}
\[
    \min_{x \in \mathcal{X}} \max_{y \in \mathcal{Y}} x^\top A y,
\]
where $\mathcal{X}$ and $\mathcal{Y}$ are the sequence form polytopes, which equivalently can be viewed as treeplexes~\citep{hoda2010smoothing, kroer2020faster}.
We provide some background on the sequence form in the context of the min player (player 1); the max player follows similarly.
Recall all the decision points for player 1 are denoted as $\mathbb{H}_1$ (also known as information states) and the 
actions available at decision point $h \in \mathbb{H}_1$ are $\mathcal{A}_1(h)$. 
Recall that a policy (a.k.a behavioral-form strategy) is denoted as $\pi_1$ with $\pi_1(h) \in \Delta(\mathcal{A}_1(h))$ being the policy at decision point $h$.
For convenience, let $\pi_1(h,a)$ denote the probability of taking action $a \in \mathcal{A}_1(h)$ at decision point $h$. 
Next we denote $p(h)$ as the parent sequence to reach decision point $h$---that is, the unique previous decision point and action taken by the player before reaching $h$. 
Note that this parent is unique due to perfect recall and that it is possible for many decision points to share the same parent. 
Then we can construct the sequence form from the top down, where the $(h,a)$ sequence of $x \in \mathcal{X}$ is given by
\[
    x^{(h,a)} = x^{p(h)} \pi(h,a).
\]
For convenience, the root sequence $\varnothing$ is defined to be the parent of all initial decision points of the game and is set to the constant $x^{\varnothing} =1$.
We denote $x^{h}$ as the slice of $x = (x^{(h, a)})_{h \in \mathbb{H}_1, a \in \mathcal{A}(h)}$ corresponding to decision point $h$. 
Note we have the following relationship
\[
\pi(h) = x^{h}/x^{p(h)}.
\]
Because $x^{(h,a)}$ corresponds to the probability of player 1 choosing \emph{all} actions along the sequence until reaching $(h,a)$, we get that $x^\top Ay$ is the expected payoff for player 2 given a pair of sequence-form strategies $x,y$. 
Thus the sequence form allows us to get a bilinear objective.

Given the bilinear structure of the sequence-form problem, we convert the problem into a VI using first-order optimality conditions. 
In order to apply MMD (or other first-order methods), we need a good choice of a mirror map for $\mathcal X$ and $\mathcal Y$. 
A such choice is the class of \emph{dilated distance generating functions}~\citep{hoda2010smoothing}:
\begin{align}
    \psi(x) &= \sum_{h\in \mathbb{H}_1} \beta_{h} x^{p(h)}\psi_{h}\left(\frac{x^{h}}{x^{p(h)}}\right)\\
    &= \sum_{h\in \mathbb{H}_1} \beta_{h} x^{p(h)}\psi_{h}(\pi(h)),
\end{align}
where $(\beta_{h})_{h \in \mathbb{H}_1} > 0$ are per-decision-point weights and $\psi_h$ is a distance-generating function for the simplex associated to $h$.
If $\psi_{h}$ is taken to be the negative entropy then we say $\psi$ is the dilated entropy function.
In the normal-form setting the dilated entropy is simply the standard negative entropy.

Recently it was shown that an $\alpha$-QRE (for the reduced normal form) is the solution to the following saddle point problem over the sequence form~\citep{ling2018game},
\begin{align}
       \min_{x \in \mathcal{X}} \max_{y \in \mathcal{Y}} \, \alpha\psi_1(x) + x^\top A y -\alpha \psi_2(y), \label{qre-sequence}
\end{align}
where $\psi_1, \psi_2$ are dilated entropy functions with weights $\beta_{h} =1$.
Note that we have the normal form $\alpha$-QRE as a special case of \eqref{qre-sequence}.

\subsection{Proof for Proposition \ref{qre-example}}
{
\renewcommand{\thetheorem}{\ref{qre-example}}
\begin{proposition}
Solving a normal-form reduced QRE in a two-player zero-sum EFG is equivalent to solving $\VI(\mathcal{Z}, F+\alpha \nabla \psi)$ where
$\mathcal{Z}$ is the cross-product of the sequence form strategy spaces and $\psi$ is the sum of the dilated entropy functions for each player. 
The function $\psi$ is strongly convex with respect to $\norm$.
Furthermore, $F$ is monotone and $\max_{ij}|A_{ij}|$-smooth ($A$ being the sequence-form payoff matrix) with respect to $\norm$ and $F+\alpha\nabla \psi$ is strongly monotone.
\end{proposition}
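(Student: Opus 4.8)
The plan is to verify each claim by first reducing the QRE to the bilinear saddle-point formulation \eqref{qre-sequence} and then invoking the general VI correspondence established in Section \ref{game-to-vi}. First I would recall from \citet{ling2018game} that the reduced normal-form $\alpha$-QRE coincides with the saddle point of \eqref{qre-sequence}, which is exactly an instance of \eqref{reg-minmax} with $g_1 = \psi_1$, $g_2 = \psi_2$, and bilinear coupling $f(x,y) = x^\top A y$. The VI identity stated after the VI definition then immediately recasts the problem as $\VI(\mathcal{Z}, F + \alpha \nabla \psi)$, where $\mathcal{Z} = \mathcal{X} \times \mathcal{Y}$, $\psi = \psi_1 + \psi_2$, and $F(z) = [\nabla_x f, -\nabla_y f]^\top = [Ay, -A^\top x]^\top$. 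This disposes of the equivalence claim and fixes the objects whose properties remain to be checked.

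The strong convexity of $\psi$ with respect to $\norm$ is the one ingredient I would not reprove. Instead I would cite the treeplex analysis of \citet{hoda2010smoothing} and \citet{kroer2020faster}, which establishes that dilated entropy is strongly convex with respect to an $\ell_1$-based norm on the sequence-form polytope. Accordingly I would fix $\norm$ to be this norm on the product space, e.g.\ $\norm[z]^2 = \norm[x]^2 + \norm[y]^2$ with each factor the $\ell_1$ norm, so that the remaining smoothness claim is measured against the \emph{same} norm; this coordination is the crux of the argument.

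For the properties of $F$ I would argue directly from its linear, skew-symmetric structure. Monotonicity is immediate from $\inner{F(z) - F(z')}{z - z'} = (x-x')^\top A(y-y') - (y-y')^\top A^\top(x-x') = 0$, so $F$ is monotone (indeed with equality, i.e.\ $\mu = 0$). For smoothness I would bound the dual norm by the $\ell_1 \to \ell_\infty$ operator norm of $A$, using $\norm[Av]_\infty \le \max_{ij}|A_{ij}|\,\norm[v]_1$ and the identical bound for $A^\top$; applying this to the two blocks $A(y-y')$ and $A^\top(x-x')$ yields $\norm[F(z)-F(z')]_\ast \le \max_{ij}|A_{ij}|\,\norm[z-z']$, so $L = \max_{ij}|A_{ij}|$ exactly.

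Finally, strong monotonicity of $F + \alpha\nabla\psi$ follows by combining the two halves: since $\psi$ is $\mu$-strongly convex, $\nabla\psi$ is $\mu$-strongly monotone, and adding the merely monotone $F$ preserves this, giving $\inner{(F + \alpha\nabla\psi)(z) - (F + \alpha\nabla\psi)(z')}{z - z'} \ge \alpha\mu\,\norm[z-z']^2 > 0$. I expect the main obstacle to be purely a matter of bookkeeping around the norm: one must pin down a single $\norm$ under which the cited dilated-entropy strong-convexity result holds \emph{and} under which the operator-norm bound produces exactly $\max_{ij}|A_{ij}|$. Once the $\ell_1$-based norm is fixed, every remaining step is a short direct computation, with the dilated-entropy strong convexity being the only genuinely nontrivial imported fact.
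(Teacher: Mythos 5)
Your proposal is correct and follows essentially the same route as the paper's proof: reduce to the saddle-point formulation of \citet{ling2018game}, invoke the VI correspondence of Section \ref{game-to-vi}, import the dilated-entropy strong convexity of \citet{hoda2010smoothing} under the $\ell_1$-product norm $\norm[z] = \sqrt{\norm[x]_1^2 + \norm[y]_1^2}$, bound $\norm[F(z)-F(z')]_\ast$ blockwise via the $\ell_1 \to \ell_\infty$ operator norm to get $L = \max_{ij}|A_{ij}|$, and combine monotone $F$ with strongly monotone $\alpha\nabla\psi$. The only cosmetic differences are that you verify monotonicity of $F$ explicitly via skew-symmetry (the paper merely asserts it), while the paper spells out the product-space strong-convexity computation with constant $\min\{\mu_1,\mu_2\}$ that you summarize by fixing the norm.
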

}
\begin{proof}

The problem of finding a reduced normal-form logit QRE is equivalent to solving the saddle-point problem stated in \eqref{qre-sequence}~\citep{ling2018game}. 
Therefore, due to the convexity of $\psi_1$ and $\psi_2$ and the discussion from Section \ref{game-to-vi}, we have that the solution to \eqref{qre-sequence} is equivalent to the solution of $\VI(\mathcal{Z}, F+\nabla \psi)$ where,
\begin{align}
    F(z) = \begin{bmatrix}
    Ay \\
    -A^{\top} x
  \end{bmatrix},   \quad \nabla \psi(z) =\begin{bmatrix}
    \nabla_x \psi_1(x) \\
    \nabla_y \psi_2(y)
  \end{bmatrix}.
\end{align} 

From \citet{hoda2010smoothing} we know there exists constants $\mu_1$, $\mu_2$ such that $\psi_1$ is $\mu_1$-strongly convex over $\mathcal{X}$ with respect to $\norm_1$ and $\psi_2$ is $\mu_2$-strongly convex over $\mathcal{Y}$ with respect to $\norm_1$ (\citet{hoda2010smoothing} do not show bounds on these constants, but we only need them to exist).
Therefore, $\psi$ is also strongly convex over $\mathcal{Z}$ with constant $\min\{\mu_1,\mu_2\}$  with respect to $\norm[z] = \sqrt{\norm[x]_1^2 + \norm[y]_1^2}$ since, for $z=(x,y)$, and $z'=(x',y')$ we have
\begin{align*}
    \inner{\nabla \psi(z)-\nabla \psi(z')}{z-z'} &= \inner{\nabla \psi_1(x)- \nabla \psi_1(x')}{x-x'} + \inner{\nabla \psi_2(y)- \nabla \psi_2(y')}{y-y'}\\
    &\geq \mu_1 \norm[x-x']^2_1 +\mu_2 \norm[y-y']^2_1  \\
    &\geq \min\{\mu_1,\mu_2\}\left(\norm[x-x']^2_1 +\norm[y-y']^2_1 \right)\\
    &=\min\{\mu_1,\mu_2\}\norm[z-z']^2.
\end{align*}

Following Theorem \ref{linear}, it is useful to characterize the smoothness of $F$ under the same norm for which $\psi$ is strongly-convex. 
First, notice that for any matrix $A$ we have that $\norm[Ax-Ay]_\infty \leq \max_{ij}|A_{ij}|\norm[x-y]_1$(see for example \citet{bubeck2015convex}[Section 5.2.4]).
Therefore altogether we have
\begin{align*}
    \norm[F(z)-F(z')]^2_\ast &= \norm[Ay-Ay']^2_{\infty} + \norm[A^{\top}x-A^{\top}x']^2_{\infty}\\
    &\leq \max_{ij}|A_{ij}|^2\left(\norm[y-y']^2_1 + \norm[x-x']^2_1 \right)\\
    & =\max_{ij}|A_{ij}|^2 \norm[z-z']^2,
\end{align*}
showing that $F$ is $L=\max_{ij}|A_{ij}|$-smooth with respect to $\norm$.
The strong-monotonicity of $F+\nabla \psi$ follows since  $F$ is monotone and $\nabla \psi$ is strongly monotone since $\psi$ is strongly convex.
\end{proof} 
Note that in general the Hessian can have unbounded entries~\citep{kroer2020faster} meaning that $\nabla \psi$ cannot be $L$-smooth~\citep{beck2017}.
Our MMD algorithm which handles $\psi$ in closed form allows us to sidestep this issue.
We also have that the dual norm of $\norm$ is simply $\norm[z]_\ast = \sqrt{\norm[x]^2_{\infty}+\norm[y]^2_{\infty}}$~\citep{bubeck2015convex, nemirovski2004prox}.

\subsection{MMD for Finding Reduced Normal-Form QREs over the Sequence-Form}\label{sec:mmd-sequence}

From Proposition \ref{qre-example} and Corollary~\ref{cor:linear} we have that the MMD descent-ascent updates (\ref{mmd-descent}-\ref{mmd-ascent}) with $\psi_1$,$\psi_2$, taken to be dilated entropy with $\eta \leq \nicefrac{\alpha}{\max_{ij}|A_{ij}|^2}$ converges linearly to the solution of~\eqref{qre-sequence}.
The updates, as mentioned by Remark~\ref{mirror-remark}, can be computed in closed-form as a one-line change to mirror descent with dilated-entropy~\citep{kroer2020faster}.
Indeed, setting $g_t = Ay_t$ (the gradient for the min player), we have that the update for the min player can be written as follows
\begin{align*}
    x_{t+1}&\arg \min_{x \in \mathcal{X}} \, \langle \eta g_t,x\rangle +\eta \alpha \psi(x) + B_\psi(x;x_t) \\
      &=\arg \min_{x \in \mathcal{X}} \, \langle \eta g_t -\nabla \psi(x_t),x\rangle +(\eta \alpha+1) \psi(x)\\ 
      &= \arg\min_{x \in \mathcal{X}} \langle \frac{\eta g_t -\nabla \psi(x_t)}{(1+\eta \alpha)},x\rangle + \psi(x)\\
      &= \arg\min_{x \in \mathcal{X}} \, \sum_{h \in \mathbb{H}_1} \langle \frac{\eta g_t^{h} -\nabla \psi(x_t)^{h}}{(1+\eta \alpha)},x^{h}\rangle +  x^{p(h)}\psi_{h}(x^{h}/x^{p(h)})\\
      &= \arg\min_{x \in \mathcal{X}} \,\sum_{h \in \mathbb{H}_1} x^{p(h)}\left(\langle \frac{\eta g_t^{h} -\nabla \psi(x_t)^h}{(1+\eta \alpha)},\pi(h)\rangle + \psi_{h}(\pi(h))\right).
\end{align*}
Updates can be computed in closed-form starting from decision points $h$ without any children and progressing upwards in the game tree.
\section{Proofs}

\subsection{Supporting Lemmas and Propositions} \label{sec:lemmas}

\begin{proposition}[\citep{bauschke2003bregman}Proposition 2.3\label{thm:bregman}]
Let $\{x,y\} \subset \dom \psi$ and $\{u,v\} \subset \interior \dom \psi$. Then
\begin{enumerate}
    \item $B_\psi(u;v) + B_\psi(v;u) = \inner{\nabla \psi(u)-\nabla \psi(v)}{u-v}$
    \item $B_\psi(x;u) = B_\psi(x;v)+ B_\psi(v;u)+\inner{\nabla \psi(v)-\nabla \psi (u)}{x-v}$
    \item $B_\psi(x;v) + B_\psi(y;u) = B_\psi(x;u) + B_\psi(y;v) + \inner{\nabla \psi (u)-\nabla \psi(v)}{x-y}$.
\end{enumerate}
\end{proposition}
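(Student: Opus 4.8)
The plan is to derive all three identities directly from the definition $B_\psi(x;y) = \psi(x) - \psi(y) - \inner{\nabla \psi(y)}{x-y}$; no property of $\psi$ beyond differentiability on $\interior \dom \psi$ is needed, and the hypotheses $\{x,y\}\subset\dom\psi$, $\{u,v\}\subset\interior\dom\psi$ serve only to guarantee that every gradient appearing below is evaluated at a point of $\interior\dom\psi$, so that each term is well-defined. The strategy throughout is to expand each Bregman divergence, collect the $\psi$-value terms (which either cancel in pairs or telescope to the desired single divergence), and then regroup the inner-product terms according to the gradient at which they are evaluated, using bilinearity of $\inner{\cdot}{\cdot}$.

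For part 1, I would write out $B_\psi(u;v)$ and $B_\psi(v;u)$ and add them; the scalar terms $\psi(u),\psi(v)$ cancel in pairs, leaving $-\inner{\nabla\psi(v)}{u-v} + \inner{\nabla\psi(u)}{u-v}$, which factors immediately as $\inner{\nabla\psi(u)-\nabla\psi(v)}{u-v}$. For part 2, I would expand the claimed right-hand side $B_\psi(x;v)+B_\psi(v;u)+\inner{\nabla\psi(v)-\nabla\psi(u)}{x-v}$. The two occurrences of $\inner{\nabla\psi(v)}{x-v}$, one from $B_\psi(x;v)$ and one from the correction term, cancel, and the telescoping $\psi$-values leave $\psi(x)-\psi(u)$; collecting the remaining $\nabla\psi(u)$ terms by linearity gives $-\inner{\nabla\psi(u)}{v-u}-\inner{\nabla\psi(u)}{x-v} = -\inner{\nabla\psi(u)}{x-u}$, which is exactly $B_\psi(x;u)$.

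Part 3 is the analogous but slightly longer bookkeeping: I would expand both sides, observe that both carry the common block $\psi(x)+\psi(y)-\psi(u)-\psi(v)$, and then show the inner-product remainders agree by grouping separately the coefficients of $\nabla\psi(u)$ and of $\nabla\psi(v)$. Each group collapses by linearity to the matching term on the other side; for instance the $\nabla\psi(u)$ terms on the right, $-\inner{\nabla\psi(u)}{x-u}+\inner{\nabla\psi(u)}{x-y}$, reduce to $-\inner{\nabla\psi(u)}{y-u}$, and symmetrically for $\nabla\psi(v)$.

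Since every step is a finite algebraic rearrangement, there is no genuine analytic obstacle here; the only thing to watch is the sign and argument bookkeeping inside the inner products, which is where an error is most likely to creep in. As a cross-check I would note that part 3 can also be obtained with no fresh computation by invoking part 2 twice—once to re-expand $B_\psi(x;v)$ through the intermediate point $u$ and once for $B_\psi(y;u)$ through $v$—and then cancelling; I would present the direct expansion as the cleaner route but keep this double-application argument as a consistency verification.
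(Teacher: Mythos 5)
Your proof is correct. Note that the paper itself offers no proof of this proposition---it is imported verbatim as Proposition~2.3 of \citet{bauschke2003bregman}---so there is nothing internal to compare against; your direct expansion from the definition $B_\psi(x;y)=\psi(x)-\psi(y)-\inner{\nabla\psi(y)}{x-y}$ is the standard argument and is complete: all three identities reduce to the bookkeeping you describe, and the sign/argument manipulations you spell out (the cancellation of the two $\inner{\nabla\psi(v)}{x-v}$ terms in part~2, the collapse of the $\nabla\psi(u)$ and $\nabla\psi(v)$ groups in part~3) all check out. One small caveat on your consistency check for part~3: applying part~2 twice does not lead to outright cancellation---the two applications leave behind the extra terms $B_\psi(u;v)+B_\psi(v;u)$, which must then be absorbed into the inner product via part~1, since
\begin{align*}
B_\psi(x;v)+B_\psi(y;u)
&= B_\psi(x;u)+B_\psi(y;v)+\bigl(B_\psi(u;v)+B_\psi(v;u)\bigr)
+\inner{\nabla\psi(u)-\nabla\psi(v)}{x-u-y+v},
\end{align*}
and only the identity $B_\psi(u;v)+B_\psi(v;u)=\inner{\nabla\psi(u)-\nabla\psi(v)}{u-v}$ turns the right-hand side into the claimed $\inner{\nabla\psi(u)-\nabla\psi(v)}{x-y}$ correction. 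So the cross-check is valid but uses parts~1 and~2 jointly, not part~2 alone; this does not affect your main proof, which stands on the direct expansion.
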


The following result is also known as the non-Euclidean prox theorem~\citep{beck2017}[Theorem 9.12] or the three-point property\citep{tseng2008accelerated}.
\begin{proposition}\label{prop:3point} Assume $\mathcal{Z}$ closed convex and both $f$ and $\psi$ are differentiable at $\bar{z}$ (defined below). Then the following statements are equivalent
\begin{enumerate}
    \item $
\bar{z}  = \underset{{z\in \mathcal{Z}}}{\arg \min} \, \eta\inner{g}{z}+f(z)+B_\psi(z;y)
$
\item $\forall z \in \mathcal{Z} \quad  \inner{\eta g + \nabla f(\bar{z})}{\bar{z} -z} \leq B_\psi(z;y) -B_\psi(z;\bar{z})-B_\psi(\bar{z},y)$
\end{enumerate}

\end{proposition}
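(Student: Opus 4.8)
The plan is to read statement~1 as the assertion that $\bar z$ minimizes the convex objective $\phi(z) := \eta\inner{g}{z} + f(z) + B_\psi(z;y)$ over the closed convex set $\mathcal Z$, and then to translate this minimization into the first-order variational-inequality condition recalled in the notation section: for a convex differentiable objective over a closed convex set, $\bar z$ is a minimizer if and only if $\inner{\nabla\phi(\bar z)}{z - \bar z} \geq 0$ for all $z \in \mathcal Z$. Establishing this equivalence in both directions is what upgrades statement~1 to an iff with statement~2.

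First I would differentiate $\phi$ at $\bar z$. Since $B_\psi(z;y) = \psi(z) - \psi(y) - \inner{\nabla\psi(y)}{z-y}$, its gradient in the first argument is $\nabla\psi(z) - \nabla\psi(y)$, so $\nabla\phi(\bar z) = \eta g + \nabla f(\bar z) + \nabla\psi(\bar z) - \nabla\psi(y)$ (using differentiability of $f$ and $\psi$ at $\bar z$, and of $\psi$ at $y$). Substituting into the VI condition and splitting the inner product gives $\inner{\eta g + \nabla f(\bar z)}{z - \bar z} + \inner{\nabla\psi(\bar z) - \nabla\psi(y)}{z - \bar z} \geq 0$. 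The second term is exactly the quantity in Proposition~\ref{thm:bregman}, item~2: taking $x = z$, $v = \bar z$, $u = y$ there yields $\inner{\nabla\psi(\bar z) - \nabla\psi(y)}{z - \bar z} = B_\psi(z;y) - B_\psi(z;\bar z) - B_\psi(\bar z;y)$. Plugging this in and moving the remaining inner-product term to the other side (flipping $z - \bar z$ to $\bar z - z$) produces precisely the inequality of statement~2, and since every step is an equivalence the two statements are equivalent.

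The main point that needs care, rather than genuine difficulty, is justifying that the first-order condition is both necessary and sufficient for being a minimizer, which relies on convexity of $\phi$ (hence of $f$ and of $B_\psi(\cdot;y)$, the latter following from convexity of $\psi$) over the convex set $\mathcal Z$; the necessity direction is the one already stated in the notation section, and sufficiency follows from the same convexity. A secondary technical point is ensuring all gradients are well defined: one needs $\bar z$ and $y$ to lie where $\psi$ is differentiable, i.e. in $\interior\dom\psi$, which is built into the hypotheses and into the very definition of $B_\psi$, so that the three-point identity of Proposition~\ref{thm:bregman} applies verbatim.
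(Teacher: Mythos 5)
Your proof is correct and follows essentially the same route as the paper's: apply the first-order optimality condition to the objective $\eta\inner{g}{z}+f(z)+B_\psi(z;y)$, then rewrite $\inner{\nabla\psi(\bar z)-\nabla\psi(y)}{z-\bar z}$ via item~2 of Proposition~\ref{thm:bregman} to obtain statement~2. Your extra remark that sufficiency of the first-order condition rests on convexity of the objective is a point the paper leaves implicit, but it does not change the argument.
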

\begin{proof}
\begin{align*}
    \bar{z} = \underset{{z\in \mathcal{X}}}{\arg \min} \, \eta\inner{g}{z}+f(z)+B_\psi(z;y) &\Leftrightarrow \inner{\nabla \psi(\bar{z})+\eta g+\nabla f(\bar{z})-\nabla \psi(y)}{z-\bar{z}} \geq 0 \quad \forall z \in \mathcal{Z}\\
    &\Leftrightarrow \inner{\nabla \psi(y) - \nabla \psi(\bar{z})-\eta g-\nabla f(\bar{z})}{z-\bar{z}} \leq 0 \quad \forall z \in \mathcal{Z}\\
    & \Leftrightarrow  \inner{\eta g+\nabla f(\bar{z})}{\bar{z}-z} \leq \inner{\nabla \psi(\bar{z})-\nabla \psi(y)}{z-\bar{z}} \quad \forall z \in \mathcal{Z} \\
    &\Leftrightarrow  \inner{\eta g+\nabla f(\bar{z})}{\bar{z}-z} \leq B_\psi(z;y) - B_\psi(z;\bar{z})-B_\psi(\bar{z};y) \, \forall z \in \mathcal{Z}.
\end{align*}
The first equivalence follows by the first-order optimality condition and the last one by Proposition \ref{thm:bregman}.
\end{proof}

\begin{lemma} \label{one-step} One step of Algorithm ~\ref{alg}, under the assumptions of Theorem \ref{linear}, guarantees that, for all $z \in \mathcal{Z}$,
\begin{align}
    & B_\psi(z;z_{t+1}) \leq \\
    & B_\psi(z;z_t) -B_\psi(z_{t+1};z_t) + \inner{\eta F(z_t) + \eta \alpha \nabla g(z_{t+1})}{z-z_{t+1}}.
\end{align}
\end{lemma}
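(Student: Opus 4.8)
The plan is to recognize that the update defining $z_{t+1}$ in Algorithm \ref{alg} is exactly an instance of the prox-minimization appearing in Proposition \ref{prop:3point}, so that the desired inequality can be read off from the three-point characterization followed by a one-line rearrangement. Concretely, I would match the update
\[
z_{t+1} = \underset{z\in\mathcal{Z}}{\arg\min}\; \eta\inner{F(z_t)}{z} + \eta\alpha g(z) + B_\psi(z;z_t)
\]
to the minimizer $\bar z$ of Proposition \ref{prop:3point} by taking that proposition's linear vector to be $F(z_t)$, its differentiable convex function to be $f(\cdot) = \eta\alpha g(\cdot)$, and its anchor point $y = z_t$. With these identifications $\bar z = z_{t+1}$ and $\nabla f(z_{t+1}) = \eta\alpha\nabla g(z_{t+1})$, and the factor $\eta$ of the proposition is our stepsize.

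Statement~2 of Proposition \ref{prop:3point} then yields directly, for all $z \in \mathcal{Z}$,
\[
\inner{\eta F(z_t)+\eta\alpha\nabla g(z_{t+1})}{z_{t+1}-z} \le B_\psi(z;z_t) - B_\psi(z;z_{t+1}) - B_\psi(z_{t+1};z_t).
\]
Moving $B_\psi(z;z_{t+1})$ to the left-hand side, leaving the remaining two Bregman terms on the right, and rewriting $\inner{\cdot}{z_{t+1}-z} = -\inner{\cdot}{z-z_{t+1}}$ produces precisely the claimed bound. This rearrangement is the entire algebraic content of the lemma.

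The only genuine verification is that Proposition \ref{prop:3point} is applicable, which requires $\mathcal{Z}$ to be closed and convex and both $f$ and $\psi$ to be differentiable at $\bar z = z_{t+1}$. The set $\mathcal{Z}$ is closed and convex by the standing setup, while differentiability follows from the hypotheses of Theorem \ref{linear}: Assumption \ref{well-defined} guarantees $z_{t+1}\in\interior\dom\psi$, so $\psi$ (hence $B_\psi(\cdot;z_t)$) is differentiable there, and Assumption \ref{vi-assumption} makes $g$ differentiable over $\interior\dom\psi$, so $\nabla g(z_{t+1})$ exists. I expect the main (and essentially only) subtlety to be this bookkeeping, namely confirming that $z_{t+1}$ lands in the interior so that $\nabla g(z_{t+1})$ is well defined, and keeping careful track of which terms the stepsize $\eta$ multiplies; once these are settled, the bound is immediate from Proposition \ref{prop:3point}.
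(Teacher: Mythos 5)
Your proof is correct and matches the paper's approach exactly: the paper's own proof of this lemma is the single line ``Immediate from Proposition \ref{prop:3point},'' and your instantiation (linear term $F(z_t)$, proximal function $f = \eta\alpha g$, anchor $y = z_t$) together with the sign flip $\inner{\cdot}{z_{t+1}-z} = -\inner{\cdot}{z-z_{t+1}}$ is precisely the intended specialization. Your added bookkeeping that $z_{t+1} \in \interior\dom\psi$ (Assumption \ref{well-defined}) and that $g$ is differentiable there (Assumption \ref{vi-assumption}) correctly fills in the applicability conditions the paper leaves implicit.
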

\begin{proof}
Immediate from Proposition \ref{prop:3point}.
\end{proof}

\begin{lemma}\label{sol-ineq} Under the same assumptions as Theorem \ref{linear}, let $z_\ast$ be the solution to $\VI(\mathcal{Z}, F+\alpha \nabla g)$. 
Then,
for any $z \in \mathcal{Z}\cap \interior \dom \psi$, the following inequality holds
\[
\inner{\eta F(z)+\eta \alpha \nabla g(z)}{z_\ast-z} \leq -\eta \alpha \left(B_\psi(z;z_\ast) +B_\psi(z_\ast;z)\right).
\]
\end{lemma}
\begin{proof}
\begin{align*}
  \inner{\eta F(z)+\eta \alpha \nabla g(z)}{z_\ast-z} &=   \inner{\eta F(z)+\eta \alpha \nabla g(z) -\eta F(z_\ast)-\eta \alpha \nabla g(z_\ast)}{z_\ast-z} \\
  &+ \inner{\eta F(z_\ast)+\eta \alpha \nabla g(z_\ast)}{z_\ast-z}\\
 &=  \underbrace{\inner{\eta F(z) -\eta F(z_\ast)}{z_\ast-z}}_{\leq 0} +
 \eta\alpha\inner{\nabla g(z)-\nabla g(z_\ast)}{z_\ast-z}\\
  &+ \underbrace{\inner{\eta F(z_\ast)+\eta \alpha \nabla g(z_\ast)}{z_\ast-z}}_{\leq 0}\\
  &\leq \eta \alpha\inner{\nabla g(z)- \nabla g(z_\ast)}{z_\ast-z} \\
  &= -\eta \alpha\inner{\nabla g(z)- \nabla g(z_\ast)}{z- z_\ast} \\
  &\leq -\eta \alpha\inner{\nabla \psi(z)- \nabla \psi(z_\ast)}{z-z_\ast} \\
  &= -\eta \alpha \left(B_\psi(z;z_\ast) +B_\psi(z_\ast;z)\right)
\end{align*}
Note that $\nabla g(z), \nabla g(z_\ast), \nabla \psi(z), \nabla \psi(z_\ast)$ are all well-defined because $z_\ast \in \interior \dom \psi$ and Assumptions (\ref{well-defined}-\ref{vi-assumption}).
The first inequality follows since  $F$ is monotone and $z_\ast \in \sol \VI(\mathcal{Z}, F+\alpha \nabla g)$. The second inequality follows since $g$ is 1-strongly convex relative to $\psi$ and the last equality by Proposition \ref{thm:bregman}.
\end{proof}

\subsection{Proof of Theorem \ref{linear}} \label{sec:proofs}

{
\renewcommand{\thetheorem}{\ref{linear}}
\begin{theorem}
Let Assumptions \ref{well-defined} and \ref{vi-assumption} hold and assume the unique solution $z_\ast$ to $\VI(\mathcal{Z},  F+\alpha \nabla g)$ satisfies $z_\ast \in \interior \dom \psi$. 
Then Algorithm \ref{alg} converges if $\eta \leq \frac{\alpha}{L^2}$ and guarantees
\[
B_\psi(z_\ast;z_{t+1}) \leq \left(\frac{1}{1 +\eta \alpha}\right)^{t} B_\psi(z_\ast;z_1).
\]
\end{theorem}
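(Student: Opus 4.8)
The plan is to prove a one-step contraction of the form $(1+\eta\alpha)B_\psi(z_\ast;z_{t+1}) \leq B_\psi(z_\ast;z_t)$ and then iterate it $t$ times. First I would instantiate Lemma~\ref{one-step} at the test point $z = z_\ast$, giving
\[
B_\psi(z_\ast;z_{t+1}) \leq B_\psi(z_\ast;z_t) - B_\psi(z_{t+1};z_t) + \inner{\eta F(z_t) + \eta\alpha\nabla g(z_{t+1})}{z_\ast - z_{t+1}}.
\]
The whole difficulty is controlling the last inner product, in which $F$ is evaluated at $z_t$ while $\nabla g$ is evaluated at $z_{t+1}$. Since Lemma~\ref{sol-ineq} only speaks about $F$ and $\nabla g$ taken at a \emph{common} point, I would add and subtract $\eta F(z_{t+1})$, splitting the inner product into $\inner{\eta F(z_{t+1}) + \eta\alpha\nabla g(z_{t+1})}{z_\ast - z_{t+1}}$ plus a cross term $\inner{\eta F(z_t) - \eta F(z_{t+1})}{z_\ast - z_{t+1}}$. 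Applying Lemma~\ref{sol-ineq} with $z = z_{t+1}$ bounds the first piece by $-\eta\alpha\bigl(B_\psi(z_{t+1};z_\ast) + B_\psi(z_\ast;z_{t+1})\bigr)$, and carrying the $-\eta\alpha B_\psi(z_\ast;z_{t+1})$ summand to the left-hand side produces the coefficient $(1+\eta\alpha)$ on $B_\psi(z_\ast;z_{t+1})$.

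It then remains to show that the residual $-B_\psi(z_{t+1};z_t) - \eta\alpha B_\psi(z_{t+1};z_\ast) + \inner{\eta F(z_t) - \eta F(z_{t+1})}{z_\ast - z_{t+1}}$ is nonpositive; this is the heart of the argument and the only place where both the smoothness of $F$ and the stepsize restriction enter. I would bound the cross term by Cauchy--Schwarz in the primal/dual norm pair, use $L$-smoothness of $F$ to obtain $\eta L\,\norm[z_t - z_{t+1}]\,\norm[z_\ast - z_{t+1}]$, and then split this product via Young's inequality $ab \leq \tfrac{a^2}{2c} + \tfrac{c b^2}{2}$ with a free parameter $c>0$. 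Finally, invoking the $1$-strong convexity of $\psi$ to pass from squared norms back to Bregman divergences, namely $\norm[z_t - z_{t+1}]^2 \leq 2B_\psi(z_{t+1};z_t)$ and $\norm[z_\ast - z_{t+1}]^2 \leq 2B_\psi(z_{t+1};z_\ast)$, converts the bound into $\tfrac{\eta L}{c}\,B_\psi(z_{t+1};z_t) + \eta L c\,B_\psi(z_{t+1};z_\ast)$.

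The choice $c = \eta L$ makes the first coefficient exactly $1$ and the second equal to $\eta^2 L^2$; the hypothesis $\eta \leq \alpha/L^2$ then forces $\eta^2 L^2 \leq \eta\alpha$, so the cross term is absorbed by the two available negative budgets $B_\psi(z_{t+1};z_t)$ and $\eta\alpha B_\psi(z_{t+1};z_\ast)$ and the residual is indeed nonpositive. This yields $(1+\eta\alpha)B_\psi(z_\ast;z_{t+1}) \leq B_\psi(z_\ast;z_t)$, after which a one-line induction delivers the stated rate $\bigl(\tfrac{1}{1+\eta\alpha}\bigr)^{t} B_\psi(z_\ast;z_1)$. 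I expect the calibration inside Young's inequality---tuning $c$ so the smoothness error is distributed exactly between the $B_\psi(z_{t+1};z_t)$ and $\eta\alpha B_\psi(z_{t+1};z_\ast)$ terms---to be the main obstacle, since it is precisely this balancing that pins down the threshold $\eta \leq \alpha/L^2$; every other step is a direct invocation of the supporting lemmas. The one routine point to confirm is that $\nabla g$ and $\nabla\psi$ are well defined at $z_\ast$ and $z_{t+1}$, which holds because $z_\ast,z_{t+1} \in \interior\dom\psi$ under the hypothesis and Assumptions~\ref{well-defined}--\ref{vi-assumption}.
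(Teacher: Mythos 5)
Your proposal is correct and follows essentially the same route as the paper's proof: Lemma~\ref{one-step} at $z = z_\ast$, adding and subtracting $\eta F(z_{t+1})$ to invoke Lemma~\ref{sol-ineq}, then generalized Cauchy--Schwarz, Young's inequality (the paper's fixed split $\tfrac{1}{2}\norm[z_t-z_{t+1}]^2 + \tfrac{\eta^2L^2}{2}\norm[z_\ast-z_{t+1}]^2$ corresponds exactly to your choice $c = \eta L$), and $1$-strong convexity of $\psi$ to absorb the error terms under $\eta^2 L^2 \leq \eta\alpha$. The only cosmetic difference is that you move $-\eta\alpha B_\psi(z_\ast;z_{t+1})$ to the left-hand side before bounding the cross term rather than after, which changes nothing.
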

}

\begin{proof}

\begin{align*}
 &B_\psi(z_\ast;z_{t+1}) \leq B_\psi(z_\ast;z_t) -B_\psi(z_{t+1};z_t) + \inner{\eta F(z_t) + \eta \alpha \nabla g(z_{t+1})}{z_\ast-z_{t+1}} \\
    &=  B_\psi(z_\ast;z_t) -B_\psi(z_{t+1};z_t) + \inner{\eta F(z_t) -\eta F(z_{t+1})}{z_\ast-z_{t+1}}
    +\inner{\eta F(z_{t+1}) + \eta \alpha \nabla g(z_{t+1})}{z_\ast-z_{t+1}} \\
&\leq  B_\psi(z_\ast;z_t) -B_\psi(z_{t+1};z_t) + \inner{\eta F(z_t) -\eta F(z_{t+1})}{z_\ast-z_{t+1}} -\eta \alpha \left(B_\psi(z_{t+1};z_\ast) +B_\psi(z_\ast;z_{t+1})\right)\\
&\leq  B_\psi(z_\ast;z_t) -B_\psi(z_{t+1};z_t) + \eta L \norm[z_t-z_{t+1}]\norm[z_\ast-z_{t+1}] -\eta \alpha \left(B_\psi(z_{t+1};z_\ast) +B_\psi(z_\ast;z_{t+1})\right) \\
&\leq B_\psi(z_\ast;z_t) -B_\psi(z_{t+1};z_t) + \frac{1}{2}\norm[z_t-z_{t+1}]^2+\frac{\eta^2 L^2}{2}\norm[z_\ast-z_{t+1}]^2 -\eta \alpha \left(B_\psi(z_{t+1};z_\ast) +B_\psi(z_\ast;z_{t+1})\right) \\
&\leq B_\psi(z_\ast;z_t) +\eta^2 L^2 B_\psi(z_{t+1};z_\ast) -\eta \alpha \left(B_\psi(z_{t+1};z_\ast) +B_\psi(z_\ast;z_{t+1})\right) \\
&\overset{\eta^2 L^2\leq \eta\alpha}{\leq}B_\psi(z_\ast;z_t)-\eta\alpha B_\psi(z_\ast;z_{t+1}).
\end{align*}
The first inequality follows from Lemma \ref{one-step} and the  second inequality from Lemma \ref{sol-ineq}; the third inequality by the generalized Cauchy-Schwarz inequality and the smoothness of $F$; the fourth inequality by elementary inequality $ab \leq \frac{\rho a^2}{2} +\frac{b^2}{2 \rho}\quad \forall \rho >0$;
and the fifth inequality by the strong convexity of $\psi$ since $\frac{1}{2}\norm[x-y]^2 \leq B_\psi(x;y)$.
Therefore altogether we have
\[
B_\psi(z_\ast;z_{t+1}) \leq \frac{B_\psi(z_\ast;z_t)}{1 +\eta \alpha}.
\]
Iterating the inequality yields the result.
\end{proof}

\begin{corollary}\label{cor:linear}
Under the same assumptions as Theorem \ref{linear}, if
$g$ is $\mu$-strongly convex relative to $\psi$ and $\psi$ is $\mu_\psi$ strongly convex, then if $\eta \leq \frac{\alpha \mu }{L^2} $, Algorithm \ref{alg} guarantees
\[
B_\psi(z_\ast;z_{t+1}) \leq \left(\frac{1}{1 +\eta \mu\alpha}\right)^{t} B_\psi(z_\ast;z_1).
\]

\end{corollary}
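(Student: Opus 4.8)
The plan is to re-run the proof of Theorem~\ref{linear} essentially verbatim, but to carry the two strong-convexity constants $\mu$ (for the relative strong convexity of $g$ with respect to $\psi$) and $\mu_\psi$ (for the strong convexity of $\psi$) through each step rather than setting them to $1$. Both supporting lemmas generalize transparently, and the only genuine work is checking that the constants recombine to give the stepsize condition $\eta \leq \nicefrac{\alpha\mu}{L^2}$ (under the normalization $\mu_\psi = 1$; in full generality a factor $\mu_\psi^2$ appears) together with the contraction factor $\nicefrac{1}{1+\eta\mu\alpha}$.

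First I would revisit Lemma~\ref{sol-ineq}. Its proof is unchanged up to the line where the $1$-relative strong convexity of $g$ is invoked; replacing that inequality by the $\mu$-relative version $\inner{\nabla g(z)-\nabla g(z_\ast)}{z-z_\ast}\geq \mu\inner{\nabla\psi(z)-\nabla\psi(z_\ast)}{z-z_\ast}$ yields the generalized bound $\inner{\eta F(z)+\eta\alpha\nabla g(z)}{z_\ast-z}\leq -\eta\alpha\mu\bigl(B_\psi(z;z_\ast)+B_\psi(z_\ast;z)\bigr)$. Lemma~\ref{one-step} needs no change, since it uses only the optimality characterization of the proximal step.

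Then I would follow the main chain of Theorem~\ref{linear}. Starting from Lemma~\ref{one-step} with $z=z_\ast$, I split $\eta F(z_t)=\eta(F(z_t)-F(z_{t+1}))+\eta F(z_{t+1})$, apply the generalized Lemma~\ref{sol-ineq} (with $z=z_{t+1}$) to the $F(z_{t+1})+\alpha\nabla g(z_{t+1})$ term to produce $-\eta\alpha\mu(B_\psi(z_{t+1};z_\ast)+B_\psi(z_\ast;z_{t+1}))$, and bound the residual inner product by $\eta L\norm[z_t-z_{t+1}]\,\norm[z_\ast-z_{t+1}]$ via generalized Cauchy--Schwarz and $L$-smoothness of $F$.

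The hard part will be the bookkeeping in the Young-inequality step, which is where $\mu_\psi$ enters and where the stepsize condition is forced. I would apply $ab\leq \tfrac{\rho}{2}a^2+\tfrac{1}{2\rho}b^2$ with $a=\norm[z_t-z_{t+1}]$ and $b=\eta L\norm[z_\ast-z_{t+1}]$, choosing $\rho=\mu_\psi$ precisely so that the term $\tfrac{\mu_\psi}{2}\norm[z_t-z_{t+1}]^2$ is absorbed by $-B_\psi(z_{t+1};z_t)\leq -\tfrac{\mu_\psi}{2}\norm[z_{t+1}-z_t]^2$ (strong convexity of $\psi$). Converting the surviving $\tfrac{\eta^2L^2}{2\mu_\psi}\norm[z_\ast-z_{t+1}]^2$ into a Bregman divergence by strong convexity again gives a coefficient $\tfrac{\eta^2L^2}{\mu_\psi^2}$ on $B_\psi(z_{t+1};z_\ast)$. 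Imposing $\eta\leq \nicefrac{\alpha\mu\mu_\psi^2}{L^2}$ (equivalently $\tfrac{\eta^2L^2}{\mu_\psi^2}\leq \eta\alpha\mu$) makes this term cancel against $-\eta\alpha\mu B_\psi(z_{t+1};z_\ast)$, leaving the recursion $(1+\eta\alpha\mu)B_\psi(z_\ast;z_{t+1})\leq B_\psi(z_\ast;z_t)$; iterating $t$ times delivers the claimed geometric rate. With $\mu_\psi=1$ this recovers exactly the stated stepsize $\eta\leq\nicefrac{\alpha\mu}{L^2}$.
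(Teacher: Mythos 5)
Your proof is correct, but it takes a genuinely different route from the paper's. You re-run the entire argument of Theorem \ref{linear}, generalizing Lemma \ref{sol-ineq} to the $\mu$-relatively-strongly-convex case (and correctly noting that Lemma \ref{one-step} is untouched, since it only uses prox optimality), carrying $\mu$ and $\mu_\psi$ through the Cauchy--Schwarz, Young, and strong-convexity steps. The paper instead proves the corollary by a short rescaling reduction: it sets $\bar{\psi} = \nicefrac{\psi}{\mu_\psi}$ and $\bar{g} = \nicefrac{g}{\mu\mu_\psi}$, checks that $\bar{\psi}$ is $1$-strongly convex and that $\bar{g}$ is $1$-strongly convex relative to $\bar{\psi}$, and observes that the update of Algorithm \ref{alg} is literally the same optimization problem when rewritten with $\bar{\eta} = \nicefrac{\eta}{\mu_\psi}$ and $\bar{\alpha} = \mu\mu_\psi\alpha$; Theorem \ref{linear} is then invoked as a black box, and since $B_{\bar{\psi}} = \nicefrac{B_\psi}{\mu_\psi}$ and $\bar{\eta}\bar{\alpha} = \eta\mu\alpha$, the stated contraction factor follows. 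The reduction is shorter, avoids re-verifying the inequality chain, and makes transparent that the corollary is a renormalization of the theorem rather than a stronger statement; your direct derivation is longer but self-contained and shows exactly where each constant enters. Notably, both routes produce the same effective stepsize threshold $\eta \leq \nicefrac{\alpha\mu\mu_\psi^2}{L^2}$ (the paper's reduction requires $\bar{\eta} \leq \nicefrac{\bar{\alpha}}{L^2}$, which unwinds to exactly this), so the $\mu_\psi^2$ factor you flag is implicitly present in the paper's argument as well; the stated condition $\eta \leq \nicefrac{\alpha\mu}{L^2}$ is sufficient because Assumption \ref{well-defined}, inherited from Theorem \ref{linear}, already normalizes $\psi$ to be $1$-strongly convex, so one may take $\mu_\psi \geq 1$.
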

\begin{proof}
Observe that $\bar{\psi}=\frac{\psi}{\mu_\psi}$ is $1$-strongly convex and $\bar{g} = \frac{g}{\mu \mu_\psi}$ is $1$-strongly convex relative to $\bar{\psi}$,
\begin{align}
    \inner{\nabla \bar{g}(x)-\nabla \bar{g}(y)}{x-y} &= \frac{1}{\mu \mu_\psi}\inner{\nabla g(x)-\nabla g(y)}{x-y} \\
    &\geq \frac{1}{\mu_\psi}\inner{\nabla \psi(x)-\nabla \psi(y)}{x-y} \\
    &= \inner{\nabla \bar{\psi}(x)-\nabla \bar{\psi}(y)}{x-y}.
\end{align}
Rewriting the update of Algorithm \ref{alg} in terms of $\bar{g}$ and $\bar{\psi}$ gives
\begin{align*}
    &\underset{{z\in \mathcal{Z}}}{\arg \min} \, \eta\left(\inner{F(z_t)}{z}+\alpha g(z)\right) +B_\psi(z;z_t) \\
    \Leftrightarrow & \, \underset{{z\in \mathcal{Z}}}{\arg \min} \, \eta\left(\inner{F(z_t)}{z}+\frac{\alpha \mu \mu_\psi}{\mu\mu_\psi} g(z)\right) +\frac{\mu_\psi}{\mu_\psi}B_\psi(z;z_t)\\
    \Leftrightarrow & \, \underset{{z\in \mathcal{Z}}}{\arg \min} \, \frac{\eta}{\mu_\psi}\left(\inner{F(z_t)}{z}+\frac{\alpha \mu \mu_\psi}{\mu\mu_\psi} g(z)\right) +\frac{1}{\mu_\psi}B_\psi(z;z_t)\\
    \Leftrightarrow & \, \underset{{z\in \mathcal{Z}}}{\arg \min} \, \bar{\eta}\left(\inner{F(z_t)}{z}+\alpha \mu \mu_\psi \bar{g}(z)\right) +B_{\bar{\psi}}(z;z_t)\\
    \Leftrightarrow & \, \underset{{z\in \mathcal{Z}}}{\arg \min} \, \bar{\eta}\left(\inner{F(z_t)}{z}+\bar{\alpha}\bar{g}(z)\right) +B_{\bar{\psi}}(z;z_t).
\end{align*}
The result follows from Theorem \ref{linear} with stepsize $\bar{\eta} = \frac{\eta}{\mu_\psi}$ and $\bar{\alpha} = \mu \mu_\psi \alpha$.

\end{proof}

\subsection{Equivalence between MMD and MD}

In this section we show that MMD is equivalent to mirror descent (MD) with a different stepsize when an extra regularized loss is added.  
In the game context this implies that MMD can be implemented as mirror descent ascent on the regularized game with a particular stepsize.

\begin{proposition}\label{prop:mmd-to-md}
Magnetic mirror descent updates (\ref{mmd-update}, \ref{mmd-update2}) are equivalent to the following updates respectively (where the second update uses a magnet $z'$):
\begin{align}
        &z_{t+1} = \underset{{z\in \mathcal{Z}}}{\arg \min} \, \bar{\eta}\inner{F(z_t)+\alpha\nabla \psi(z_t)}{z} +B_\psi(z;z_t),\label{eq:mda}\\
        &z_{t+1} = \underset{{z\in \mathcal{Z}}}{\arg \min} \, \bar{\eta}\inner{F(z_t)+\alpha\nabla_{z_t}B_\psi(z_t;z')}{z} +B_\psi(z;z_t),\label{eq:mda2}
\end{align}
with stepsize $\bar{\eta}=\frac{\eta}{1+\eta \alpha}$.
\end{proposition}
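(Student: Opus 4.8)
The plan is to show that, in each of the two pairs, the two minimization objectives differ only by multiplication by a strictly positive constant and by an additive term that does not depend on the optimization variable $z$; since neither operation alters the argmin, the updates must produce the same iterate $z_{t+1}$. Both minimizers are well defined and lie in $\interior \dom \psi$ by Assumption \ref{well-defined}, so it suffices to manipulate the objectives formally. The conceptual content is simply that folding the strongly convex magnet $\alpha\psi$ into the proximal step is equivalent to linearizing that magnet at the current iterate $z_t$ while shrinking the stepsize from $\eta$ to $\bar\eta = \eta/(1+\eta\alpha)$.

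For the first pair, I would expand the proximal term via $B_\psi(z;z_t) = \psi(z) - \psi(z_t) - \inner{\nabla\psi(z_t)}{z - z_t}$ inside the MMD objective (\ref{mmd-update}) and collect the two copies of $\psi(z)$: the explicit magnet term $\eta\alpha\psi(z)$ combines with the $\psi(z)$ coming from the divergence to give $(1+\eta\alpha)\psi(z)$. Discarding the constants $-\psi(z_t)$ and $\inner{\nabla\psi(z_t)}{z_t}$ and dividing through by the positive factor $1+\eta\alpha$, the objective becomes $\bar\eta\inner{F(z_t)}{z} + \psi(z) - \frac{1}{1+\eta\alpha}\inner{\nabla\psi(z_t)}{z}$. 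The remaining step is to recognize this as the objective of (\ref{eq:mda}): expanding (\ref{eq:mda}) and again dropping the constants from $B_\psi(z;z_t)$, the coefficient of $\inner{\nabla\psi(z_t)}{z}$ is $\bar\eta\alpha - 1$, and the elementary identity $\bar\eta\alpha - 1 = \eta\alpha/(1+\eta\alpha) - 1 = -1/(1+\eta\alpha)$ makes the two linear-in-$z$ parts agree term by term, while the $\psi(z)$ terms already match. This establishes the equivalence of (\ref{mmd-update}) and (\ref{eq:mda}).

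For the second pair the only change is that the magnet is $g = B_\psi(\cdot;z')$ rather than $\psi$. I would use that $B_\psi(z;z') = \psi(z) - \psi(z') - \inner{\nabla\psi(z')}{z-z'}$ shares its second-order part with $\psi$, so expanding $\eta\alpha B_\psi(z;z')$ again contributes $\eta\alpha\psi(z)$ together with one extra linear term $-\eta\alpha\inner{\nabla\psi(z')}{z}$ (the rest being constant in $z$). Running the identical collect-and-rescale computation then produces exactly the contribution $-\bar\eta\alpha\inner{\nabla\psi(z')}{z}$, which is precisely what appears in (\ref{eq:mda2}) once one substitutes $\nabla_{z_t}B_\psi(z_t;z') = \nabla\psi(z_t) - \nabla\psi(z')$ into that update. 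The computation is routine; the only points requiring care are the bookkeeping that cleanly separates the $z$-dependent terms from the additive constants (so that the rescaling by the positive factor $1+\eta\alpha$ is legitimate and argmin-preserving), and keeping straight that the Bregman magnet is differentiated in its \emph{first} argument, evaluated at $z_t$. There is no genuine obstacle beyond this careful term-tracking.
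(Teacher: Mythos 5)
Your proposal is correct and follows essentially the same route as the paper's proof: expand the Bregman proximal term, merge the magnet contribution into a single $(1+\eta\alpha)\psi(z)$ term, rescale the objective by the positive constant $\frac{1}{1+\eta\alpha}$, and use the identity $\frac{\eta\alpha}{1+\eta\alpha}-1=-\frac{1}{1+\eta\alpha}$ (together with $\nabla_{z_t}B_\psi(z_t;z')=\nabla\psi(z_t)-\nabla\psi(z')$ for the magnet case) to identify the result with the mirror descent update at stepsize $\bar\eta$. The only cosmetic difference is that the paper writes this as a one-directional chain of equivalent $\arg\min$ rewrites ending back in Bregman form, whereas you expand both objectives and match $z$-dependent coefficients; the underlying computation is identical.
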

\begin{proof}
We begin by proving the first equivalence, between equation (\ref{mmd-update}) and (\ref{eq:mda}):
\begin{align*}
     &z_{t+1} = \underset{{z\in \mathcal{Z}}}{\arg \min} \, \eta\left(\inner{F(z_t)}{z}+\alpha \psi(z)\right) +B_\psi(z;z_t)\\
    \Leftrightarrow &z_{t+1} = \underset{{z\in \mathcal{Z}}}{\arg \min} \, \inner{\eta F(z_t)-\nabla \psi(z_t)}{z} +  (1+\eta \alpha) \psi(z)\\
     \Leftrightarrow &  z_{t+1} = \underset{{z\in \mathcal{Z}}}{\arg \min} \,\inner{\frac{\eta F(z_t)-\nabla \psi(z_t)}{1+\eta \alpha}}{z} +\psi(z)\\
     \Leftrightarrow &  z_{t+1} = \underset{{z\in \mathcal{Z}}}{\arg \min} \,\inner{\frac{\eta F(z_t)+\eta \alpha\nabla  \psi(z_t)}{1+\eta \alpha}-\nabla \psi(z_t)}{z} +\psi(z)\\
     \Leftrightarrow &  z_{t+1} = \underset{{z\in \mathcal{Z}}}{\arg \min} \,\inner{ \bar{\eta}\left(F(z_t)+ \alpha\nabla  \psi(z_t)\right)-\nabla \psi(z_t)}{z} +\psi(z)\\
     \Leftrightarrow & z_{t+1} = \underset{{z\in \mathcal{Z}}}{\arg \min} \, \bar{\eta}\inner{F(z_t)+\alpha \nabla \psi(z_t)}{z} +B_\psi(z;z_t).
\end{align*}
The second equivalence follows from similar steps:
\begin{align*}
    &z_{t+1} = \underset{{z\in \mathcal{Z}}}{\arg \min} \, \eta\left(\inner{F(z_t)}{z}+\alpha B_\psi(z;z')\right) +B_\psi(z;z_t)\\
    \Leftrightarrow &z_{t+1} = \underset{{z\in \mathcal{Z}}}{\arg \min} \, \inner{\eta F(z_t)-\eta \alpha \nabla \psi(z')-\nabla \psi(z_t)}{z} +(1+\eta \alpha)\psi(z)\\
     \Leftrightarrow &  z_{t+1} = \underset{{z\in \mathcal{Z}}}{\arg \min} \,\inner{\frac{\eta F(z_t)-\eta \alpha \nabla \psi(z')-\nabla \psi(z_t)}{1+\eta \alpha}}{z} +\psi(z)\\
     \Leftrightarrow &  z_{t+1} = \underset{{z\in \mathcal{Z}}}{\arg \min} \,\inner{\frac{\eta F(z_t)-\eta \alpha \nabla \psi(z')+\eta \alpha \nabla \psi(z_t)}{1+\eta \alpha}-\nabla \psi(z_t)}{z} +\psi(z)\\
     \Leftrightarrow &  z_{t+1} = \underset{{z\in \mathcal{Z}}}{\arg \min} \,\inner{\bar{\eta} \left(F(z_t)+\alpha \nabla \psi(z_t)- \alpha \nabla \psi(z')\right)-\nabla \psi(z_t)}{z} +\psi(z)\\
     \Leftrightarrow &  z_{t+1} = \underset{{z\in \mathcal{Z}}}{\arg \min} \,\inner{\bar{\eta} \left(F(z_t)+\alpha \nabla_{z_t}B_\psi(z_t;z')\right)-\nabla \psi(z_t)}{z} +\psi(z)\\
     \Leftrightarrow &z_{t+1} = \underset{{z\in \mathcal{Z}}}{\arg \min} \, \bar{\eta}\inner{F(z_t)+\alpha\nabla_{z_t}B_\psi(z_t;z')}{z} +B_\psi(z;z_t)..
\end{align*}
\end{proof}

\subsection{Negative Entropy MMD Example} \label{sec:neg_ent}

\begin{figure}[H]
    \centering
    \includegraphics[width=\linewidth]{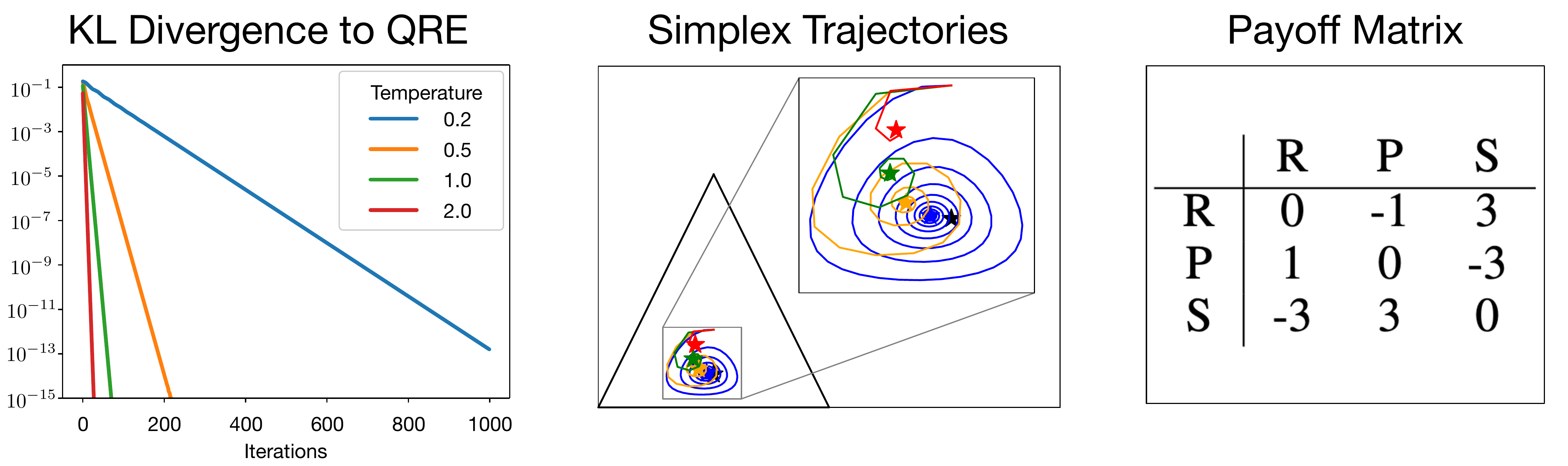}
    \caption{A visualization of convergence in perturbed RPS.}
    \label{fig:rps}
\end{figure}

An example showing the simplex trajectories of MMD applied to a small NFG is shown in Figure \ref{fig:rps}.

\subsection{Euclidean MMD Example} \label{sec:euc}

We discuss the Euclidean case for update~\eqref{mmd-update}~(update ~\ref{mmd-update2} is similar).
In the Eucldiean case were $\psi = \frac{1}{2}\norm_2^2$ we have that update~\eqref{mmd-update} reduces to
\[
    z_{t+1} = \Pi_{\mathcal{Z}}\left(\frac{z_t - \eta F(z_t)}{1+\eta \alpha}\right),
\]
where $\Pi_{\mathcal{Z}}$ denotes the Euclidean projection onto $\mathcal{Z}$. 
In the context of solving min max problems where $\psi = \psi_1 + \psi_2$, the sum of $\frac{1}{2}\norm_2^2$ then the descent-ascent updates of~(\ref{mmd-descent},\ref{mmd-ascent}) become
\begin{align*}
    x_{t+1} &= \Pi_{\mathcal{X}}\left(\frac{x_t-\eta \nabla_{x_t} f(x_t,y_t)}{1+\eta \alpha} \right), \\
    y_{t+1} &= \Pi_{\mathcal{Y}}\left(\frac{y_t+\eta \nabla_{y_t} f(x_t,y_t)}{1+\eta \alpha} \right).
\end{align*}
Note that our results don't require bounded constraints---in the unconstrained setting there would be no projection step.
By Theorem \ref{linear} the above iterations converge linearly to the solution of
\[
\underset{x \in \mathcal{X}}{\min} \, \underset{y \in \mathcal{Y}}{\max} \quad  \frac{\alpha}{2}\norm[x]_2^2 + f(x,y) - \frac{\alpha }{2}\norm[y]_2^2,
\]
provided $f$ is smooth in the sense that $F(x,y) = [\nabla_x f(x,y), -\nabla_y f(x,y)]^\top$ is smooth.

For example, in the 1-D case we have that the following unconstrained saddle point problem can be solved with Euclidean MMD:
\[
\min_{x\in \reals} \max_{y\in\reals} \frac{\alpha}{2}x^2 + (x-a)(y-b) - \frac{\alpha}{2}y^2.
\]
Where $a,b$ are constants. In this case $F(x,y) = [y-b, -(x-a)]^\top$, which is $1$-smooth. 
Therefore, with stepsize $\eta = \alpha$ the following update rule converges linearly to the solution:
\[
x_{t+1} = \frac{x_t -\alpha(y_t-b)}{1+\alpha^2}, \quad y_{t+1} = \frac{y_t+\alpha(x_t-a)}{1+\alpha^2}.
\]
See Figure \ref{fig:mmd_euclid} for a visualization with $a=b=1$.

\begin{figure}[H]
    \begin{subfigure}{.5\textwidth}
  \centering
  \includegraphics[width=\linewidth]{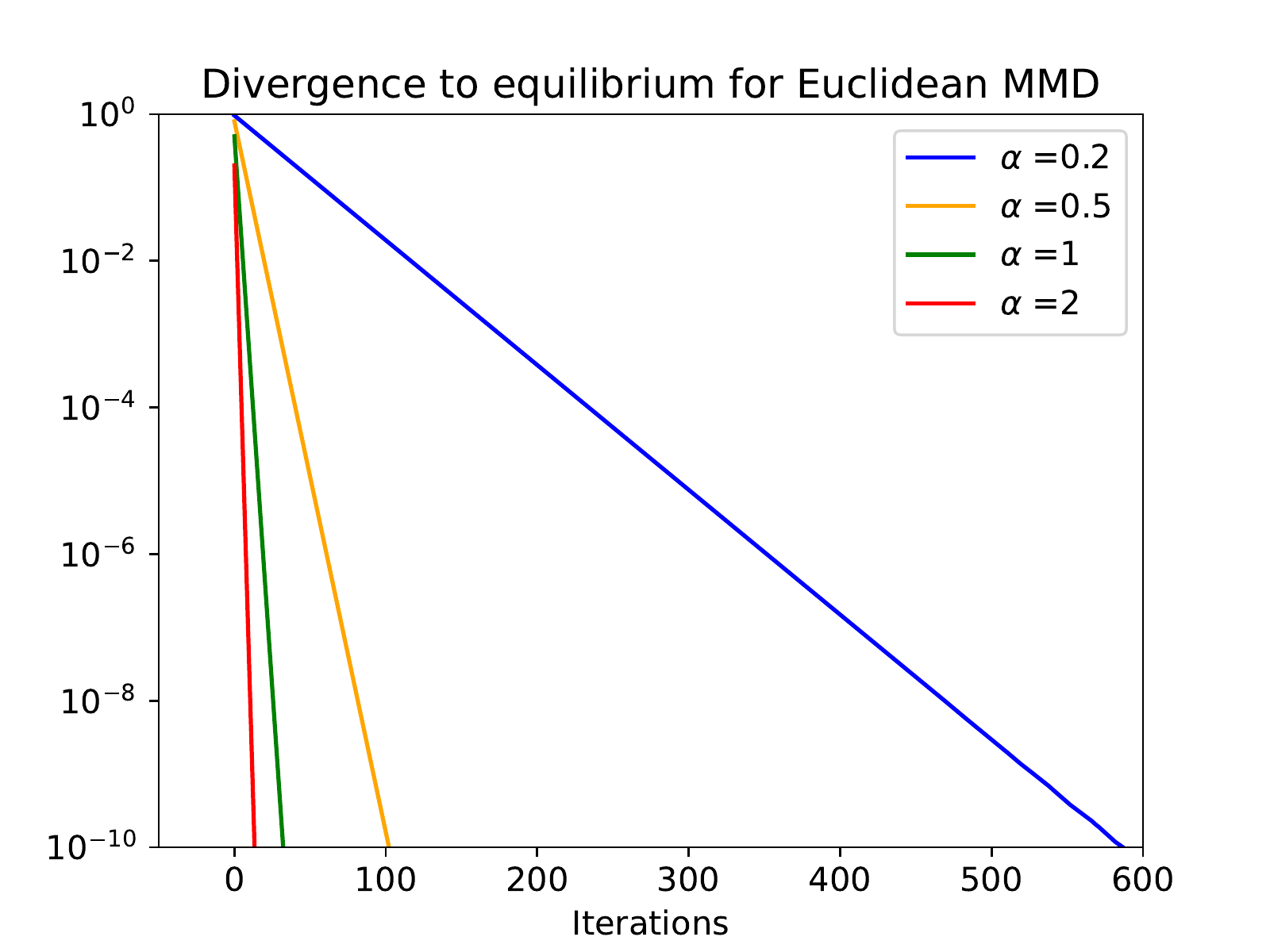}
\end{subfigure}%
\begin{subfigure}{.5\textwidth}
  \centering
  \includegraphics[width=\linewidth]{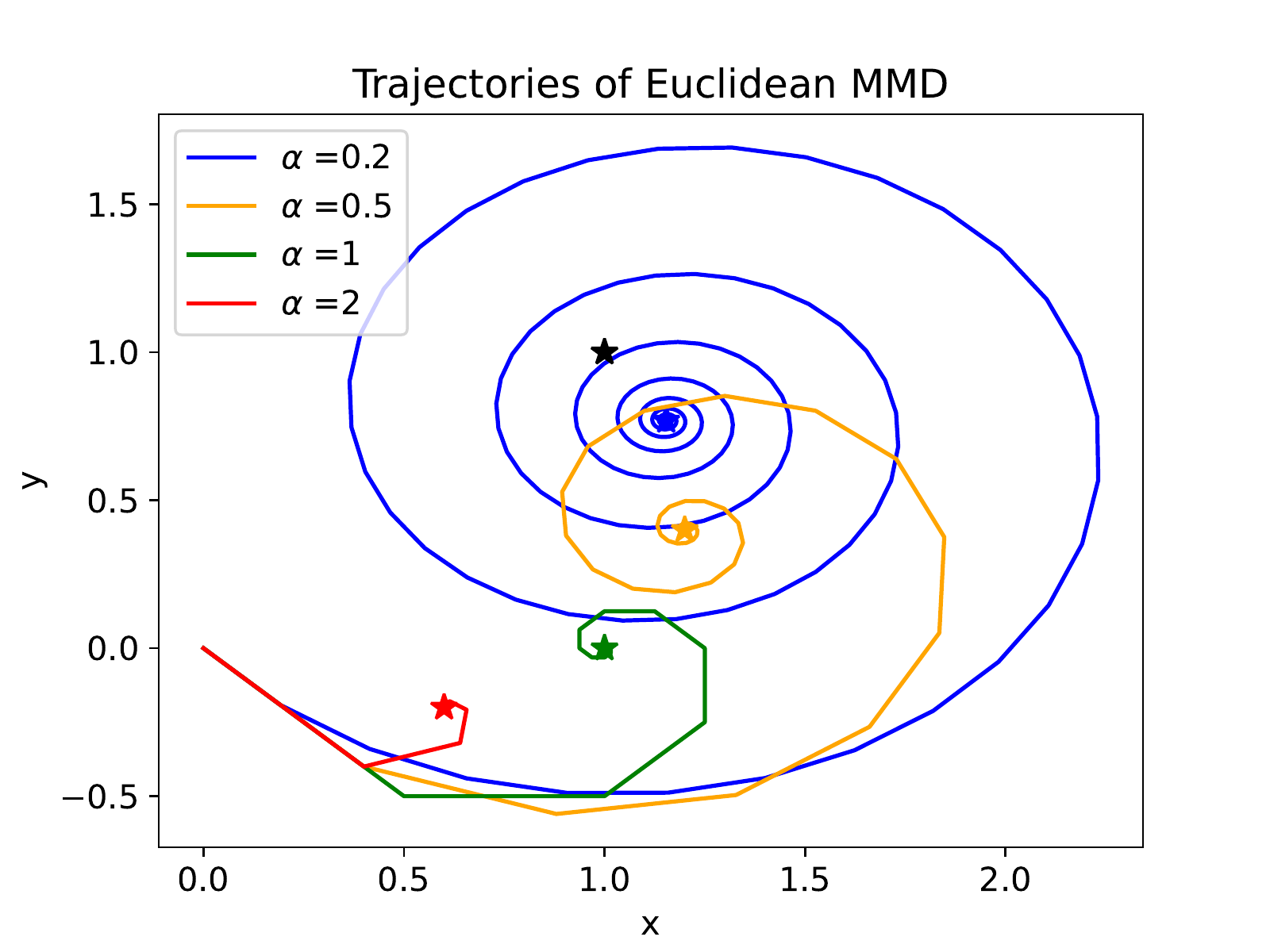}
\end{subfigure}
\caption{Convergence of Euclidean MMD for the saddle point problem $\min_{x\in \reals} \max_{y\in\reals} \frac{\alpha}{2}x^2 + (x-1)(y-1) - \frac{\alpha}{2}y^2$.}
\label{fig:mmd_euclid}
\end{figure}

\subsection{Bounding the Gap}\label{distance-to-gap}
Below we show how Theorem ~\ref{linear} can be used to guarantee linear convergence of the gap.
\begin{proposition}
Suppose the assumptions of Theorem \ref{linear} hold. Moreover, 
assume that $g$ is twice continuously differentiable over $\interior \dom \psi$ and $\mathcal{Z}$ is bounded. 
In that case, there exists a constant $C$ and a time step $t'$ such that, for any $t \geq t'$,
\[
  \theta_{gap}(z_{t}) = \sup_{z \in \mathcal{Z}} \inner{F(z_{t})+\alpha \nabla g(z_{t})}{z_{t}-z} \leq C \left(\sqrt{\frac{1}{1+\eta \alpha}}\right)^{t-t'} \sqrt{B_\psi(z_\ast;z_{t'})}. 
\]
\end{proposition}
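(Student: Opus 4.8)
The plan is to control the gap function by the distance $\norm[z_t - z_\ast]$ to the solution, and then invoke Theorem~\ref{linear} to convert that distance bound into the claimed linear rate. Write $G = F + \alpha \nabla g$ for the VI operator, so that $\theta_{gap}(z) = \sup_{z'\in\mathcal{Z}}\inner{G(z)}{z - z'}$ and, because $z_\ast$ solves $\VI(\mathcal{Z}, G)$, we have $\inner{G(z_\ast)}{z_\ast - z'} \leq 0$ for every $z' \in \mathcal{Z}$. First I would decompose, for arbitrary $z' \in \mathcal{Z}$,
\[
\inner{G(z_t)}{z_t - z'} = \inner{G(z_t) - G(z_\ast)}{z_t - z'} + \inner{G(z_\ast)}{z_t - z_\ast} + \inner{G(z_\ast)}{z_\ast - z'},
\]
and drop the last term using the VI inequality. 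Taking the supremum over $z' \in \mathcal{Z}$ and using the generalized Cauchy--Schwarz inequality together with the boundedness of $\mathcal{Z}$ (let $D = \sup_{z,z'\in\mathcal{Z}}\norm[z-z']$) yields
\[
\theta_{gap}(z_t) \leq D\,\norm[G(z_t) - G(z_\ast)]_\ast + \norm[G(z_\ast)]_\ast\,\norm[z_t - z_\ast].
\]

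The crux is to bound $\norm[G(z_t) - G(z_\ast)]_\ast$ by a multiple of $\norm[z_t - z_\ast]$, i.e.\ to show $G$ is Lipschitz on the region where the iterates live. The part $F$ is globally $L$-smooth by Assumption~\ref{vi-assumption}, so the only work is with $\alpha\nabla g$. This is where the extra hypotheses enter: Theorem~\ref{linear} gives $B_\psi(z_\ast; z_t) \to 0$, and since $\psi$ is $1$-strongly convex we get $\norm[z_t - z_\ast] \leq \sqrt{2 B_\psi(z_\ast; z_t)} \to 0$, so $z_t \to z_\ast \in \interior\dom\psi$. Consequently there is a compact set $K \subset \interior\dom\psi$ containing $z_\ast$ and a time $t'$ with $z_t \in K$ for all $t \geq t'$. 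Because $g$ is twice continuously differentiable on $\interior\dom\psi$, its Hessian is continuous, hence bounded on $K$, so $\nabla g$ is Lipschitz on $K$ with some constant $M$; thus $G$ is $(L + \alpha M)$-Lipschitz between any two points of $K$. Combining with the display above gives, for $t \geq t'$,
\[
\theta_{gap}(z_t) \leq \big(D(L + \alpha M) + \norm[G(z_\ast)]_\ast\big)\,\norm[z_t - z_\ast] \leq \tilde{C}\,\sqrt{B_\psi(z_\ast; z_t)},
\]
with $\tilde C = \sqrt{2}\big(D(L + \alpha M) + \norm[G(z_\ast)]_\ast\big)$.

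Finally I would iterate the one-step contraction from Theorem~\ref{linear}, $B_\psi(z_\ast; z_{s+1}) \leq (1+\eta\alpha)^{-1} B_\psi(z_\ast; z_s)$, from $t'$ to $t$ to obtain $B_\psi(z_\ast; z_t) \leq (1+\eta\alpha)^{-(t-t')} B_\psi(z_\ast; z_{t'})$, and substitute into the previous bound, producing the claimed estimate with $C = \tilde C$. The main obstacle is the Lipschitz step for $\nabla g$: unlike $F$, $\nabla g$ need not be globally Lipschitz (the Hessian of dilated or standard entropy blows up toward the boundary of the simplex), so the argument must be localized, which is exactly why the statement only holds for $t \geq t'$ and introduces a constant $C$ depending on the neighborhood $K$. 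A secondary point to handle carefully is the term $\norm[G(z_\ast)]_\ast\norm[z_t - z_\ast]$, which appears because $z_\ast$ may lie on the boundary of $\mathcal{Z}$ (so $G(z_\ast)\neq 0$ in general); it is finite since $z_\ast \in \interior\dom\psi$ and is harmless once absorbed into $C$.
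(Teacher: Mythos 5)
Your proposal is correct and follows essentially the same route as the paper's proof: the identical three-term decomposition of $\inner{G(z_t)}{z_t-z}$ with the VI inequality killing $\inner{G(z_\ast)}{z_\ast-z}$, the same localization of the iterates to a compact neighborhood of $z_\ast$ inside $\interior\dom\psi$ where the continuous Hessian of $g$ yields a local Lipschitz constant for $\nabla g$, and the same final conversion $\norm[z_t-z_\ast]\leq\sqrt{2B_\psi(z_\ast;z_t)}$ followed by iterating the contraction of Theorem~\ref{linear}. If anything, you are slightly more careful than the paper in carrying the factor $\alpha$ inside $G$ and the $\sqrt{2}$ inside the constant; just note that the compact set $K$ should be taken convex (e.g., a closed ball, as the paper does) so that the bounded-Hessian argument gives Lipschitzness of $\nabla g$ on $K$.
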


\begin{proof}
By Theorem \ref{linear} we know $z_t \to z_{\ast}$ where $\{z_t\}_{t\geq 1} \cup \{z_\ast\} \subseteq \interior \dom \psi$. Therefore, we have that $\{z_t\}_{t\geq 1} \cup \{z_\ast\}$ is eventually within a closed ball centered at $z_\ast$. 
That is, there exists $t'$ and a closed ball $B$ such that $\{z_t\}_{t\geq t'} \cup \{z_\ast\} \subseteq B \subseteq \interior \dom \psi$.
Since $B$ is compact and $\nabla^2 g$ is continuous over $B$, we have that $\nabla^2 g (z) $ is bounded on $B$. Therefore, there exists $L_B$ such that 
 $\norm[\nabla g(z') -\nabla g(z)]_\ast \leq L_B\norm[z-z']$ for any $z,z' \in B$.
 Setting $G = F + \nabla g$, we have that, for any $z,z' \in B$, $\norm[G(z)-G(z')]_\ast \leq \tilde{L}\norm[z-z']$ for $\tilde{L}=L+L_B$.
 
 Then for any $z \in \mathcal{Z}$, $t\geq t'$, denoting $x_\ast$ as the solution to $\VI(\mathcal{Z}, G)$, we have
\begin{align*}
    \inner{G(z_t)}{z_t-z} &= \inner{G(z_\ast)}{z_t-z} + \inner{G(z_t)-G(z_\ast)}{z_t-z}\\
    &= \underbrace{\inner{G(z_\ast)}{z_\ast-z}}_{\leq 0} + \inner{G(z_\ast)}{z_t -z_{\ast}} + \inner{G(z_t)-G(z_\ast)}{z_t-z}\\
    &\leq \norm[G(z_\ast)]_\ast \norm[z_t-z_\ast]+ \tilde{L}\norm[z_t-z_{\ast}]\norm[z_t-z]\\
    &\leq \left(\norm[G(z_\ast)]_\ast +\tilde{L}D\right)\norm[z_t-z_\ast]\\
    &\leq C\sqrt{B_\psi(z_{\ast};z_t)}\\
    &\leq C \left(\sqrt{\frac{1}{1+\eta \alpha}}\right)^{t-t'} \sqrt{B_\psi(z_\ast;z_{t'})}
\end{align*}
where $D$ is such that $\max_{z,z'\in \mathcal Z}\norm[z-z'] \leq D$ and $C = \norm[G(z_\ast)]_\ast+\tilde{L}D$.
The first inequality is by the generalized Cauchy-Schwarz inequality and the Lipschitz property of $G$.
The second inequality is by boundedness of $\mathcal Z$.
The third inequality is by the fact that $B_\psi(z_{\ast};z_t) \geq \frac{1}{2}\norm[z_{\ast}- z_t]^2$.
The fourth inequality is by applying Theorem~\ref{linear} inductively.
\end{proof}

Note that we have the following  well-known inequality between the saddle-point gap $$\xi(x,y) = \max_{\bar{y} \in \mathcal{Y}} \alpha g_1(x) + f(x,\bar{y})-\alpha g_2(\bar{y}) -\min_{\bar{x} \in \mathcal{X}} \alpha g_1(\bar{x}) + f(\bar{x},y) -\alpha g_2(y)$$ and  $$\theta_{gap}(z) = \sup_{\bar{z}\in\mathcal{Z}} \inner{F(z)+\alpha \nabla g(z)}{z-\bar{z}},$$
as shown below:
\begin{align*}
    &\xi(x,y) = \max_{\bar{y} \in \mathcal{Y}} \alpha g_1(x) + f(x,\bar{y})-\alpha g_2(\bar{y}) -\min_{\bar{x} \in \mathcal{X}} \alpha g_1(\bar{x}) + f(\bar{x},y)-\alpha g_2(y) \\
    &= \alpha g_1(x) + f(x,y')-\alpha g_2(y') -\left(\alpha g_1(x) + f(x,y)-\alpha g_2(y)\right) \\
    &+\left(\alpha g_1(x) + f(x,y)-\alpha g_2(y)\right) -\left(\alpha g_1(x') + f(x',y) -\alpha g_2(y)\right) \, \mbox{ for some pair } (x',y') \in \mathcal{X} \times \mathcal{Y} \\
    &\leq \inner{-\nabla f_y(x,y) +\alpha \nabla g_2(y)}{y-y'}
    + \inner{\nabla f_x(x,y) + \alpha \nabla g_1(x)}{x-x'}\\
    &= \inner{F(z) + \alpha \nabla g}{z-z'} \mbox{ for } z = (x,y) \mbox{ and } z' = (x',y')\\
    &\leq \theta_{gap}(z).
\end{align*}
Therefore Proposition \ref{distance-to-gap} gives a guarantee on the saddle-point gap $\xi(x,y)$.

\section{MMD for Logit-AQREs and MiniMaxEnt Equilibria} \label{sec:aqre_mmd}

By Proposition \ref{prop:3point}, the MMD update \eqref{mmd-update}, restated below, has fixed points corresponding to the solutions of $\VI(\mathcal{Z}, F+\nabla \psi)$:
\[
    z_{t+1} = \underset{{z\in \mathcal{Z}}}{\arg \min} \, \eta\left(\inner{F(z_t)}{z}+\alpha \psi(z)\right) +B_\psi(z;z_t).
\]
If $\mathcal{Z}$ is the cross-product of policy spaces for both players (cross product of sets of behavioral-form policies) and  $\psi$ is the sum of negative entropy over all decision points (information states), and $F$ includes the the negative q-values for both players, then the iteration above reduces to
\[
\pi_{t+1}(h_i) \propto [\pi_t(h_i) e^{\eta q_{\pi_t}(h_i)}]^{1 / (1+\eta\alpha)}
\]
with fixed points corresponding to 
\[
\forall i, \forall h_i, \pi_i(h_i) \in \text{arg max}_{\pi_i'(h_i)} \mathbb{E}_{A \sim \pi_i'(h_i)} \left[q_{\pi}(h_i, A) + \alpha \mathcal{H}(\pi_i'(h_i))\right]
\]
or, equivalently, the solution to $\VI(\mathcal{Z}, F+\nabla \psi)$, which corresponds to a logit-AQRE.
If $F$ includes the negative MiniMaxEnt $Q$-values for both players, the fixed point instead corresponds to a MiniMaxEnt equilibrium.

\section{Experimental Domains}

For our experiments with normal-form games, we used No-Press Diplomacy stage games.
No-Press Diplomacy is a seven-player Markov game in which players compete to conquer Europe.
Because the game is a Markov game (which means that the game is fully observable but that the players move simultaneously), each turn of the game resembles a normal-form game.
We constructed the normal-form games that we used for our experiments by querying an open source value function~\citep{bakhtin2021no} in different circumstances for a two-player variant of the game, similarly to \citet{hugh_2022}.
These games have payoff matrices of shape $(50, 50)$ (game A), $(35, 43)$ (game B), $(50, 50)$ (game C), and $(4, 4)$ (game D).
We normalized the payoffs of each game to $[0, 1]$.

For our extensive-form games, we used the implementations of Kuhn Poker, 2x2 (and also 3x3) Abrupt Dark Hex, 4-Sided Liar's Dice, and Leduc Poker provided by OpenSpiel~\citep{openspiel}.
Kuhn poker~\citep{kuhn_poker} is a simplified poker game with three cards (J, Q, K).
It has 54 non-terminal histories (not counting chance nodes).
Abrupt Dark Hex is a variant of the classical board game Hex~\citep{hex}.
In Hex, two players take turns placing stones onto a board.
One player's goal is to create a path of its stones connecting the east end of the board with the west end, while the other player's goal is to do the same with the north end and south end.
Dark Hex is a variant in which players cannot see where their opponents are placing stones.
Abrupt Dark Hex is a variant of Dark Hex in which placing a stone in an occupied position results in a loss of turn.
The prefix $n$x$n$ describes the size of the board.
2x2 Abrupt Dark Hex has 471 non-terminal histories.
3x3 Abrupt Dark Hex has too many non-terminal histories to enumerate on our hardware.
Liar's Dice~\citep{liarsdice} is a dice game in which players privately roll dice and place bids based on the observed outcomes, similarly to poker games.
The prefix $n$-sided means that the players play with 4-sided dice.
4-Sided Liar's Dice has 8176 non-terminal histories (not counting chance nodes).
Leduc Poker~\citep{leduc} is a small poker game with three card values (J, Q, K), each of which have two instances in the deck.
It has 9300 non-terminal histories non-terminal histories (not counting chance nodes).

For our single-agent deep RL experiments, we use three Atari games~\citep{ale} and three Mujoco games~\citep{todorov2012mujoco}.
We selected these games because \citet{ppo_details} used them to benchmark an open source implementation of PPO.

\section{QRE Experiments}

\subsection{Full Feedback QRE Convergence Diplomacy}\label{sec:full-qre-nf}

\begin{figure}[H]
    \centering
    \includegraphics[width=\linewidth]{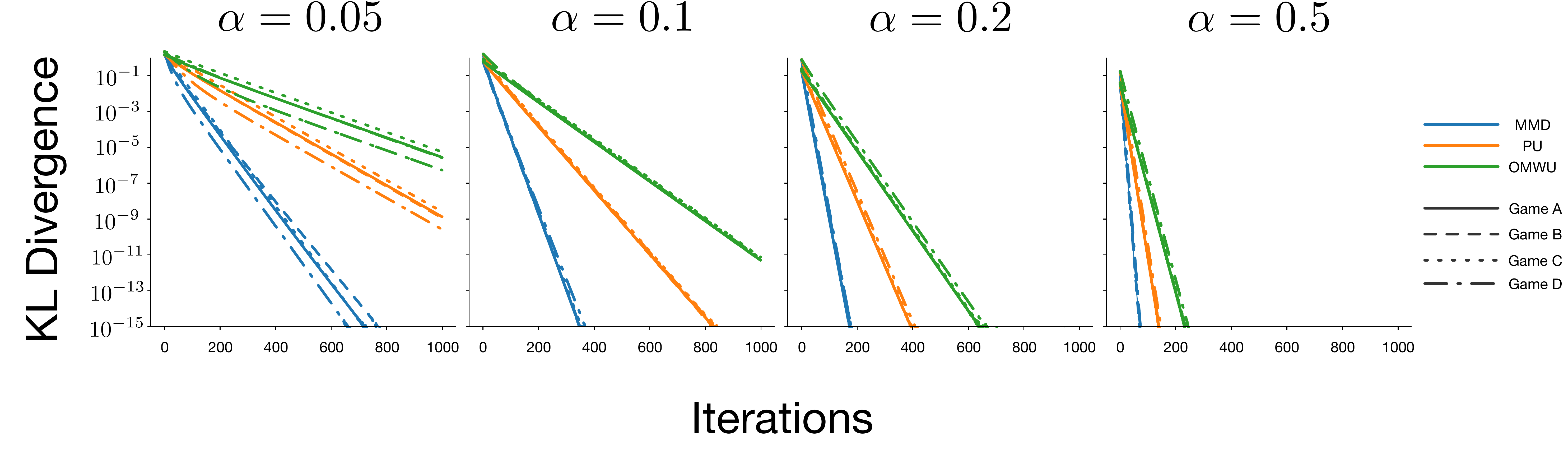}
    \caption{Solving for normal-form QREs in Diplomacy stage games with full feedback.}
    \label{fig:qre_nf}
\end{figure}

We perform various QRE experiments under full feedback for Diplomacy stage games.
Full feedback means that each player outputs a fully specified policy and receives its exact Q-values (given both players' policies) as feedback.
Both players then perform the update
\[
\pi_{t+1}(h_i) \propto [\pi_t(h_i) e^{\eta q_{\pi_{t}}(h_i)}]^{1 / (1+\eta\alpha)}.
\]

For our experiments, we set $\eta=\alpha$ (for each $\alpha$) for MMD, which is the maximal value that retains a linear convergence guarantee for normal-form games with a max payoff magnitude of one.
For PU and OMWU~\citep{cen2021fast}, we also used the maximal values that guarantee linear convergence.
We solved for the QRE for each game using \citeauthor{ling2018game}'s Newton's method approach.
We show iterations on the $x$-axis and $\text{KL}(\text{solution}, \text{iterate})$ on the $y$-axis.
We count each query to the oracle as an iterate, meaning that OMWU uses two iterates for every update (contrasting MMD and PU, which only use one).

The results of the experiment, found in Figure \ref{fig:qre_nf}, show that all three algorithms converge linearly with faster rates for larger values of alpha, as is guaranteed by theory.
We find that, for our Diplomacy games, MMD converges faster than PU and OMWU.
However, we found that all three algorithms also exhibited faster convergence with larger than theoretically allowed stepsizes.

\subsection{Black Box QRE Convergence Diplomacy} \label{sec:bbqre}
Our second set of experiments examine convergence to QREs for our Diplomacy stage games with black box feedback.
In this context, black box feedback means that each player $i$ outputs an  action $A_i$ sampled from its current policy and that player $i$ receives $\mathcal{R}(\cdot, A_i, A_{-i})$ (but not $A_{-i}$) as feedback.
One way to approach such a setting is to construct an unbiased estimate of the exact Q-values.
Letting $r$ be the observed reward
\[
\hat{q}_t(a_i) =
\begin{cases}
r / \pi_t(a_i) & \text{ if } A_i = a_i\\
0 & \text{otherwise}
\end{cases}
\]
is such an estimate.
To see that this is true, observe
\begin{align*}
\mathbb{E} [\hat{q}_t(a_i) \mid \pi_t]
&= \mathbb{E}_{A_{-i} \sim \pi_t}\left[\pi_t(a_i) \cdot \frac{\mathcal{R}(\cdot, a_i, A_{-i})}{\pi_t(a_i)} + \sum_{a_i' \neq a_i} \pi_t(a_i') \cdot 0\right]\\
&= \mathbb{E}_{A_{-i} \sim \pi_t}\left[\pi_t(a_i) \cdot \frac{\mathcal{R}(\cdot, a_i, A_{-i})}{\pi_t(a_i)}\right]\\
&= \mathbb{E}_{A_{-i} \sim \pi_t} \mathcal{R}(\cdot, a_i, A_{-i})\\
&= q_t(a_i).
\end{align*}

In Figure \ref{fig:qre_dist_sample}, we show results for each of MMD, PU and OMWU, with the exact Q-values $q_t$ replaced by the unbiased estimates $\hat{q}_t$.
For each algorithm, the stepsize at iteration $t$ was set to be equal to the maximal step size for which there exists an exponential convergence guarantee divided by $10 \sqrt{t}$.
In other words,
\[\eta_t = \frac{\eta}{10 \sqrt{t}}.\]
Each line is an average over 30 runs.
The bands depict estimates of 95\% confidence intervals computed using bootstrapping.
Although none of the algorithms possess existing black box convergence guarantees, we observe that they all exhibit convergent behavior empirically.
In terms of convergence speed, we observe that MMD compares favorably to PU and OMWU for $\alpha \in \{0.05, 0.1, 0.2\}$; however, for $\alpha=0.5$, OMWU performed the best, with the exception of game D.
It is likely that all algorithms could achieve better performance, as we did not perform much hyperparameter tuning.

\begin{figure}[H]
    \centering
    \includegraphics[width=\linewidth]{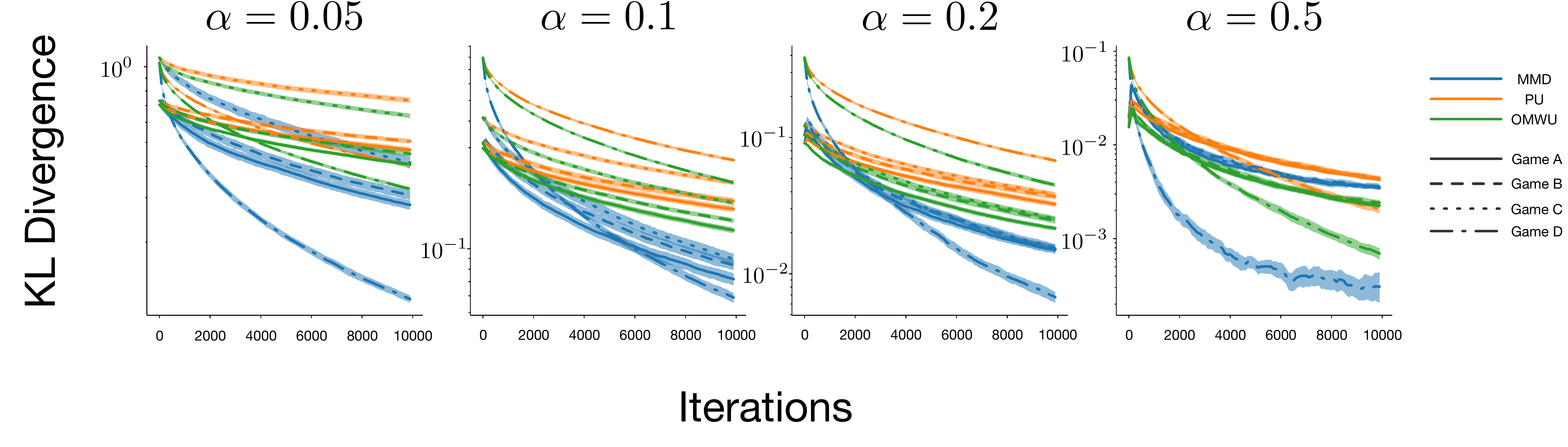}
    \caption{MMD, PU, and OMWU applied to Diplomacy stage games for QRE finding with black box sampling.
}
    \label{fig:qre_dist_sample}
\end{figure}

We also investigate the performance of other methods for estimating Q-values for the black box setting.
One such method uses an unbiased baseline to reduce variance~\citep{vrcfr,trevor_vr}.
The premise of this approach is the idea that any quantity that is zero in expectation can be subtracted from an unbiased Q-value estimate without introducing bias.
As a result, if the quantity is correlated with the estimator, subtracting it from the estimate can reduce variance ``for free''.
We call this quantity a baseline.
For our baseline, we used
\[
b_t(a_i) =
\begin{cases}
\tilde{q}_t(a_i) / \pi_t(a_i) - \tilde{q}_t(a_i) & \text{if } a_i = A_i\\
-\tilde{q}_t(a_i) & \text{otherwise}.
\end{cases}
\]
By a similar argument as above, this quantity is zero in expectation
\begin{align*}
\mathbb{E} [b_t(a_i) \mid \pi_t]
&= \pi_t(a_i) \cdot (\tilde{q}_t(a_i) / \pi_t(a_i) - \tilde{q}_t(a_i) ) - \sum_{a_i' \neq a_i} \pi_t(a_i') \cdot \tilde{q}_t(a_i)\\
&= \tilde{q}_t(a_i) - \pi_t(a_i) \tilde{q}_t(a_i) - (1 - \pi_t(a_i)) \cdot \tilde{q}_t(a_i)\\
&= \tilde{q}_t(a_i) - \tilde{q}_t(a_i) \\
&= 0.
\end{align*}
Also, if $\tilde{q}$ is close to $q$, our baseline will be correlated with $\hat{q}$.
Thus, it satisfies our desired criteria.
For $\tilde{q}$, we used a running estimate of the reward observed after selecting action $a_i$.
Specifically, every time action $a_i$ was selected, we updated
\[
\tilde{q}_t(a_i) = (1 - \tilde{\eta}) \tilde{q}_t(a_i) + \tilde{\eta} r.
\]
We used $\tilde{\eta}=1/2$, inspired by \citet{vrcfr}.

We also investigated the use of \textit{biased} Q-value estimates, as this is the setting that corresponds with function approximation.
For this approach, we plugged in $\tilde{q}$, as computed above, instead of the exact Q-values $q$.

\begin{figure}[H]
    \centering
    \includegraphics[width=\linewidth]{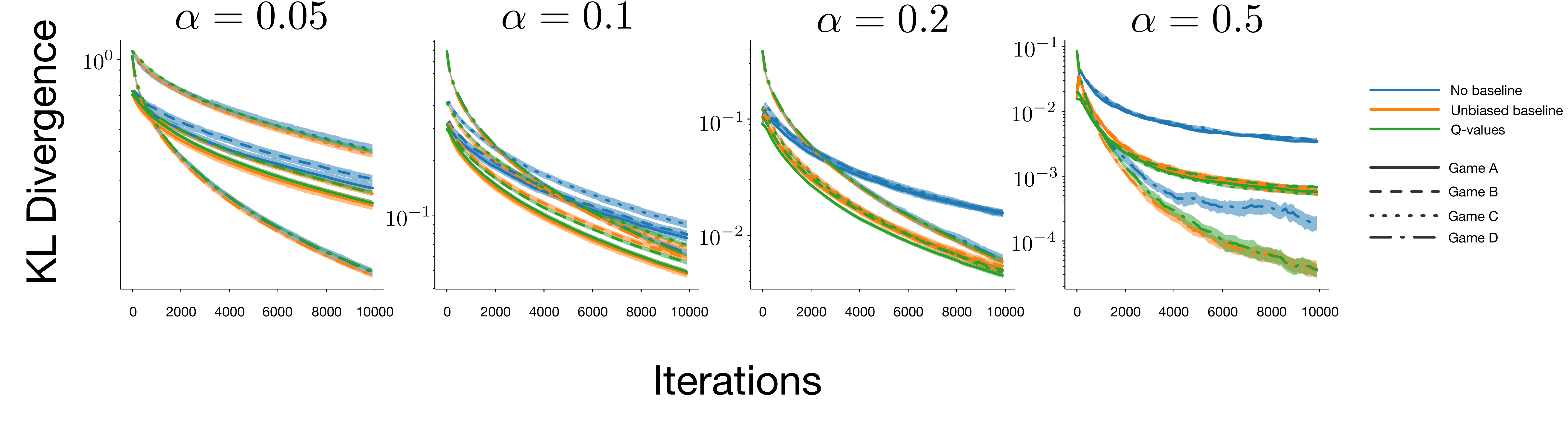}
    \caption{MMD with no baseline, an unbiased baseline, and (biased) Q-values applied to diplomacy stage games for QRE finding with black box sampling.}
    \label{fig:qre_dist_sample_comp}
\end{figure}

We show the results of the experiment if Figure \ref{fig:qre_dist_sample_comp}.
The column shows the temperature for the QRE.
The y-axis shows the KL divergence to the corresponding logit-QRE.
The x-axis shows the number of iterations.
For each algorithm, the step size at iteration $t$ was set to be equal to the maximal step size for which there exists an exponential convergence guarantee divided by $10 \sqrt{t}$.
Each line is an average over 30 runs.
The bands depict estimates of 95\% confidence intervals computed using bootstrapping.
Overall, we find that both using unbiased baselines and biased Q-value estimates appears to improve convergence speed.

\subsection{Full Feedback QRE Convergence EFGs} \label{sec:ffqre}

\begin{figure}[H]
    \centering
    \includegraphics[width=\linewidth]{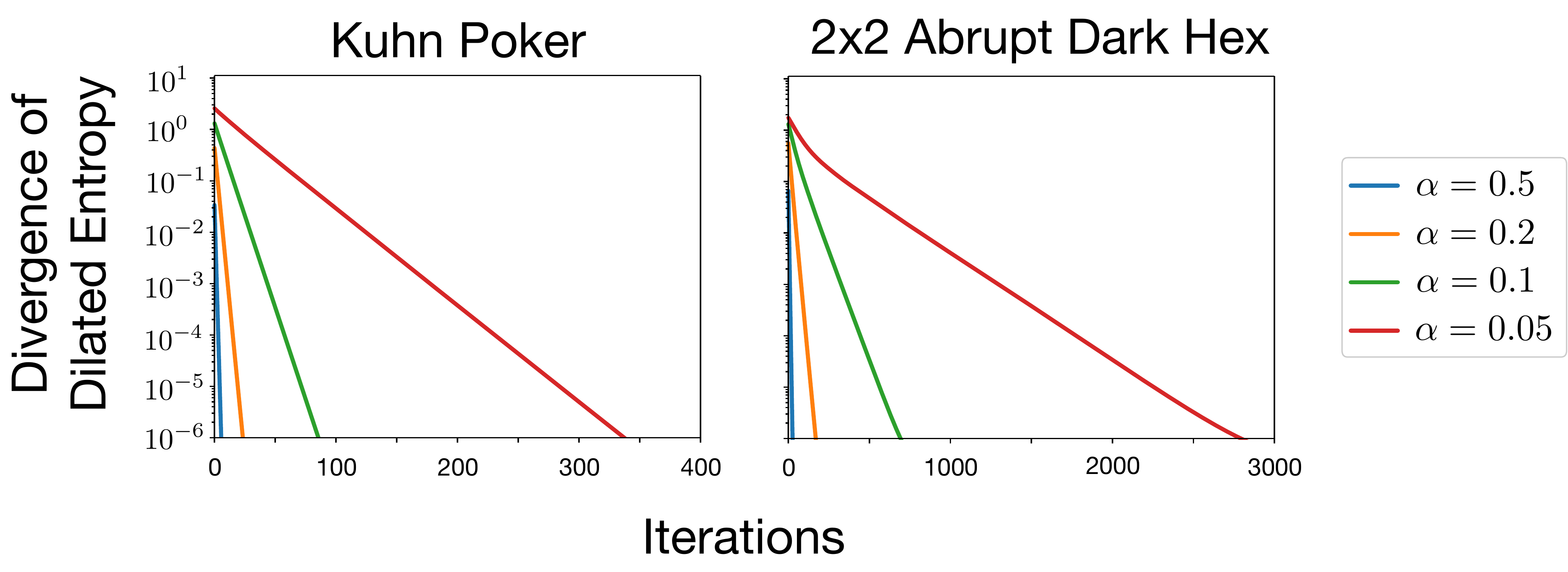}
    \caption{Convergence for normal-form QREs in EFGs, measured by divergence.}
    \label{fig:sf_qre_div}
\end{figure}

\begin{figure}[H]
    \centering
    \includegraphics[width=\linewidth]{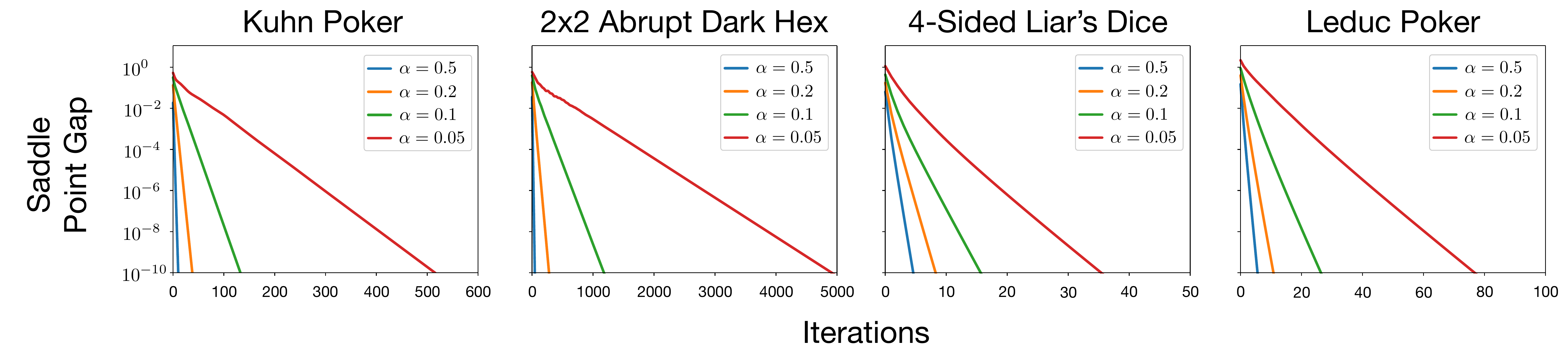}
    \caption{Convergence for normal-form QREs in EFGs, measured by saddle point gap.}
    \label{fig:sf_qre_gap}
\end{figure}

We perform several experiments for solving reduced normal-form logit QREs by using MMD over the sequence form with dilated entropy.
We use the descent-ascent updates 
\begin{align*}
        &x_{t+1} = \underset{{x\in \mathcal{X}}}{\arg \min} \, \eta\left(\inner{\nabla_{x_t} f(x_t,y_t)}{x}+\alpha \psi_1(x)\right) +B_{\psi_1}(x;x_t),\\ &y_{t+1} = \underset{{y\in \mathcal{Y}}}{\arg \max} \, \eta\left(\inner{\nabla_{y_t} f(x_t,y_t)}{y}-\alpha \psi_2(y)\right) -B_{\psi_2}(y;y_t).
\end{align*}
The method is full feedback since $\nabla_{x_t} f(x_t,y_t) = Ay_t$ and $\nabla_{y_t} f(x_t,y_t) = A^\top x_t$, where $A$ is the sequence form payoff matrix.
Note in the normal form setting $-Ay_t$ and $A^\top x_t $ are the Q-values for both players and the algorithm is the same as described in Section \ref{sec:full-qre-nf}.
We set the stepsize to be $\eta = \nicefrac{\alpha}{\left(\max_{ij}|A_{ij}|\right)^2}$, the largest possible allowed from Theorem \ref{linear}.
For more details on the sequence form algorithm, see Section \ref{sec:mmd-sequence}.

For Kuhn Poker and 2x2 Abrubt Dark Hex, we used Gambit~\citep{gambit, homotopy_qre} to compute the reduced normal-form QRE.
We check the convergence of MMD by plotting the sum of Bregman divergences with respect to dilated entropy $B_\psi(z_\ast;z_t) = B_{\psi_1}(x_\ast;x_t) +B_{\psi_1}(y_\ast;y_t)$, with respect to the solution $z_\ast = (x_\ast, y_\ast)$.
As predicted by Theorem~\ref{linear} we observer linear convergence with faster convergence for larger values of $\alpha$.

For 4-Sided Liar's Dice and Leduc Poker, the games were too large for Gambit~\citep{gambit,homotopy_qre} to compute the reduced normal-form QRE on our hardware.
Therefore, we check the convergence of MMD by plotting the saddle point gap $\xi(x_t,y_t)$ of the min max problem given by \citet{ling2018game},
\[
\xi(x_t,y_t) = \max_{\bar{y} \in \mathcal{Y}} \alpha \psi_1(x_t) + x_t^\top A\bar{y}-\alpha \psi_2(\bar{y}) -\min_{\bar{x} \in \mathcal{X}} \alpha \psi_1(\bar{x}) + \bar{x}^\top A y_t -\alpha \psi_2(y_t).
\]
Theorem~\ref{linear} guarantees that the gap will converge to zero.
Note the gap is zero if and only if at the solution and, by Proposition \ref{distance-to-gap}, the gap is also guaranteed to converge linearly.
In both 4-Sided Liar's Dice and Leduc Poker we observe linear convergence of the gap, with faster convergence for larger values of $\alpha$.
Additionally, due to the $O(\log(t))$ regret bound from ~\citet{duchi2010composite}, we have that the gap is guaranteed to converge at a rate of $O\left(\frac{\log(t)}{t}\right)$ for the average iterates of both players.

\subsection{Full Feedback AQRE Convergence EFGs} \label{sec:ffaqre}

Next, we investigate whether MMD can be made to converge to AQREs in extensive-form games.
For these experiments we applied MMD in behavioral form, as described in Section \ref{sec:aqre_mmd}.
Specifically, we computed $q_t(h_i)$ for each player $i$ and each information state $h_i$.
Then, we applied the update rule
\[
\pi_{t+1}(h_i) \propto [\pi_t(h_i) e^{\eta q_{\pi_t}(h_i)}]^{1 / (1 + \eta \alpha)}.
\]
for each player $i$ and information state $h_i$.
For each setting, we used
\[\eta = \frac{\alpha}{10}.\]

\begin{figure}[H]
    \centering
    \includegraphics[width=\linewidth]{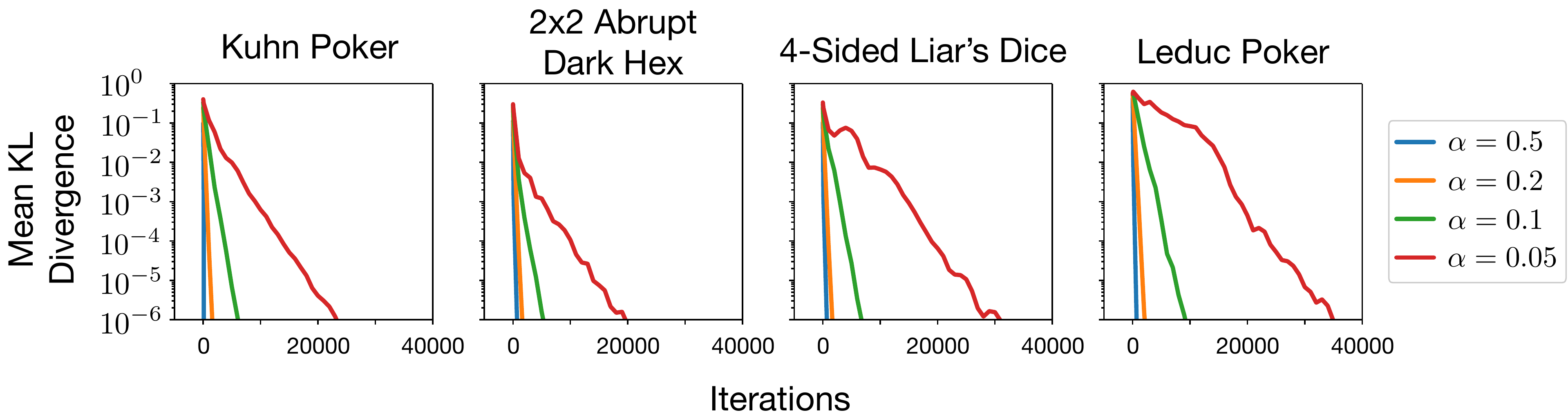}
    \caption{Solving for AQREs in EFGs.}
    \label{fig:aqre}
\end{figure}

We show the results in Figure \ref{fig:aqre}.
We measure convergence against solutions computed using Gambit \citep{gambit,aqre_homotopy}.
Despite a lack of proven convergence guarantees, we observe that MMD converges to the AQRE in each game, for each temperature.
While the convergence is not monotonic, it is roughly linear over large time scales.

\section{Exploitability Experiments}

Next, we investigate the convergence of MMD as a Nash equilibrium solver.
To induce convergence, in most of our experiments, we anneal the temperature of the regularization over time.

\subsection{Full Feedback Nash Convergence Diplomacy} \label{sec:ffndip}

In our full feedback Nash convergence Diplomacy experiments, we used
\[
\eta = \frac{1}{10}, \alpha_t = \frac{1}{5 \sqrt{t}}.
\]
We show the results of the experiment in Figure \ref{fig:expl_nf_across}.
Over short iteration horizons, we observe that CFR tends to outperform MMD.
However, for longer horizons, we find that MMD tends to catch up with CFR.
In game D, the qualitatively different behavior is likely to due the fact that the Nash equilibrium is a pure strategy, unlike the Nash equilibria of the first three games, which are mixed.

\begin{figure}[H]
    \centering
    \includegraphics[width=\linewidth]{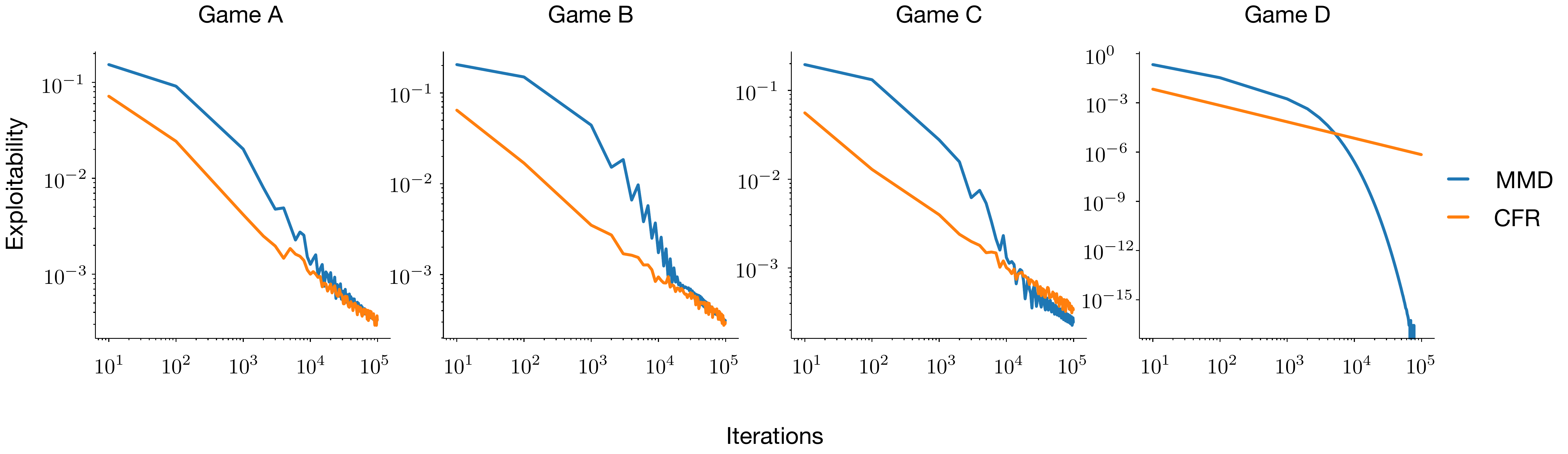}
    \caption{MMD and CFR applied to diplomacy stage games for computing Nash equilibria.}
    \label{fig:expl_nf_across}
\end{figure}

\subsection{Black Box Nash Convergence Diplomacy} \label{sec:bbnashdip}

For our black box Nash convergence experiments, we compare against the ``opponent on-policy'' variant of Monte Carlo CFR \citep{mccfr}.
In this variant, the two players alternate between an updating player and an on-policy player.
The updating player plays off-policy according to a policy that provides sufficiently large support to each action (in our Diplomacy experiments we used a uniform policy).
The advantage to this setup is that it guarantees that the updating player will receive bounded gradients, which is necessary for Monte Carlo CFR's convergence proof.
In contrast, we show results for an on-policy Monte Carlo variant of MMD, despite the fact that this causes unbounded gradients.
This is not a fair comparison in the sense that the same ``opponent on-policy'' setup is equally applicable to MMD and would keep the gradients bounded, whereas the ``on-policy'' version of Monte Carlo CFR does not converge.
We made this decision because the on-policy Monte Carlo variant of MMD is simpler and more elegant.
Nevertheless, we believe that the ``opponent on-policy'' version of MMD remains an interesting direction for future, and would very possibly yield faster convergence.

We again investigated three ways of estimating Q-values.
For our unbiased estimator with no baseline we used
\[
\eta_t = \frac{1}{5 \sqrt{t}}, \alpha_t = \frac{20}{\sqrt{t}}
\]
for games A, B, and C, and
\[
\eta_t = \frac{1}{\sqrt{t}}, \alpha_t = \frac{1}{\sqrt{t}}
\]
for game D.
For our unbiased estimator with baseline, we used
\[
\eta_t = \frac{1}{10 \sqrt{t}}, \alpha = \frac{10}{\sqrt{t}}
\]
for games A, B, and C, and
\[
\eta_t = \frac{1}{\sqrt{t}}, \alpha_t = \frac{1}{\sqrt{t}}
\]
for game D.
For our biased estimator, we used
\[
\eta_t = \frac{1}{5 \sqrt{t}}, \alpha_t = \frac{2}{\sqrt{t}}
\]
for games A, B, and C, and 
\[
\eta_t = \frac{1}{\sqrt{t}}, \alpha_t = \frac{1}{\sqrt{t}}
\]
for game D.
We show results in Figure \ref{fig:expl_sample_4across}, with averages across 30 runs and estimates of 95\% confidence intervals computed from bootstrapping.
\begin{figure}[H]
    \centering
    \includegraphics[width=\linewidth]{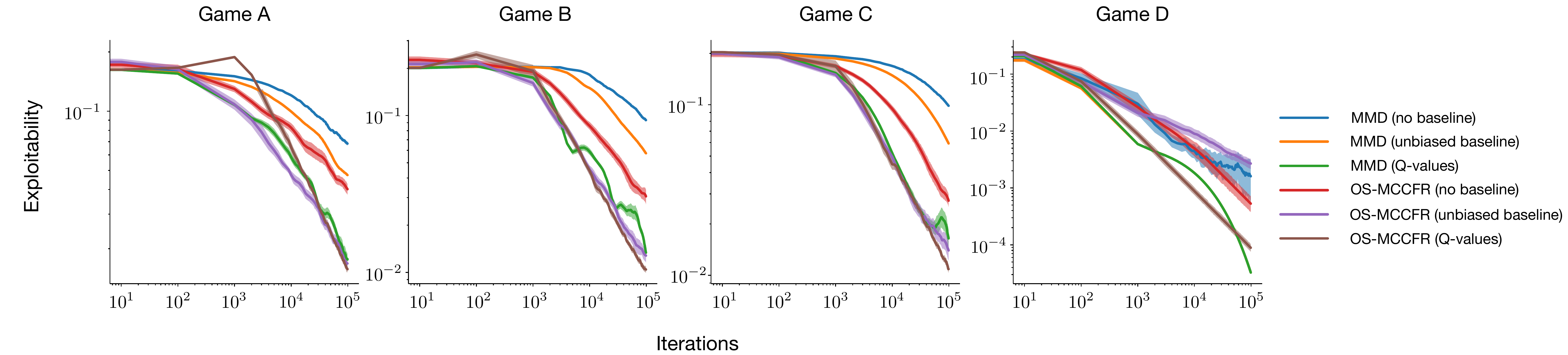}
    \caption{MMD and OS-MCCFR applied to diplomacy stage games for computing Nash equilibria with black box sampling.
    }
    \label{fig:expl_sample_nf_across}
\end{figure}

For MMD, we observe that biased Q-value estimates generally perform best, followed by an unbiased estimate with baseline, followed by an unbiased estimate without baseline, except on game D, where the unbiased baseline performs similarly to biased Q-value estimates.
We also find that CFR tends to follow this trend, though the difference between biased Q-value estimates and an unbiased baseline is less pronounced, except on game D, where the unbiased baseline performs poorly.
Between MMD and CFR, CFR tends to perform better on an estimator-to-estimator basis in games A, B and C, though MMD is relatively competitive with CFR under biased Q-value estimates.
For game D, we observe that this comparison is more favorable for MMD than the other games.

\subsection{Full Feedback Nash Convergence EFGs} \label{sec:ffnefg}

For our full feedback Nash convergence EFG experiments, we examined two variants of MMD.
The first, which we call unweighted MMD, corresponds with the version tested in the AQRE experiments
\[
\pi_{t+1}(h_i) \propto (\pi_{t}(h_i) e^{\eta_t q_{\pi_t}(h_i)})^{1/(1 + \eta_t \alpha_t)}. 
\]
The second, which we call weighted MMD, uses
\[
\pi_{t+1}(h_i) \propto (\pi_{t}(h_i) e^{\mathcal{P}^{\pi_t}(h_i) \eta_t q_{\pi_t}(h_i)})^{1/(1 + \mathcal{P}^{\pi_t}(h_i) \eta_t \alpha_t)}. 
\]
In other words, it weights the stepsize of the update by the probability of reaching that information state under the current policy.
We test this variant because it corresponds with a ``determinized'' version of black box sampling for temporally extended settings.

For unweighted MMD, we used
\[
\eta_t = \frac{1}{\sqrt{t}}, \alpha_t = \frac{1}{\sqrt{t}}
\]
for Kuhn Poker,
\[
\eta_t = \frac{1}{\sqrt{t}}, \alpha_t = \frac{1}{\sqrt{t}},
\]
for 2x2 Dark Hex,
\[
\eta_t = \frac{2}{\sqrt{t}}, \alpha_t = \frac{1}{\sqrt{t}},
\]
for 4-Sided Liar's dice, and
\[
\eta_t = \frac{1}{\sqrt{t}}, \alpha_t = \frac{5}{\sqrt{t}}
\]
for Leduc Poker.

For weighted MMD, we used
\[
\eta_t = \frac{2}{\sqrt{t}}, \alpha_t = \frac{2}{\sqrt{t}}
\]
for Kuhn Poker,
\[
\eta_t = \frac{1}{\sqrt{t}}, \alpha_t = \frac{1}{\sqrt{t}}
\]
for 2x2 Abrupt Dark Hex,
\[
\eta_t = \frac{100}{\sqrt{t}}, \alpha_t = \frac{2}{\sqrt{t}}
\]
for 4-Sided Liar's Dice, and
\[
\eta_t = \frac{500}{\sqrt{t}}, \alpha_t = \frac{10}{\sqrt{t}}
\]
for Leduc Poker.
Note that larger stepsize values are required for weighted MMD to achieve competitive performance because, otherwise, the reach probability weighting would make updates at the bottom of the tree very small.

\begin{figure}[H]
    \centering
    \includegraphics[width=\linewidth]{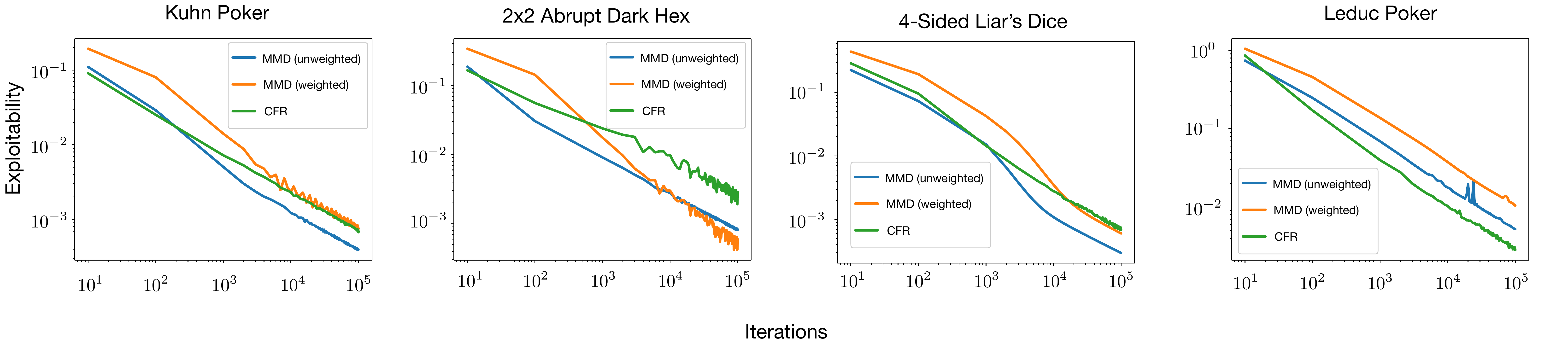}
    \caption{MMD (unweighted) and MMD (weighted by reach probability) compared against CFR across standard OpenSpiel games.
    }
    \label{fig:expl_full_4across}
\end{figure}

We show the results of our experiments in Figure \ref{fig:expl_full_4across}.
We find that both weighted MMD and unweighted MMD exhibit convergent behavior.
Furthermore, they converge at rates comparable with CFR on average across the games.

\subsection{Black Box Nash Convergence EFGs} \label{sec:bbnefg}

For our black box Nash convergence EFG experiments, we used the Monte Carlo CFR implementation in OpenSpiel~\citep{openspiel}, which uses an update policy with a $0.4$ weight on the current policy and a $0.6$ weight on the uniform policy.
For MMD, we used the sampling version of weighted MMD, meaning that the information states touched during the trajectory are updated with the full stepsize, while information not touched during the trajectory are not updated.
For Kuhn Poker, we used
\[
\eta_t = \frac{1}{10 \sqrt{t}}, \alpha_t = \frac{50}{\sqrt{t}}.
\]
For 2x2 Abrupt Dark Hex, we used
\[
\eta_t = \frac{7}{20 \sqrt{t}}, \alpha = \frac{50}{\sqrt{t}}.
\]
For 4-Sided Liar's Dice, we used
\[
\eta_t = \frac{2}{\sqrt{t}}, \alpha_t = \frac{200}{\sqrt{t}}.
\]
For Leduc Poker, we used
\[
\eta_t = \frac{1}{\sqrt{t}}, \alpha_t = \frac{300}{\sqrt{t}}.
\]

Noting again that the caveats about comparing on-policy MMD to opponent on-policy Monte Carlo CFR also apply here, we present the results in Figure \ref{fig:expl_full_4across}.
Results are averaged across 30 runs and shown with 95\% confidence intervals estimated from bootstrapping.
As in the normal-form experiments, we find that Monte Carlo CFR generally outperforms MMD for unbiased gradient estimates with no baseline.

\begin{figure}[H]
    \centering
    \includegraphics[width=\linewidth]{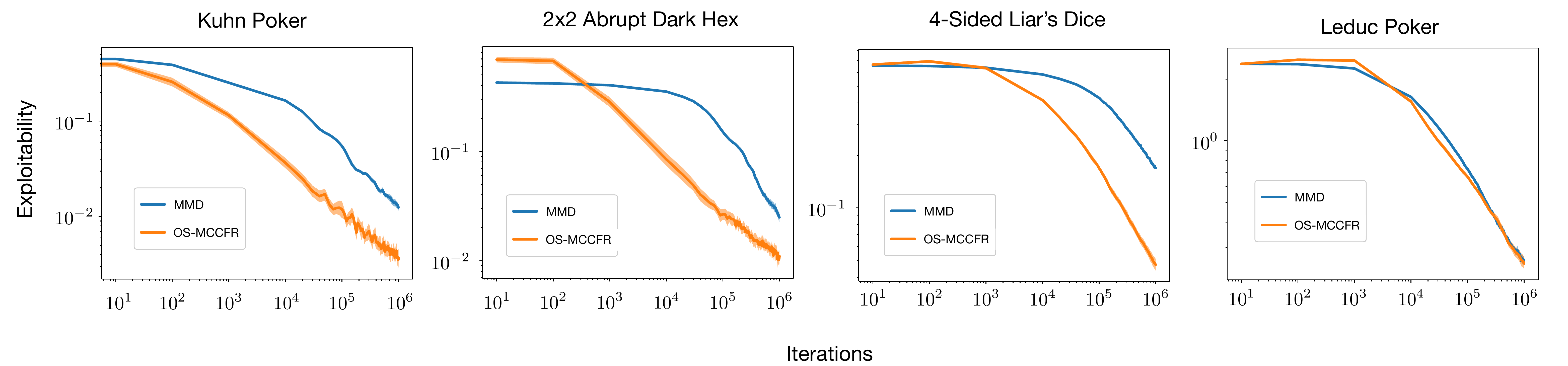}
    \caption{MMD compared against OS-MCCFR across standard OpenSpiel games with black box sampling.
    }
    \label{fig:expl_sample_4across}
\end{figure}

\subsection{Moving Magnet} \label{sec:mm}

Next, we investigate using a moving magnet, rather than an annealing temperature, to induce convergence to a Nash equilibrium.
In the moving magnet setup, updates take the form
\[
\pi_{t+1}(h_i) \propto [\pi_t(h_i)\rho_t(h_i)^{\eta\alpha} e^{\eta q_{\pi_t}(h_i)}]^{1 / (1+\eta\alpha)},
\]
where $\rho_t$ slowly trails behind $\pi_t$.
In our experiment, we used
\[\rho_{t+1}(h_i) \propto \rho_t(h_i)^{1 - \tilde{\eta}}\pi_{t+1}(h_i)^{\tilde{\eta}}.\]
For each game, we used $\alpha=1$, $\eta=0.1$, $\tilde{\eta}=0.05$.

\begin{figure}[H]
    \centering
    \includegraphics[width=\linewidth]{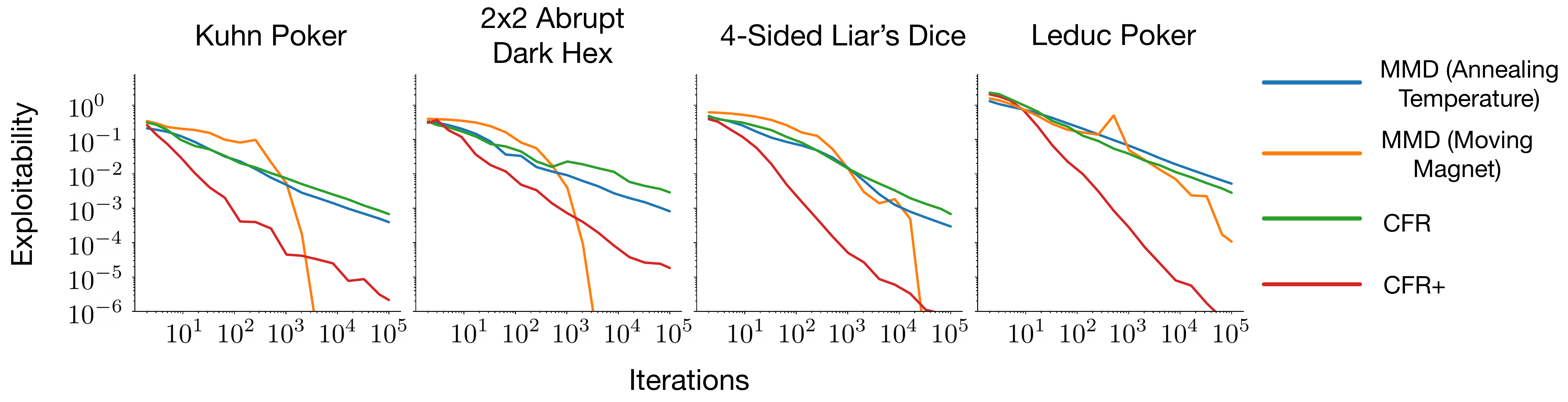}
    \caption{
    Comparing a moving magnet to an annealing temperature.
    }
    \label{fig:moving_magnet}
\end{figure}
We show the results in Figure \ref{fig:moving_magnet}, compared against CFR and MMD with an annealing temperature (with the same hyperparameters as before).
Encouragingly, we find that that moving the magnet behind the current iterate also appears to induce convergence.
Furthermore, convergence may occur at a much faster rate than that which is induced by annealing the temperature.

\subsection{MaxEnt and MiniMaxEnt Objectives} \label{sec:oo}

Next, we examine the convergence properties of other related objectives.
We consider two different types.
One involves an information state entropy bonus, wherein $\alpha \mathcal{H}(\pi_t(h_i))$ is added to the reward for player $i$ for reaching information state $h_i$.
This corresponds with a maximum entropy objective in reinforcement learning~\citep{maxent}; we call this objective MaxEnt.
The second involves simultaneously giving an information state entropy bonus (like MaxEnt), while also penalizing the player with the opponent's information state entropy.
This second approach can be viewed as a modification of the first approach that makes the game zero-sum.
It is the objective that was examined in \citet{PerolatMLOROBAB21}.
We call this objective MiniMaxEnt.

For each algorithm, we used 
\[
\eta_t = \frac{1}{\sqrt{t}}, \alpha_t = \frac{1}{\sqrt{t}}
\]
for Kuhn Poker,
\[
\eta_t = \frac{1}{\sqrt{t}}, \alpha_t = \frac{1}{\sqrt{t}},
\]
for 2x2 Dark Hex,
\[
\eta_t = \frac{2}{\sqrt{t}}, \alpha_t = \frac{1}{\sqrt{t}},
\]
for 4-Sided Liar's Dice, and
\[
\eta_t = \frac{1}{\sqrt{t}}, \alpha_t = \frac{5}{\sqrt{t}}
\]
for Leduc Poker.
\begin{figure}[H]
    \centering
    \includegraphics[width=\linewidth]{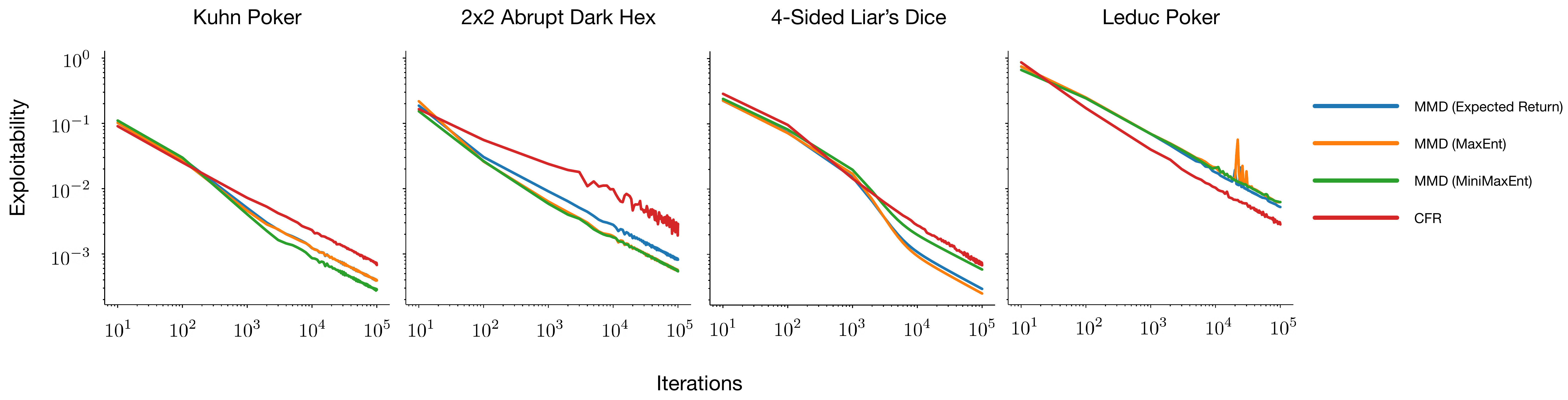}
    \caption{Comparing different objectives.
    }
    \label{fig:bonus}
\end{figure}
We show the results in Figure \ref{fig:bonus}.
We find that MMD exhibits convergent behavior with each of the objectives.

\subsection{Euclidean Mirror Map Over Logits} \label{sec:euc_logits} 

Next, we examine an instantiation of MMD that optimizes the logits using a Euclidean mirror map ($\psi = \frac{1}{2}\norm_2^2$), as discussed in Section \ref{sec:euc}, rather than reverse KL regularization.
The update rule for this approach is given by
\[z_{t+1}(h_i) = \text{arg max}_{z} \langle \nabla_w \mathbb{E}_{a \sim \text{softmax(w)}} q_{\pi_t}(h_i, a) |_{w=z_t(h_i)}, z \rangle - \frac{\alpha}{2} \lVert z - \zeta(h_i)\rVert^2 - \frac{1}{2\eta} \lVert z - z_t(h_i) \rVert^2\]
where $\pi_t(h_i) = \text{softmax}(z_t(h_i))$ and $\zeta$ is the magnet.
The closed form is
\[z_{t+1}(h_i) = \frac{z_t(h_i) + \eta \nabla_{w} \mathbb{E}_{a \sim \text{softmax}(w)} q_{\pi_t}(h_i, a)|_{w=z_t(h_i)} + \alpha \eta \zeta(h_i)}{1 + \alpha \eta}.\]
We test the convergence of Euclidean MMD for Leduc poker, using
\[\eta_t = \frac{2}{\sqrt{t}}, \alpha_t = \frac{1}{\sqrt{t}}.\]
We show the results in Figure \ref{fig:dgf}.
We find that Euclidean MMD also exhibits convergence behavior in Leduc poker.
However, convergence may be slower than the negative entropy variant.

\begin{figure}[H]
    \centering
    \includegraphics[width=.5\linewidth]{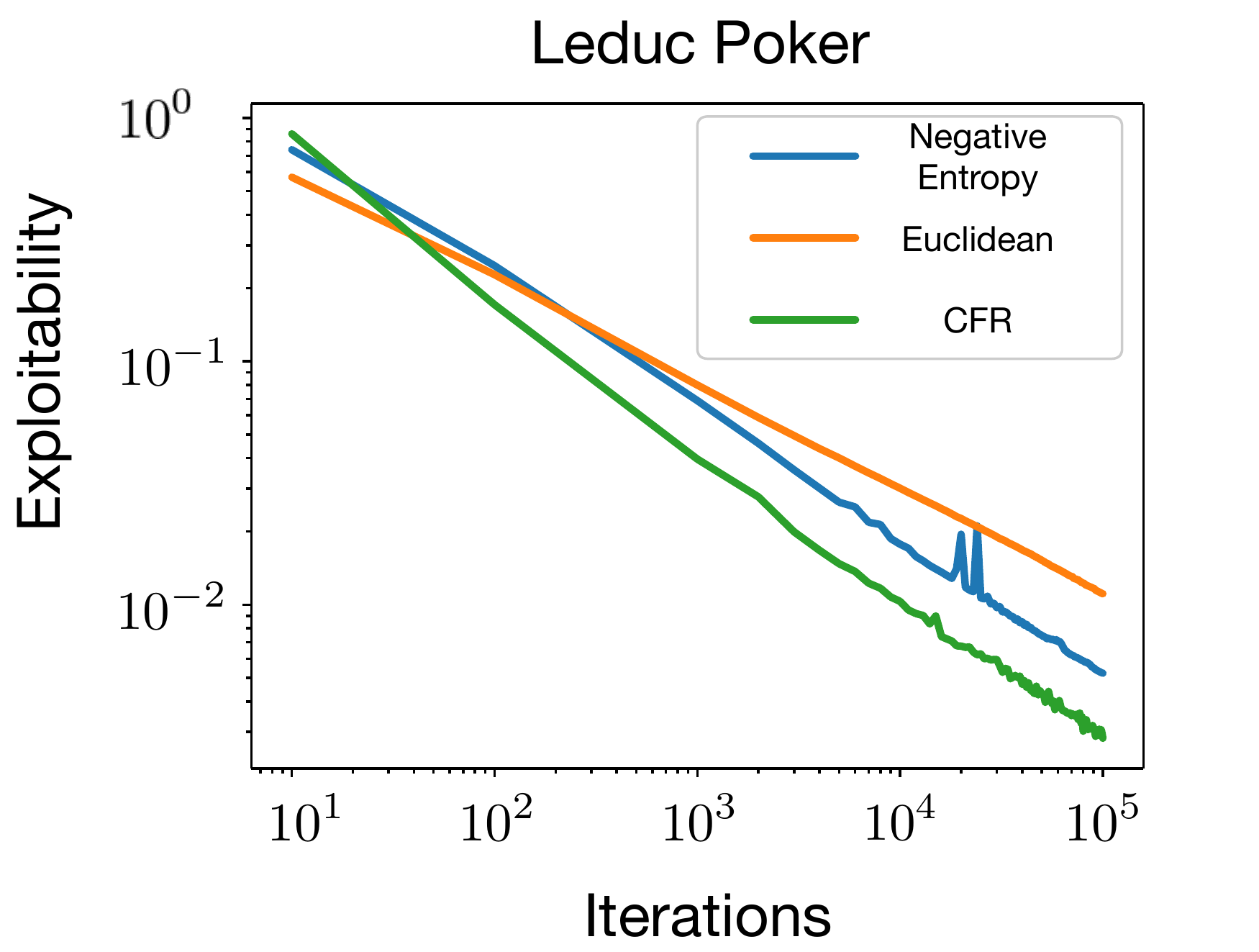}
    \caption{Comparing different mirror maps.
    }
    \label{fig:dgf}
\end{figure}

\subsection{MiniMaxEnt Exploitability with Fixed Parameters} \label{sec:mme}

Next, we examine using MMD for the purposes of computing MiniMaxEnt equilibria.
For each temperature $\alpha$, we used $\eta = \alpha / 10$.
We show the results in Figure \ref{fig:mme_conv}.
In the figure, convergence is measured in terms of exploitability in the entropy regularized game.
Similarly to our AQRE results, we find that, althought convergence is non-monotonic, the empirical rate appears roughly linear over long time scales.
This is the first empirical demonstration of convergence in MiniMaxEnt exploitability in EFGs. 

\begin{figure}[H]
    \centering
    \includegraphics[width=\linewidth]{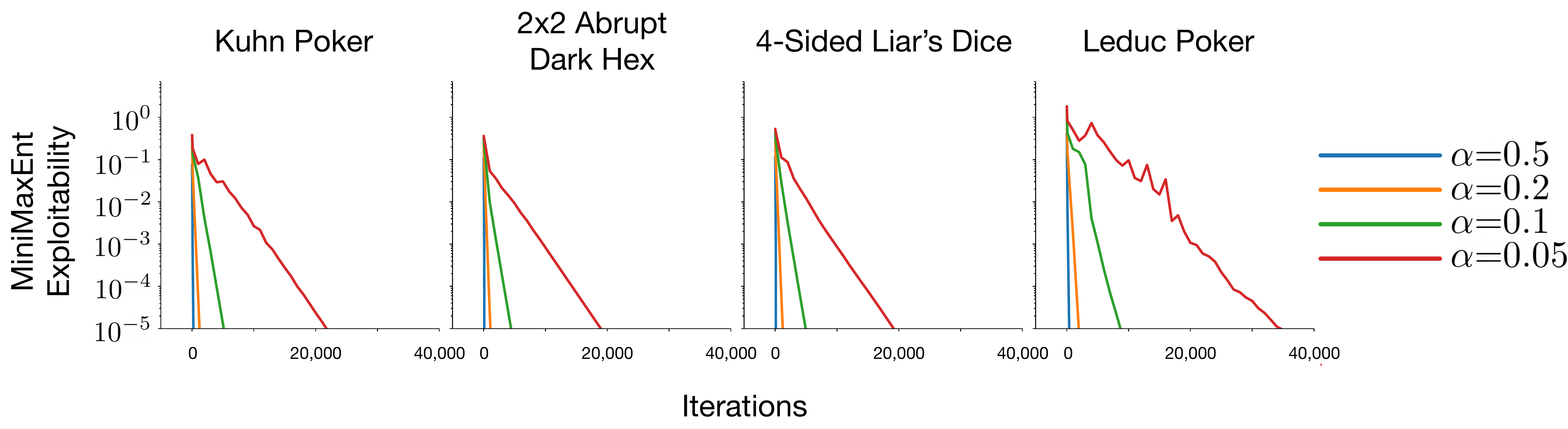}
    \caption{MiniMaxEnt exploitability with fixed parameter values.
    }
    \label{fig:mme_conv}
\end{figure}

\subsection{Exploitability with Fixed Parameters} \label{sec:fixed_params}

In our AQRE and MiniMaxEnt experiments, we observed that MMD exhibits convergent behavior to AQREs and MiniMaxEnt equilibria with fixed parameter values.
It follows that MMD can achieve relatively good exploitability values without using any scheduling.
We show these results in Figure \ref{fig:fixed_expl} and Figure \ref{fig:fixed_expl_mme}.

\begin{figure}[H]
    \centering
    \includegraphics[width=\linewidth]{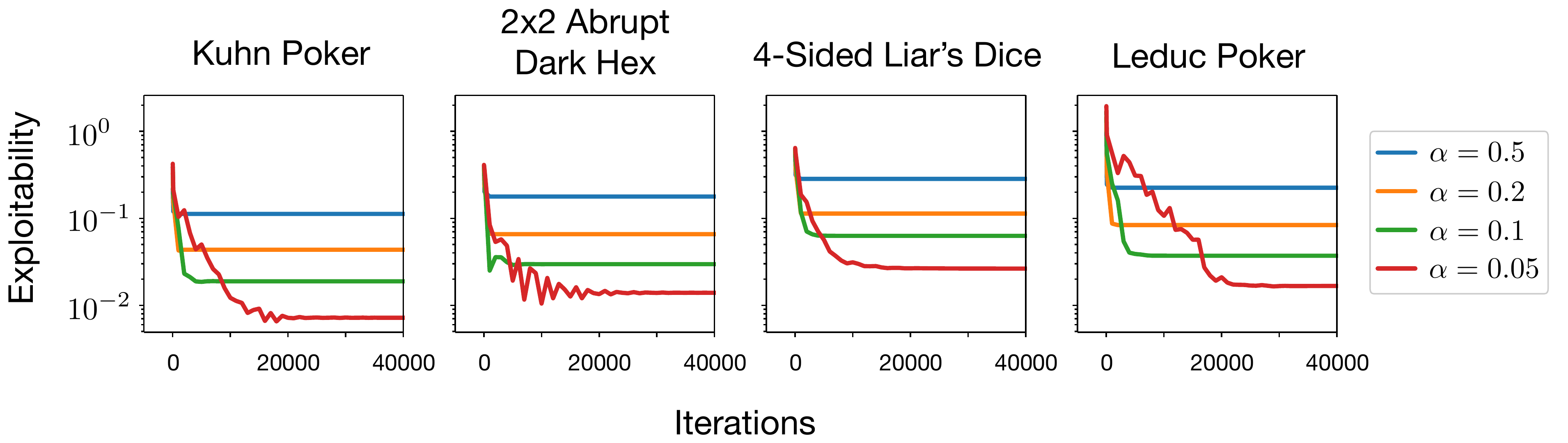}
    \caption{Convergence with fixed hyperparameter values under an expected return objective.
    }
    \label{fig:fixed_expl}
\end{figure}

\begin{figure}[H]
    \centering
    \includegraphics[width=\linewidth]{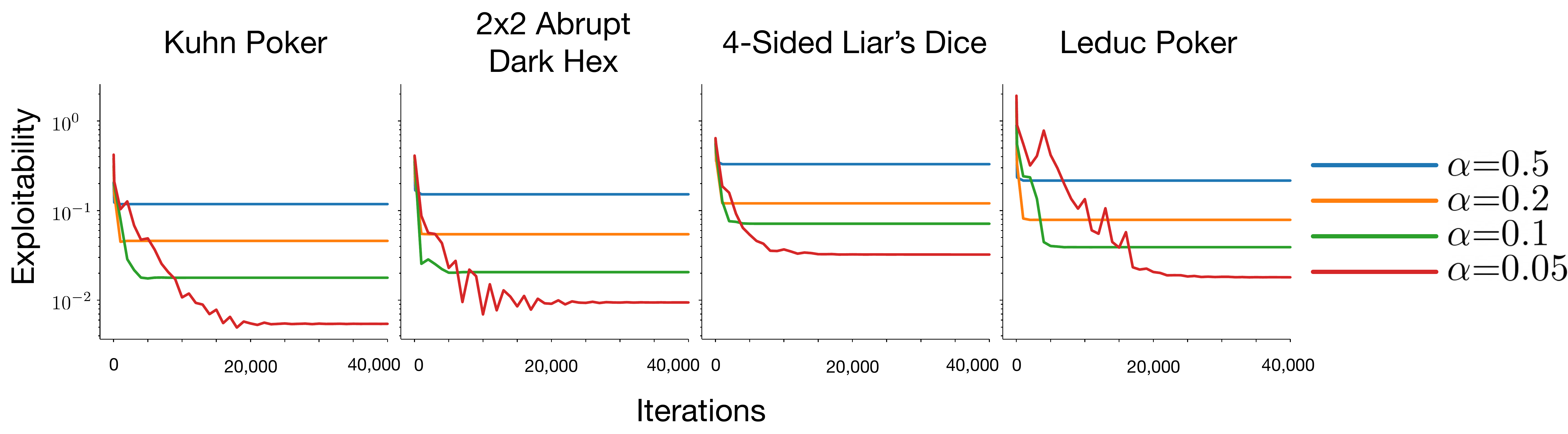}
    \caption{Convergence with fixed hyperparameter values under a MiniMaxEnt objective.
    }
    \label{fig:fixed_expl_mme}
\end{figure}

\section{Atari and Mujoco Experiments} \label{sec:drl}

For our single-agent deep RL experiments, we implemented MMD as a modification to \citeauthor{ppo_details}'s implementation of PPO.
For Atari, we added a reverse KL penalty with a coefficient $\nicefrac{1}{\eta}=0.001$; we kept the temperature as the default value set by \citet{ppo_details} ($\alpha=0.01$).
For Mujoco, we added a reverse KL penalty with a coefficient $\nicefrac{1}{\eta}=0.1$; we added an entropy bonus (\citet{ppo_details} do not use an entropy bonus) with a value of $\alpha=0.0001$.
Otherwise, for both Atari and Mujoco, the hyperparameters were set to those selected by \citet{ppo_details}.
We show the results again in Table \ref{tab:mujoco_app} for convenience.
The baseline results for PPO are copied directly from \citet{ppo_details}.
The exact numbers should be interpreted cautiously as they are averaged over only three runs, leaving high levels of uncertainty.
That said, the results in Table \ref{tab:mujoco_app} provide evidence that MMD can perform comparably to PPO.
But, even without looking at empirical results, the idea that a deep form of MMD can perform comparably to PPO should not be surprising, as MMD can be implemented in a way that resembles PPO in many aspects.

\begin{table}[H]
\centering
\caption{Atari and Mujoco results averaged over 3 runs, with standard errors.}
\label{tab:mujoco_app}
\begin{tabular}{c c c c| c c c} 
\hline
& Breakout & Pong & BeamRider & Hopper-v2 & Walker2d-v2 & HalfCheetah-v2\\
\hline
PPO & $409 \pm 31$ & $20.59 \pm 0.40$ & $2628 \pm 626$ & $2448 \pm 596$ & $3142 \pm 982$ & $2149 \pm 1166$ \\
MMD & $414 \pm 6$ & $21.0 \pm 0.00$ & $2549 \pm 524$ & $2898 \pm 544$ & $2215 \pm 840$ & $3638 \pm 782$\\
\hline
\end{tabular}
\end{table}

\section{Deep Multi-Agent Reinforcement Learning Experiments} \label{sec:dh}

For our deep multi-agent reinforcement learning experiments, we implemented MMD as a modification to PPO, as implemented by RLlib~\citep{rllib}.
This involved changing the adaptive forward KL regularization to a constant reverse KL regularization and setting $\eta_t, \alpha_t$ according the following schedule
\[
\eta_t = 0.05 \sqrt{\frac{10 \text{ million}}{t}}, \alpha_t = 0.05 \sqrt{\frac{10 \text{ million}}{t}},
\]
where $t$ is the number of time steps---not the number of episodes.
Otherwise, we used the default hyperparameters.
We also show results for PPO, using RLlib's default hyperparameters.
We ran these implementations in self-play using RLlib's OpenSpiel environment wrapper, modified to work with information states, rather than observations.
For NFSP~\citep{nfsp}, we used the same hyperparameters as those found in the NFSP Leduc example in the OpenSpiel codebase.
For the best response, we used the OpenSpiel's DQN best response code, without modifying any hyperparameters.
We ran the best response for 10 million time steps and evaluated all match-ups over 2000 games (with each agent being the first-moving player in 1000).

There are two caveats to consider in interpreting these experiments.
First, it is likely that RLlib's default PPO hyperparameters are generally stronger than the default hyperparameters for NFSP in the OpenSpiel.
In this respect, the results we present may be unfair to NFSP.
Second, RLlib's OpenSpiel wrapper does endow agents with knowledge about which actions are legal---instead, if an illegal action is selected, the agent is given a small penalty and a random legal action is executed.
In contrast, OpenSpiel's implementation of NFSP uses information about the legal actions to perform masking.
In other words, MMD and PPO face a harder version of the game than NFSP faces.
In this respect, the results we present are unfair to MMD and PPO.

We ran five seeds of each algorithm and checkpointed the parameters after both 1 million time steps and 10 million time steps.
Because these games are too large to compute exact exploitability, we show results for DQN \citep{mnih2015humanlevel} instances trained to best respond to each agent.
For our DQN implementation, we used OpenSpiel's rl\_response implementation.
We did not modify any hyperparameters and ran DQN for 10 million time steps in all cases.
We show results for the final DQN agent evaluated over 2000 games (1000 with DQN moving first and 1000 with DQN moving second) rounded to two decimal places.
We also include results for a bot that selects actions uniformly at random (Random) and a bot that determinisically selects the first legal action (Arbitrary). 
These results of this experiment are presented in Figure \ref{fig:appx_expl}.

\begin{figure}
    \centering
    \includegraphics[width=\linewidth]{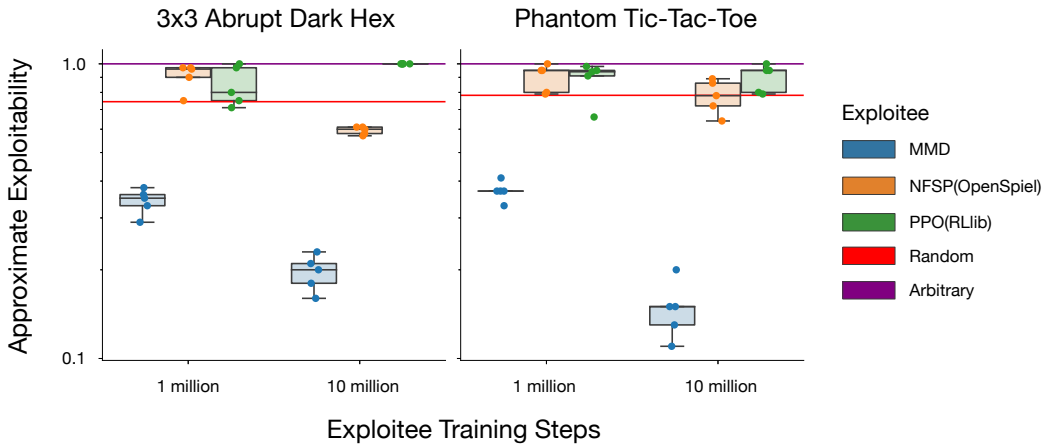}  
    \caption{Approximate exploitability experiments.}
    \label{fig:appx_expl}
\end{figure}

In the games, the return for a win is 1 and the return for a loss is -1; in Phantom Tic-Tac-Toe, it is also possible to tie, in which case the return is 0.
Thus, an approximate exploitability of 1 would mean that DQN defeats the agent 100\% of the time, whereas an approximate exploitability of 0 would mean that DQN ties in expected value against the agent.
As might be expected, we observe that playing an arbitrary deterministic policy (purple line) is perfectly exploitable by DQN in both games.
In contrast, while playing uniformly at random (red line) is also highly exploitable, it is less so because of the randomization.

Among the three learning agents, one trend is that PPO with RLlib's default hyperparameters does not appear to decrease exploitability over time.
This is not necessarily surprising, as RL algorithms do not generally converge in two-player zero-sum games.
On the other hand, both NFSP with OpenSpiel hyperparameters and MMD exhibit clear downward trends over time.
Again, this is not necessarily surprising, as both MMD and NFSP are designed with exploitability in mind.

Among the learning agents, in terms of raw value, MMD exhibits substantially stronger performance than the baselines.
Indeed, even after 1 million time steps, every seed of MMD is less exploitable than any seed of the baselines after either 1 million or 10 million time steps.
In contrast, the learning agent baselines do not substantially outperform uniform random play in Phantom Tic-Tac-Toe and only NFSP after 10 million time steps substantially outperforms uniform random play in 3x3 Abrupt Dark Hex.

We also show results for head-to-head matchups between the agents in Figure \ref{fig:h2h}.
For all learning agents, we use the 5 seeds that were trained for 10 million time steps.
For matchups between learning agents, we ran each seed of each agent against each other (for a total of 25 matchups) for 2000 games (1000 with each agent moving first) and rounded to the nearest two decimal places.
For matchups between learning agents and bots, we ran each seed of the learning agent against the bot (for a total of 5 matchups) for 2000 games (1000 with each agent moving first) and rounded to the nearest two decimal places.
We show the results in Figure \ref{fig:h2h}.

Each learning algorithm's results are denoted by an x-axis label; the hue denotes the opponent---not the agent being evaluated.
Because the games are zero-sum matchup results are negations of each other.
For example, in the 3x3 Abrupt Dark Hex column, the orange boxplot (i.e., opponent=NFSP) above the MMD label is the negation of the blue boxplot (i.e., opponent=MMD) above the NFSP label.

One observation from the results is that MMD outperforms the baselines and the bots uniformly across seeds.
This is encouraging in the sense that having low approximate exploitability appears to lead to strong performance in head-to-head matchups.
Like MMD, the NFSP seeds win head-to-head matchups against the bots.
On the other hand, PPO exhibits much higher variance---it tends to lose against the bot selecting the first legal action in Phantom Tic-Tac-Toe, but tends to defeat it by the largest margin in 3x3 Abrupt Dark Hex.

\begin{figure}[H]
    \centering
    \includegraphics[width=\linewidth]{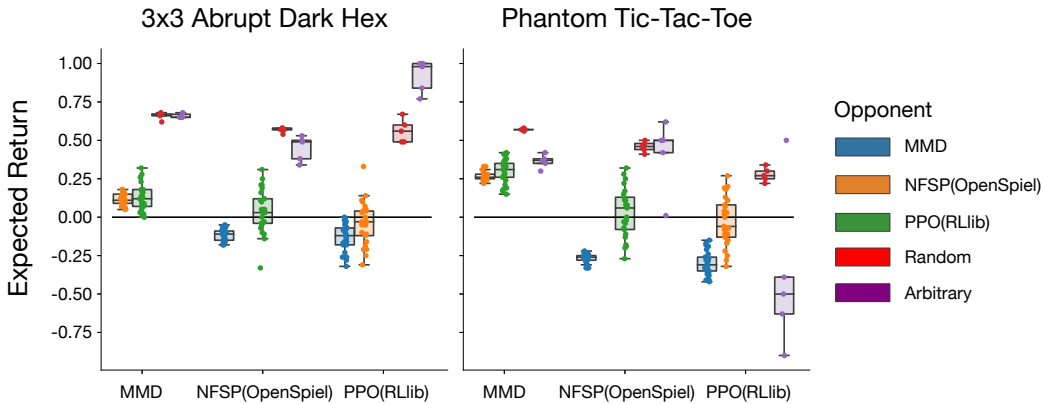}  
    \caption{Head-to-head experiments.}
    \label{fig:h2h}
\end{figure}
\subsection{Results Averaged Across Seeds}

\begin{table}[H]
\begin{center}
\caption{Approximate exploitability and standard error in units of $10^{-2}$ over 5 seeds.} \label{tab:expl_app}
\begin{tabular}{ c c c c c c} 
\hline
& 3x3 Abrupt Dark Hex & Phantom Tic-Tac-Toe\\
\hline
First Legal Action Taker & $100 \pm 0$ & $100 \pm 0$\\
Uniform Random & $74 \pm 1$ & $78 \pm 0$\\
PPO(1M steps) & $85 \pm 6$ & $89 \pm 6$\\
PPO(10M steps) & $100 \pm 0$ & $90 \pm 4$\\
NFSP(1M steps) & $91 \pm 4$ & $95 \pm 1$\\
NFSP(10M steps) & $59 \pm 1$ & $78 \pm 5$\\
MMD(1M steps) & $34 \pm 2$ & $37 \pm 1$\\
MMD(10M steps) & $20 \pm 1$ & $15 \pm 1$\\
\hline
\end{tabular}
\end{center}
\end{table}

\begin{table}[H]
\begin{center}
\caption{Head-to-head expected return in 3x3 Abrupt Dark Hex for column player and standard error in units of $10^{-2}$.} \label{tab:h2h_dh_app}
\begin{tabular}{ c | c c c c c} 
\hline
& Arbitrary & Random & PPO & NFSP & MMD\\
\hline
Arbitrary & $0$ & $0 \pm 2$ &  $92 \pm 5$ & $45 \pm 4$ & $66 \pm 1$\\
Random & $0 \pm 2$ & $0$ & $56 \pm 3$ & $57 \pm 1$  & $66 \pm 1$\\
PPO & $-92 \pm 5$ & $-56 \pm 3$ & $0$ & $4 \pm 3$  & $13 \pm 2$\\
NFSP & $-47 \pm 9$ & $-57 \pm 1$ & $-4 \pm 3$ & $0$ & $12 \pm 1$\\
MMD & $-66 \pm 1$ & $-66 \pm 1$ &  $-13 \pm 2$ & $-12 \pm 1$ & $0$\\
\hline
\end{tabular}
\end{center}
\end{table}

\begin{table}[H]
\begin{center}
\caption{Head-to-head expected return in Phantom Tic-Tac-Toe for column player and standard error in units of $10^{-2}$.} \label{tab:h2h_dh_app}
\begin{tabular}{ c | c c c c c} 
\hline
& Arbitrary & Random & PPO & NFSP & MMD\\
\hline
Arbitrary & 0 & $-22 \pm 2$ &  $-28 \pm 24$ & $41 \pm 10$ & $36 \pm 2$\\
Random & $-22 \pm 2$ & 0  & $28 \pm 5$ & $46 \pm 2$ & $57 \pm 0$\\
PPO & $28 \pm 24$ & $-28 \pm 5$  & 0 & $3 \pm 3$ & $30 \pm 2$\\
NFSP & $-41 \pm 10$ & $-46 \pm 2$  & $-3 \pm 3$ & 0 & $27 \pm 1$\\
MMD & $-36 \pm 2$ & $-57 \pm 0$  & $-30 \pm 2$ & $-27 \pm 1$ & 0\\
\hline
\end{tabular}
\end{center}
\end{table}

\section{Additional Related Work} \label{sec:add_rw}

\textbf{Average Policy Deep Reinforcement Learning for Two-Player Zero-Sum Games} One class of deep reinforcement learning methods for two-player zero-sum games, which includes NFSP~\citep{nfsp} and PSRO~\citep{psro}, scales oracle-based approaches \citep{fp,double_oracle} by using single-agent reinforcement learning as a subroutine to compute approximate best responses.
While this class of methods is very scalable, it can require computing many best responses \citep{mcaleer2021xdo}, making it very slow in some cases.
MMD differs from this class in that it does not use a best response subroutine and in that it does not use averages over historical policies.
Another class of methods, which includes deep CFR \citep{Brown2019DeepCR} and double neural CFR \citep{Li2020Double}, is motivated by scaling CFR \citep{cfr}---the dominant paradigm in tabular settings---to function approximation.
Unfortunately, the sampling variant of CFR \citep{mccfr} requires importance sampling across trajectories, making straightforward extensions to stochasticity \citep{dream} difficult to apply to games with long trajectories, though more recent extensions may make progress toward resolving this issue \citep{armac}.
MMD differs from this class both in that it neither requires policy averaging nor importance sampling over trajectories.
\section{Relationship to KL-PPO and MDPO} \label{sec:symb}

On the single-agent deep reinforcement learning side, MMD most closely resembles KL-PPO~\citep{ppo} and MDPO~\citep{mdpo,hsuppo}.\footnote{\citet{hsuppo} investigate MDPO under the name PPO reverse KL.}
KL-PPO uses the policy loss function
\[
\mathbb{E}_t \left[\frac{\pi(a_t \mid s_t)}{\pi_{\text{old}}(a_t \mid s_t)} \hat{A}_t  + \alpha \mathcal{H}(\pi(s_t)) - \beta \text{KL}(\pi_{\text{old}}(s_t), \pi(s_t))\right],
\]
where $\hat{A}_t$ is an advantage function (a learned estimate of $q_{\pi_t}(s_t, a_t) - v_{\pi_t}(s_t)$).
In expectation, the first term acts as $\langle \pi_t(s_t), q_{\pi_t}(s_t) \rangle$, which is the first term of MMD's loss function.
The second term is the same entropy bonus as exists in MMD's loss function, using a uniform magnet with a negative entropy mirror map.
However, unlike MMD, KL-PPO's KL regularization goes forward $\text{KL}(\pi_{\text{old}}(s_t), \pi(s_t))$.
In contrast, MMD's KL regularization goes backward $\text{KL}(\pi(s_t), \pi_{\text{old}}(s_t))$.
\citet{hsuppo} investigated modifying KL-PPO to use reverse KL regularization instead of forward KL in Mujoco and found that the two yielded similar performance.

MDPO uses the policy loss function
\[
\mathbb{E}_t \left[\frac{\pi(a_t \mid s_t)}{\pi_{\text{old}}(a_t \mid s_t)} \hat{A}_t - \beta \text{KL}(\pi(s_t), \pi_{\text{old}}(s_t))\right],
\]
where $\hat{A}_t$ is the approximate advantage function (a learned estimate of $q_{\pi_t}(s_t, a_t) - v_{\pi_t}(s_t)$).
In the context of a negative entropy mirror map and a uniform magnet, MDPO differs from MMD in that it does not necessarily include an entropy regularization term $\alpha \mathcal{H}(\pi(s_t))$; in the case that it does include such an entropy term, MDPO and MMD coincide.

MMD with a negative entropy mirror map and a uniform magnet takes the form
\[
\mathbb{E}_t \left[\frac{\pi(a_t \mid s_t)}{\pi_{\text{old}}(a_t \mid s_t)} \hat{A}_t  + \alpha \mathcal{H}(\pi(s_t)) - \beta \text{KL}(\pi(s_t), \pi_{\text{old}}(s_t))\right],
\]
where $\beta$ acts as an inverse stepsize.

\section{Future Work} \label{sec:fut}

Our work opens up a multitude of important directions for future work.
On the theoretical side, these directions include pursuing results for black box sampling, behavioral-form convergence, convergence with annealing regularization, convergence with moving magnets \citep{catalyst,katyusha}, and relaxing the smoothness assumption to relative smoothness~\citep{lu2018relatively,bauschke2017descent} while removing strong convexity assumptions.
On the empirical side, these directions include constructing an adaptive mechanism for adapting the stepsize, temperature, and magnet \citep{agent57,gdi}, pushing the limits of MMD as a deep RL algorithm for large scale 2p0s games, and investigating MMD as an optimizer for differentiable games \citep{gan,differentiable_game}.
\end{document}